\pdfoutput=1
\documentclass[mnsc]{informs3}
\pdfoutput=1

 
\OneAndAHalfSpacedXI 


\usepackage{graphicx}
\usepackage{epstopdf}
\usepackage{subfigure,caption}
\usepackage{microtype}
\usepackage{graphicx}
\usepackage{subfigure}
\usepackage{booktabs} 

\usepackage{amsfonts}
\usepackage{xcolor}
\let\proof\relax
\let\endproof\relax
\usepackage{amsthm}
\usepackage{amsmath}
\usepackage{amssymb}
\usepackage{soul}
\usepackage{comment}
\usepackage{enumitem}
\usepackage{bbm}
\usepackage{graphicx}
\usepackage{float}
\usepackage{subfigure}
\usepackage{mathtools}
\usepackage[ruled,linesnumbered,noend,noline]{algorithm2e}

\newtheorem{definition}{Definition}

\newtheorem{theorem}{Theorem}

\newtheorem{lemma}{Lemma}
\newtheorem{proposition}{Proposition}
\newtheorem{remark}{Remark}

\newenvironment{proofsketch}{\proof}{\endproof}
\theoremstyle{remark}

\newcommand{\ra}{\rightarrow}
\newcommand{\mb}[1]{\mathbb{#1}}
\newcommand{\mbf}[1]{\mathbf{#1}}
\newcommand{\mc}[1]{\mathcal{#1}}

\newcommand{\mbp}{\mathbb{P}}

\newcommand{\rr}{\mathbb{R}}
\newcommand{\ee}{\mathbb{E}}
\newcommand{\pp}{\mathbb{P}}

\newcommand{\nn}{\mathbb{N}}
\renewcommand{\epsilon}{\varepsilon}

\newcommand{\abs}[1]{\left| #1 \right|}
\newcommand{\norm}[1]{\left\| #1 \right\|}

\newcommand{\ceil}[1]{\left\lceil #1 \right\rceil}
\newcommand{\floor}[1]{\left\lfloor #1 \right\rfloor}

\newcommand{\indicator}{\mathbbm{1}}

\newcommand\defeq{\stackrel{\mathclap{\normalfont\tiny\mbox{def}}}{=}}

\newcommand{\wmedit}[1]{{#1}}

\usepackage{hyperref}

\usepackage{multirow}
\usepackage{colortbl}
\usepackage{booktabs,multirow,array,caption}
\usepackage{pbox}
\usepackage{pifont}
\usepackage{tikz}
\usepackage{xcolor}
\usepackage{graphicx}

\newcommand*\circled[1]{\tikz[baseline=(char.base)]{\node[shape=circle,draw,inner sep=1pt] (char) {#1};}}
\newcommand{\cmark}{\ding{51}}%
\newcommand{\xmark}{\ding{55}}%
\DeclareSymbolFont{extraup}{U}{zavm}{m}{n}
\DeclareMathSymbol{\varheartsuit}{\mathalpha}{extraup}{86}
\DeclareMathSymbol{\vardiamondsuit}{\mathalpha}{extraup}{87} 
\newcommand{\vertrule}[1][.7ex]{\rule{.6pt}{#1}}
\newcommand{\si}{s_{\vertrule{}}}
\renewcommand{\so}{s_{\circ}}
\renewcommand{\ni}{N_{\vertrule{}}}
\newcommand{\no}{N_{\circ}}
\newcommand{\nos}{N_{\circ}^{\star}}
\newcommand{\re}{{\text{ref}}}
	
\definecolor{dark-gray}{gray}{0.45}
\newcommand{\algedit}[1]{\textcolor{gray}{#1}}

\usepackage{array}
\newcolumntype{M}[1]{>{\centering\arraybackslash}m{#1}}




\usepackage{natbib}
 \bibpunct[, ]{(}{)}{,}{a}{}{,}%
\usepackage{bm}
\usepackage{comment}


\ECRepeatTheorems

\EquationsNumberedThrough    


\allowdisplaybreaks
\begin{document}


\RUNAUTHOR{}

\RUNTITLE{Model-Free Non-Stationary RL: Near-Optimal Regret and Applications}

\TITLE{\Large  Model-Free Non-Stationary RL: Near-Optimal Regret and Applications in Multi-Agent RL and Inventory Control}

\ARTICLEAUTHORS{
\AUTHOR{Weichao Mao}
\AFF{Department of Electrical and Computer Engineering \& Coordinated Science Laboratory, University of Illinois Urbana-Champaign, Urbana, IL 61801, \EMAIL{weichao2@illinois.edu}}

\AUTHOR{Kaiqing Zhang}
\AFF{Laboratory for Information \& Decision Systems, Massachusetts Institute of Technology, Cambridge, MA 02139, \EMAIL{kaiqing@mit.edu}}

\AUTHOR{Ruihao Zhu}
\AFF{Cornell SC Johnson College of Business, Ithaca, NY 14853, \EMAIL{ruihao.zhu@cornell.edu}}

\AUTHOR{David Simchi-Levi}
\AFF{Institute for Data, Systems, and Society, Department of Civil and Environmental Engineering, Operations Research Center, Massachusetts Institute of Technology, Cambridge, MA 02139, \EMAIL{dslevi@mit.edu}}
\AUTHOR{Tamer Ba\c{s}ar}
\AFF{Department of Electrical and Computer Engineering \& Coordinated Science Laboratory, University of Illinois Urbana-Champaign, Urbana, IL 61801, \EMAIL{basar1@illinois.edu}} 

} 

\ABSTRACT{%
We consider model-free reinforcement learning (RL) in non-stationary Markov decision processes. Both the reward functions and the state transition functions are allowed to vary arbitrarily over time as long as their cumulative variations do not exceed certain variation budgets. We propose Restarted Q-Learning with Upper Confidence Bounds  (RestartQ-UCB), the first model-free algorithm for non-stationary RL, and show that it outperforms existing solutions in terms of dynamic regret. Specifically, RestartQ-UCB with Freedman-type bonus terms achieves a dynamic regret bound of $\widetilde{O}(S^{\frac{1}{3}} A^{\frac{1}{3}} \Delta^{\frac{1}{3}} H T^{\frac{2}{3}})$, where $S$ and $A$ are the numbers of states and actions, respectively, $\Delta>0$ is the variation budget, $H$ is the number of time steps per episode, and $T$ is the total number of time steps. We further present a parameter-free algorithm named Double-Restart Q-UCB that does not require prior knowledge of the variation budget. We show that our algorithms are \emph{nearly optimal} by establishing an information-theoretical lower bound of $\Omega(S^{\frac{1}{3}} A^{\frac{1}{3}} \Delta^{\frac{1}{3}} H^{\frac{2}{3}} T^{\frac{2}{3}})$, the first lower bound in non-stationary RL. Numerical experiments validate the advantages of RestartQ-UCB in terms of both cumulative rewards and computational efficiency. We demonstrate the power of our results in examples of multi-agent RL and inventory control across related products. 
}%


\KEYWORDS{reinforcement learning, data-driven decision making, non-stationarity, multi-agent learning, inventory control} 

\maketitle

%

\newpage
\section{Introduction}\label{sec:intro}
Reinforcement learning (RL) focuses on the class of problems where an agent maximizes its cumulative reward through sequential interactions with an initially unknown but fixed environment, usually modeled by a Markov Decision Process (MDP). 
In classical RL problems, the state transition functions and the reward functions are assumed to be time-invariant, i.e., stationary. However, stationary models cannot capture the time-varying environments in a wide range of sequential decision-making problems, such as online advertisement auctions~\citep{cai2017real,lu2019reinforcement}, dynamic pricing~\citep{chawla2016simple}, traffic management~\citep{chen2020toward}, healthcare operations~\citep{shortreed2011informing}, multi-agent RL~\citep{littman1994markov}, and inventory control~\citep{agrawal2019learning,cheung2020reinforcement}. Among the many intriguing applications, in the following, we specifically elaborate on two research areas, namely multi-agent RL and inventory control, that can significantly benefit from progresses on non-stationary RL. In Appendix~\ref{app:application}, we further discuss the potential application scenarios of non-stationary RL in other important areas, such as sequential transfer RL and multi-task RL. 

\begin{itemize}[leftmargin=*]
	\item \textbf{Multi-agent RL:} In multi-agent RL, a set of agents either collaborate or compete by taking actions in a shared environment. This commonly occurs in many operational scenarios when multiple decision-makers interact with each other, such as ads auctions \citep{BalseiroG19} and dynamic pricing \citep{BirgeCKW21}. In such scenarios, each agent faces a non-stationary environment, especially when the agents learn and update their policies simultaneously, as the actions of the other agents can alter the environment. We discuss this connection with more details in Section~\ref{sec:application} through a concrete example, where we show that our non-stationary RL solution can be readily applied to a multi-agent RL problem against a slowly-changing opponent. 
	\item \textbf{Inventory control across related but different products}: In conventional inventory control \citep{HuhR09,ZhangCS18,agrawal2019learning}, the retailer typically focuses on managing the stock level of a single product. Nevertheless, the sequential launch of new related products (e.g., the line of iPhone) provides us with the opportunity to leverage experience from past products to inform inventory management for future products. In Section~\ref{sec:application_inventory}, we discuss how one can apply our non-stationary RL solutions to guide the inventory management not only for a single product but also across a sequence of related, but \emph{different} products. 
\end{itemize}

RL in a non-stationary MDP is highly non-trivial due to the following challenges. First, similar to stationary RL, the agent faces the \emph{exploration vs. exploitation} dilemma: it needs to explore the uncertain environment efficiently while maximizing its rewards along the way. In~\citet{jaksch2010near}, the authors proposed to leverage the ``optimism in the face of uncertain'' principle to guide exploration.
Another challenge, which is unique to non-stationary RL, is the trade-off between \emph{remembering and forgetting}.  On the one hand, since the underlying MDP varies over time, data samples collected in prior interactions can become obsolete. In fact, it has been shown that a standard stationary RL algorithm might incur a linear regret if the non-stationarity is not handled properly~\citep{ortner2020variational}. On the other hand, the agent needs to extract a sufficient amount of information from historical data to inform future decision-making.

\begin{table*}[t]\centering 
	\def\arraystretch{1.6}
	\begin{tabular}{|c|c|c|c|c|}
		\hline
		Setting           & Algorithm & Regret & \pbox{3cm}{Model-\\Free} & Comment \\ \hline
		\multirow{5}{*}{\pbox{3cm}{Undis-\\counted}}  
		&     \citet{jaksch2010near}      &    $\widetilde{O}(S^{\textcolor{white}{\frac{1}{1}}}A^{\frac{1}{2}}L^{\frac{1}{3}}D^{\textcolor{white}{\frac{1}{1}}}T^{\frac{2}{3}})$  & \xmark &    only abrupt changes     \\
		&     \citet{gajane2018sliding}      &    $\widetilde{O}(S^{\frac{2}{3}}A^{\frac{1}{3}}L^{\frac{1}{3}}D^{\frac{2}{3}}T^{\frac{2}{3}})$    & \xmark &   only abrupt changes    \\
		&    \citet{ortner2020variational}       &   $\widetilde{O}(S^{\textcolor{white}{\frac{2}{3}}}A^{\frac{1}{2}}\Delta^{\frac{1}{3}}D^{\textcolor{white}{\frac{1}{1}}}T^{\frac{2}{3}})$     &  \xmark &   requires local  variations   \\
		&    \citet{cheung2020reinforcement}       &   $\widetilde{O}(S^{\frac{2}{3}}A^{\frac{1}{2}}\Delta^{\frac{1}{4}}D^{\textcolor{white}{\frac{1}{1}}}T^{\frac{3}{4}})$     & \xmark &   does not require $\Delta$    \\ 
		& \cellcolor{gray!40} Lower bound & $\Omega ( S^{\frac{1}{3}}A^{\frac{1}{3}} \Delta^{\frac{1}{3}} D^{\frac{2}{3}} T^{\frac{2}{3}} )$ &  &  \\
		\hline
		\multirow{4}{*}{Episodic}  
		&      \citet{domingues2020kernel}     &   $\widetilde{O}(S^{\textcolor{white}{\frac{1}{1}}}A^{\frac{1}{2}}\Delta^{\frac{1}{3}}H^{\frac{4}{3}}T^{\frac{2}{3}})$     &  \xmark    &   also metric spaces\\
		&   \cellcolor{gray!40} RestartQ-UCB &   $\widetilde{O}( S^{\frac{1}{3}}A^{\frac{1}{3}} \Delta^{\frac{1}{3}} H^{\textcolor{white}{\frac{1}{1}}}T^{\frac{2}{3}} )$     &  \cmark    &   \\
		&   \cellcolor{gray!40} Double-Restart Q-UCB &   $\widetilde{O}( S^{\frac{1}{3}}A^{\frac{1}{3}} \Delta^{\frac{1}{3}} H T^{\frac{2}{3}} \! +\! H^{\frac{3}{4}}T^{\frac{3}{4}} )$     &  \cmark    &  does not require $\Delta$ \\
		&   \cellcolor{gray!40} Lower bound   &   $\Omega( S^{\frac{1}{3}}A^{\frac{1}{3}} \Delta^{\frac{1}{3}} H^{\frac{2}{3}} T^{\frac{2}{3}} )$     &      &   \\ \hline\multirow{2}{*}{\pbox{3cm}{\wmedit{Linear}\\\wmedit{MDPs}}}  
		&      \citet{zhou2020nonstationary}     &   $\widetilde{O}(d^{\frac{4}{3}}\Delta^{\frac{1}{3}}H^{\frac{4}{3}}T^{\frac{2}{3}})$     &  \cmark    &   \\
		&   \citet{touati2020efficient}  &   $\widetilde{O}(d^{\frac{5}{4}}\Delta^{\frac{1}{4}}H^{\frac{5}{4}}T^{\frac{3}{4}})$    &   \cmark   &   \\ \hline
	\end{tabular}
	\caption{Dynamic regret comparisons for RL in non-stationary MDPs. $S$ and $A$ are the numbers of states and actions, $L$ is the number of abrupt changes, $D$ is the maximum diameter, \wmedit{$d$ is the dimension of the feature space for linear MDPs,} $H$ is the number of steps per episode, and $T$ is the total number of steps. \wmedit{All upper bounds listed in the table are high-probability results that hold with probability at least $1-\delta$ for some $\delta\in(0,1)$, and $\widetilde O(\cdot)$ suppresses logarithmic dependence on $S,A,T$ and $\frac{1}{\delta}$.} Gray cells denote results from this paper.  }\label{table:1}
\end{table*}

To resolve the aforementioned challenges, \citet{ortner2020variational} and \citet{cheung2020reinforcement} have proposed algorithms to guide learning in non-stationary MDPs. Although both model-based and model-free algorithms have been proposed for stationary RL, existing solutions for non-stationary RL are often built upon model-based methods. Nevertheless, it has been observed that model-based solutions often suffer from the following shortcomings:

\textbullet\ \textbf{Time- and space-inefficiency:} Model-based methods are in general more time- and space-consuming, and are less compatible with the design of modern deep RL architectures~\citep{jin2018q,zhang2020almost}. 

\textbullet\ \textbf{Inefficient exploration:} In~\citet{cheung2020reinforcement,CheungSLZ20}, an example was given to show that under non-stationarity, the estimated model can incorrectly indicate that transitioning between states is very unlikely. This suggests that model-based methods, which try to estimate the latent model, might suffer ``The Perils of Drift'' \citep{cheung2020reinforcement}. 

\textbullet\ \textbf{Limited applicability:} In an important application of nonstationary RL --- \emph{decentralized} multi-agent RL, the agents cannot observe the actions taken by the other agents. This information structure precludes model-based methods, as the explicit estimation of the state transition functions is hardly possible without observing all the agents' actions. 

These observations have thus motivated us to turn our attention to model-free methods, which, instead of maintaining estimates of the unknown underlying model, directly learn the Q-values.

\textbf{Main Contributions.} In this paper, we focus on the problem of designing model-free algorithms with near-optimal performances for non-stationary RL. Our contributions are as follows:
\begin{enumerate}[leftmargin=*,itemsep=-2pt]
	\item We introduce an algorithm named Restarted Q-Learning with Upper Confidence Bounds (RestartQ-UCB), which is the first model-free algorithm in the general setting of non-stationary RL. Our algorithm adopts a simple but effective restarting strategy~\citep{jaksch2010near,besbes2014stochastic} that resets the memory of the agent according to a calculated schedule. The restarting strategy ensures that our algorithm only refers to the most up-to-date experience for decision-making. RestartQ-UCB also utilizes an extra optimism term (in addition to the standard Hoeffding/Freedman-based bonus) for exploration to counteract the non-stationarity of the MDP. This additional bonus term, depending on the local variation budgets (i.e., the environmental variation in each restarting interval), guarantees that our optimistic $Q$-value is still an upper bound of the optimal $Q^\star$-value even when the environment changes. Our analysis shows that RestartQ-UCB achieves the lowest dynamic regret bound when compared to existing works in the literature;
	\item We conduct simulations showing that RestartQ-UCB achieves highly competitive cumulative rewards against a state-of-the-art solution \citep{zhou2020nonstationary}, while only taking $0.18\%$ of its computation time;
	\item We establish the first lower bounds  in non-stationary RL, which suggest  that our algorithm is optimal in all parameter dependences except for an $H^{\frac{1}{3}}$ factor, where $H$ is the episode length;
	\item To further showcase the flexibility and potential of non-stationary RL, we illustrate how it can be utilized to address the non-stationarity issue inherent in multi-agent RL. Specifically, we show that RestartQ-UCB can be readily applied to a multi-agent RL example against a slowly-changing opponent~\citep{radanovic2019learning,lee2020linear}. The setting we consider is a more practical and general decentralized learning setting, which entails model-free solutions;
	\item A preliminary version of this paper \citep{anonymous} has appeared in the Proceedings of the 38th International Conference on Machine Learning (ICML 2021). In the current version, 1) We have proved a new result of the Freedman-based RestartQ-UCB that no longer requires knowledge of the local variation budget (Theorem \ref{thm:freedman}); 2) We further show that our algorithm can easily remove the dependence on prior knowledge of the variation budget, a critical assumption commonly made in the literature~\citep{ortner2020variational,zhou2020nonstationary}. To do that, we propose a parameter-free algorithm that leverages a ``double restart'' strategy to adaptively learn the variation budget; 3) In addition, we discuss the application of our non-stationary RL algorithm in inventory control. Specifically, we demonstrate how to implement our RestartQ-UCB algorithm for the problem of inventory control across related, but \emph{different} products with time-varying demands; 4) We also provide detailed proofs that were missing in the conference version. 
\end{enumerate}  

\textbf{Related Works.} Dynamic regret of non-stationary RL has been mostly studied using model-based solutions. \citet{jaksch2010near} consider the setting where the MDP is allowed to change abruptly for $L$ times. 
A sliding window approach is proposed in~\citet{gajane2018sliding} under the same setting. 
\citet{ortner2020variational} generalize the previous setting by allowing the MDP to vary either abruptly or gradually at every step, subject to a total variation budget. 
\citet{cheung2020reinforcement} consider the same setting and 
introduce a Bandit-over-RL technique that adaptively tunes the algorithm without knowing the variation budget. Directly applying their method to our episodic setting will lead to a dynamic regret of $\widetilde{O}(S^{\frac{2}{3}} A^{\frac{1}{2}} \Delta^{\frac{1}{4}} H T^{\frac{3}{4}}).$ Although it may be possible to further obtain an improved dependence on $T$, this is sub-optimal in terms of $S$ and $A$. We remark that a recent (but later than ours) version of this paper develops a lower bound tailored to the infinite horizon undiscounted non-stationary RL, this is not directly applicable to our episodic non-stationary RL setting.

In a setting most similar to ours, \citet{domingues2020kernel} investigate non-stationary RL in the episodic setting, and propose a kernel-based approach when the state-action set forms a metric space. Their results can be reduced to an $\widetilde{O}(SA^{\frac{1}{2}}\Delta^{\frac{1}{3}}H^{\frac{4}{3}}T^{\frac{2}{3}})$ regret in the tabular case. \citet{fei2020dynamic} assume stationary transitions and adversarial 
full-information rewards, and their setting is not directly comparable with ours. 
Two concurrent works~\citet{zhou2020nonstationary} and \citet{touati2020efficient} consider non-stationary RL in linear MDPs, but their regret bounds, $\widetilde{O}(S^{\frac{4}{3}}A^{\frac{4}{3}}\Delta^{\frac{1}{3}}H^{\frac{4}{3}}T^{\frac{2}{3}})$ and $\widetilde{O}(S^{\frac{5}{4}}A^{\frac{5}{4}}\Delta^{\frac{1}{4}}H^{\frac{5}{4}}T^{\frac{3}{4}})$ when reduced to the tabular RL setting, respectively, are less competitive than ours. \wmedit{After an earlier version of this paper was made publicly available, \cite{wei2021non} have proposed a black-box reduction procedure that turns an RL algorithm in a (nearly-)stationary environment to a non-stationary RL algorithm. In the episodic setting, \citet{wei2021non} have achieved a strong dynamic regret bound of $\widetilde{O}(S^{\frac{1}{3}} A^{\frac{1}{3}} \Delta^{\frac{1}{3}} H^{\frac{5}{3}} T^{\frac{2}{3}})$ (with or without knowledge of the degree of non-stationarity). However, their regret bound has a worse dependence on $H$ when compared to ours, and it has been pointed out in \citet{wei2021non} that such a sub-optimality cannot be improved upon by using a Freedman-style confidence bound as we do. Their compelling theoretical guarantees also come at the cost of a rather sophisticated and memory-inefficient algorithmic design, which needs to maintain many instances of the stationary subroutine, and constantly switch among them.} Interested readers are referred to~\citet{padakandla2020survey} for a comprehensive survey on RL in non-stationary environments. Table~\ref{table:1} compares our regret bounds with existing results that tackle similar settings as ours. 
It can be seen that our result is the first one that achieves the optimal dependence on $S$ and $A$, and also establishes the tightest dependence on $H/D$ and $T$ among existing solutions in the literature, without relying on their assumptions.

Another related line of research studies online/adversarial MDPs~\citep{yu2009online,neu2010online,arora2012deterministic,yadkori2013online,dick2014online,wang2018reinforcement,lykouris2019corruption,jin2019learning}, but they mostly only allow variations in reward functions, and use the static regret as a performance metric. In addition, RL with low switching cost~\citep{bai2019provably} also shares a similar spirit as our restarting strategy since it also periodically forgets previous experiences. However, such algorithms do not address the non-stationarity of the environment, and their dynamic regret in terms of the variation budget is unclear. 

Non-stationarity has also been considered in bandit problems \citep{besbes2019optimal}. Within different non-stationary multi-armed bandit (MAB) settings, various methods have been proposed, including decaying memory and sliding windows~\citep{garivier2011upper,keskin2017chasing}, as well as restart-based strategies~\citep{auer2002nonstochastic,besbes2014stochastic,allesiardo2017non}. These methods largely inspired later research on non-stationary RL. A more recent line of work developed methods that do not require prior knowledge of the variation budget~\citep{karnin2016multi,cheung2019hedging} or the number of abrupt changes~\citep{auer2019adaptively}. Other related settings considered in the literature include Markovian bandits~\citep{tekin2010online,ma2018improvements,ZhouXCG20},  non-stationary contextual bandits~\citep{luo2018efficient,chen2019new}, linear bandits~\citep{cheung2019learning,zhao2020simple}, continuous-armed bandits~\citep{mao2020poly}, and learning with seasonal patterns \citep{ChenWW20}.

\textbf{Outline.} The rest of the paper is organized as follows: In Sections~\ref{sec:preliminary}, we introduce the mathematical model of our problem and necessary preliminaries. In Section~\ref{sec:algorithm}, we present our RestartQ-UCB algorithm. A dynamic regret analysis of RestartQ-UCB is provided in Section~\ref{sec:analysis}. In Section~\ref{sec:borl}, we further propose a parameter-free algorithm that does not require prior knowledge of the variation budget. In Section~\ref{sec:lower_bound}, we establish information-theoretical lower bounds. Simulation results are presented in Section~\ref{sec:simulations}. In Sections~\ref{sec:application} and~\ref{sec:application_inventory}, we discuss the applications of our method to two important scenarios: multi-agent RL and inventory control, respectively. Finally, we conclude the main part of the paper in Section~\ref{sec:conclusions}. Some supplementary material and proofs of all the results are included in eleven  appendices at the end of the paper.

\section{Preliminaries}\label{sec:preliminary}
\textbf{Model:} We consider an episodic RL setting where an agent interacts with a non-stationary MDP for $M$ episodes,  with each episode containing $H$ steps. We use a pair of integers $(m,h)$ as a \emph{time index} to denote the $h$-th step of the $m$-th episode.  The environment can be denoted by a tuple $(\mc{S},\mc{A}, H, P, r)$, where $\mc{S}$ is the finite set of states with $\abs{\mc{S}}=S$, $\mc{A}$ is the finite set of actions with $\abs{\mc{A}}=A$, $H$ is the number of steps in one episode, $P=\{P_h^m\}_{m\in[M],h\in[H]}$ is the set of transition kernels, and $r=\{r_h^m\}_{m\in[M],h\in[H]}$ is the set of mean reward functions. Specifically, when the agent takes action $a_h^m \in \mc{A}$ in state $s_h^m \in \mc{S}$ at the time  $(m,h)$, it will receive a random reward {$R_h^m(s_h^m,a_h^m)\in [0,1]$} with expected value $r_h^m(s_h^m,a_h^m)$, and the environment transitions to a next state $s_{h+1}^m$ following the distribution $P_h^m\left(\cdot \mid s_h^m,a_h^m\right)$. It is worth emphasizing that the transition kernel and the mean reward function depend both on $m$ and $h$, and hence the environment is non-stationary over time. The episode ends when $s_{H+1}^m$ is reached. We further denote $T=MH$ as the total number of steps. 

A deterministic policy $\pi:[M]\times [H]\times \mc{S}\ra \mc{A}$ is a mapping from the time index and state space to the action space, and we let $\pi_h^m(s)$ denote the action chosen in state $s$ at time $(m,h)$. Define $V_h^{m,\pi}: \mc{S}\ra \rr$ to be the value function under policy $\pi$ at time $(m,h)$, i.e., 
$$
V_{h}^{m,\pi}(s)\defeq\mathbb{E}\left[\sum_{h^{\prime}=h}^{H} r^m_{h^{\prime}}\left(s_{h^{\prime}}, \pi_{h^{\prime}}^m\left(s_{h^{\prime}}\right)\right) \mid s_{h}=s\right],
$$
where $ s_{h^{\prime}+1} \sim P_{h^{\prime}}^{m}\left(\cdot \mid s_{h^{\prime}}, a_{h^{\prime}}\right)$. Accordingly, the state-action value function $Q_h^{m,\pi}:\mc{S}\times\mc{A}\ra \rr$ is defined as:
\[
\begin{aligned}
Q_{h}^{m,\pi}(s, a)\defeq r^m_{h}(s, a) + \mathbb{E}\left[\sum_{h^{\prime}=h+1}^{H}\! r_{h^{\prime}}^m\left(s_{h^{\prime}}, \pi_{h^{\prime}}^m\left(s_{h^{\prime}}\right)\right) \mid s_{h}=s, a_{h}=a\right]
\end{aligned}
\]
For simplicity of notation, we let $P^m_hV_{h+1}(s,a)\defeq \ee_{s'\sim P^m_h(\cdot\mid s,a)}\left[V_{h+1}(s')\right]$. Then, the Bellman equation gives $V^{m,\pi}_h(s) = Q_h^{m,\pi}(s,\pi_h^m(s))$ and $Q_h^{m,\pi}(s,a) = (r^m_h +P_h^mV^{m,\pi}_{h+1})(s,a)$, and we also have $V_{H+1}^{m,\pi}(s) = 0,\forall s\in\mc{S}$ by definition. Since the state space, the action space, and the length of each episode are all finite, there always exists an optimal policy  {$\pi^{\star}$ that gives the optimal value $V_h^{m,\star}(s) \defeq V_h^{m,\pi^{\star}}(s) = \sup_\pi V_h^{m,\pi}(s),\forall s\in\mc{S}, m\in[M], h\in[H]$.} From the Bellman optimality equation, we have $V^{m,\star}_h(s) = \max_{a\in\mc{A}} Q_h^{m,\star}(s,a)$, where $Q_h^{m,\star}(s,a) \defeq (r^m_h +P_h^mV^{m,\star}_{h+1})(s,a)$, and $V_{H+1}^{m,\star}(s) = 0,\forall s\in\mc{S}$.

\textbf{Dynamic Regret:} The agent aims to maximize the cumulative expected reward over the entire $M$ episodes, by adopting some policy $\pi$. We measure the optimality of the policy $\pi$ in terms of its \emph{dynamic regret} \citep{cheung2020reinforcement,domingues2020kernel}, which compares the agent's policy with the optimal policy of each individual episode in hindsight: 
\[
\mc{R}(\pi, M) \defeq \sum_{m=1}^{M}\left(V_{1}^{m,\star}\left(s_{1}^{m}\right)-V_{1}^{m,\pi}\left(s_{1}^{m}\right)\right),
\]
where the initial state $s_1^m$ of each episode is chosen by an oblivious adversary~\citep{zhang2020almost}. Dynamic regret is a stronger measure than the standard (static) regret, which only considers the single policy that is optimal over all episodes combined. 

\textbf{Variation:} We measure the non-stationarity of the MDP in terms of its \emph{variation budget} in the mean reward function and transition kernels:
\[
\begin{aligned}
\Delta_r \defeq \sum_{m=1}^{M-1}\sum_{h=1}^{H} \sup_{s,a}|r_h^m(s,a) - r_h^{m+1}(s,a)|,\quad 
\Delta_p \defeq \sum_{m=1}^{M-1}\sum_{h=1}^{H} \sup_{s,a}\norm{P_h^m(\cdot \mid s,a) - P_h^{m+1}(\cdot\mid s,a)}_1, 
\end{aligned}
\]
where $\norm{\cdot}_1$ is the $L^1$-norm. Note that our definition of variation budgets only imposes restrictions on the summation of non-stationarity across two different episodes, and does not put any restriction on the difference between two consecutive steps in the same episode; that is, $P_h^m(\cdot\mid s,a)$ and $P_{h+1}^m(\cdot\mid s,a)$ are allowed to be arbitrarily different. We further let $\Delta = \Delta_r + \Delta_p$, and assume $\Delta>0$.

\section{Algorithm: RestartQ-UCB}\label{sec:algorithm}
\begin{algorithm*}[t]
	\For{epoch $d \gets 1$ to $D$}
	{
		\textbf{Initialize:} $V_h(s)\gets H-h+1, Q_h(s,a)\gets H-h+1, N_h(s,a)\gets 0, \check{N}_h(s,a)\gets 0,$ $\check{r}_h(s,a)\gets 0, \check{v}_{h}(s,a)\gets 0,\algedit{\check{\mu}_h(s,a)\gets 0,  \check{\sigma}_h(s,a)\gets 0, \mu^{\re}_h(s,a)\gets 0, \sigma^{\re}_h(s,a)\gets 0, V_h^{\re}(s)\gets H},$ for all $(s,a,h)\in \mc{S}\times \mc{A}\times [H]$\;
		\For{episode $k \gets (d-1)K+1$ to $\min\{dK,M\}$}
		{
			observe $s_1$\;
			\For{step $h \gets 1$ to $H$}
			{
				Take action $a_h  \gets \arg \max_a Q_h(s_h ,a)$, receive $R_h(s_h ,a_h )$, and observe $s_{h+1}$\label{line:6}\;
				$\check{r}_h(s_h,a_h)\gets \check{r}_h(s_h,a_h)+R_h(s_h,a_h), \check{v}_{h}(s_h,a_h) \gets \check{v}_{h}(s_h,a_h) + V_{h+1}(s_{h+1})$\;
				\algedit{$\check{\mu}(s_h,a_h )\gets \check{\mu}(s_h,a_h )+ V_{h+1}(s_{h+1}) - V_{h+1}^{\re}(s_{h+1})$\;}
				\algedit{$\check{\sigma}(s_h,a_h )\gets \check{\sigma}(s_h,a_h )+ \left( V_{h+1}(s_{h+1}) - V_{h+1}^{\re}(s_{h+1})\right)^2$\;}
				\algedit{$\mu^{\re}(s_h,a_h )\gets \mu^{\re}(s_h,a_h )+V_{h+1}^{\re}(s_{h+1}), \sigma^{\re}(s_h,a_h )\gets \sigma^{\re}(s_h,a_h )+ ( V_{h+1}^{\re}(s_{h+1}) )^2$\;}
				$n\defeq N_h(s_h ,a_h )\gets N_h(s_h ,a_h ) + 1, \check{n}\defeq \check{N}_h(s_h ,a_h ) \gets \check{N}_h(s_h ,a_h ) + 1$\;
				\If{$N_h(s_h,a_h)\in \mc{L}$}
				{
					// \texttt{Reaching the end of the stage}
					
					$b_h \gets \sqrt{\frac{H^2}{\check{n}}\iota} +\sqrt{\frac{1}{\check{n}}\iota},\ b_\Delta \gets \Delta_r^{(d)} + H\Delta_p^{(d)}$\;
					\algedit{
					$\underline{b}_h \!\gets\! 2\sqrt{\frac{\sigma^{\re} / n - (\mu^{\re}/n)^2}{n}\iota} + 2\sqrt{\frac{\check{\sigma}/\check{n} - (\check{\mu} / \check{n})^2}{\check{n}}\iota} + 5( \frac{H\iota}{n}\! + \!\frac{H\iota}{\check{n}} \!+\! \frac{H\iota^{3/4}}{n^{3/4}}\! + \!\frac{H\iota^{3/4}}{\check{n}^{3/4}})\!+\!\sqrt{\frac{1}{\check{n}}\iota}$\;
					}
					$Q_h(s_h ,a_h )\gets \min\left\{ \frac{\check{r}}{\check{n}} \!+\! \frac{\check{v}}{\check{n}} \!+\! b_h \!+\! 2b_\Delta, \algedit{\frac{\check{r}}{\check{n}} \!+\! \frac{\mu^{\re}}{n}\!+\!\frac{\check{\mu}}{\check{n}} \!+\!2\underline{b}_h \!+\!4b_\Delta, } Q_h(s_h,a_h ) \right\}$;\hfill   $(*)$\label{line:16}
					
					$V_h(s_h )\gets \max_a Q_h(s_h ,a )$\;
					$\check{N}_h(s_h ,a_h)\gets 0, \check{r}_h(s_h,a_h)\gets 0, \check{v}_h(s_h,a_h)\gets 0, \algedit{\check{\mu}_h(s_h,a_h)\gets 0,\check{\sigma}_h(s_h,a_h)\gets 0}$\;
				}
			
				\algedit{\textbf{if} $\sum_a N_h(s_h,a) = N_0$ \textbf{then}\qquad  \texttt{// Learn the reference value}}
				{
					
					\algedit{\ \ \ \ \ $V_h^{\re}(s_h)\gets V_h(s_h)$\;}
				}
			}
		}
	} 
	\caption{RestartQ-UCB (Hoeffding/\algedit{Freedman})}\label{alg:RQUCB}
\end{algorithm*}

\wmedit{We present our algorithm Restarted Q-Learning with Hoeffding/Freedman Upper Confidence Bounds  {(RestartQ-UCB Hoeffding/Freedman)} in Algorithm~\ref{alg:RQUCB}. For illustrative purposes, we start with a simpler RestartQ-UCB algorithm with Hoeffding-style bonus terms, which only executes the pseudocode colored in black in Algorithm~\ref{alg:RQUCB}. Further incorporating the gray parts in Algorithm~\ref{alg:RQUCB} leads to the RestartQ-UCB  algorithm with Freedman-style bonus terms and reference-advantage decomposition~\citep{zhang2020almost}, which achieves a sharper dynamic regret bound at the cost of a slightly more involved analysis. }

\wmedit{Common to both the Hoeffding and the Freedman bonus terms,} RestartQ-UCB breaks the $M$ episodes into $D$ \emph{epochs}, with each epoch containing $K=\lceil \frac{M}{D}\rceil$ episodes (except for the last epoch which possibly has less than $K$ episodes). \wmedit{With a large value of $D$, Algorithm~\ref{alg:RQUCB} restarts more frequently to adjust to the potential variations of the environment, at the cost of spending more time searching for new optimal policies. On the contrary, a small value of $D$ would lead to running stable policies for long periods of time with less frequent restarts, but the resulting algorithm might not be able to adjust to the environmental variations rapidly enough. To strike a balance, we set the number of epochs to be $D = S^{-\frac{1}{3}} A^{-\frac{1}{3}} \Delta^{\frac{2}{3}} H^{-\frac{2}{3}} T^{\frac{1}{3}}$ so as to achieve the optimal dynamic regret bound, and such a choice will be justified later in our analysis.} RestartQ-UCB periodically restarts a Q-learning algorithm with UCB exploration at the beginning of each epoch, thereby addressing the non-stationarity of the environment. For each $d\in[D]$, define $\Delta^{(d)}_r$ to be the \emph{local variation budget} of the mean reward function within epoch $d$. By definition, we have $\sum_{d=1}^D \Delta_r^{(d)} \leq \Delta_r$.  Define the local variation budget of transitions $\Delta_p^{(d)}$ analogously.

Since our algorithm essentially invokes the same procedure for every epoch, in the following, we focus our analysis on what happens inside one epoch only (and without loss of generality, we focus on epoch $1$, which contains episodes $1,2,\dots,K$). At the end of our analysis, we will merge the results across all epochs. 

For each triple $(s,a,h)\in \mc{S}\times \mc{A} \times [H]$, we divide the visitations (within epoch $1$) to the triple into multiple \emph{stages}, where the length of the stages increases exponentially at a rate of $(1+\frac{1}{H})$. Specifically, let $e_1 = H$, and $e_{i+1} = \lfloor (1+\frac{1}{H}) e_i\rfloor, i\geq 1$ denote the lengths of the stages. Further, let the partial sums $\mc{L}\defeq \{ \sum_{i=1}^{j} e_i \mid j = 1,2,3,\dots \}$ denote the set of the ending times of the stages. We remark that the stages are defined for each individual triple $(s,a,h)$, and for different triples the starting and ending times of their stages do not necessarily align in time. Such a definition of stages is mostly motivated by the design of the learning rate $\alpha_t = \frac{H+1}{H+t}$ in \citet{jin2018q}. It ensures that only the last $O(1/H)$ fraction of samples is given non-negligible weights when used to estimate the optimistic $Q_h(s,a)$ values, while the first $1-O(1/H)$ fraction is forgotten \citep{zhang2020almost}.  We set $\iota \defeq \log\left(\frac{2}{\delta}\right)$, where $\delta$ is an input parameter that can be set by us. 

Recall that the time index $(k,h)$ represents the $h$-th step of the $k$-th episode. At each step $(k,h)$, we take the optimal action with respect to the optimistic $Q_h(s,a)$ value (Line~\ref{line:6} in Algorithm~\ref{alg:RQUCB}), which is designed as an optimistic estimate of the optimal $Q_h^{k,\star}(s,a)$ value of the corresponding episode. For each triple $(s,a,h)$, we update the optimistic $Q_h(s,a)$ value at the end of each stage, using samples only from this latest stage that is about to end (Line~\ref{line:16} in Algorithm~\ref{alg:RQUCB}).  \wmedit{The optimism in $Q_h(s,a)$ comes from two bonus terms $b_h\algedit{/\underline{b}_h}$ and $b_\Delta$, where $b_h\algedit{/\underline{b}_h}$ is a standard Hoeffding/Freedman-based optimism that is commonly used in upper confidence bounds~\citep{jin2018q,zhang2020almost}, and $b_\Delta$ is the extra optimism that we need to take into account because of the non-stationarity of the environment.} The definition of $b_\Delta$ requires knowledge of the local variation budget in each epoch, which is a rather strong assumption in practice. However, we can further show (later in Theorems~\ref{thm:regret_no_budget} and~\ref{thm:freedman}) that if we simply replace Equation ($*$) in Algorithm~\ref{alg:RQUCB} with the following update rule:  
\begin{align} 
Q_h(s_h ,a_h )\gets \min\left\{ \frac{\check{r}}{\check{n}} + \frac{\check{v}}{\check{n}} + b_h, \algedit{\frac{\check{r}}{\check{n}} + \frac{\mu^{\re}}{n}+\frac{\check{\mu}}{\check{n}} +2\underline{b}_h, } Q_h(s_h,a_h ) \right\}\label{eqn:new_update}  
\end{align}
then our algorithm can achieve the same regret without  assumptions on the local variation budget.

\wmedit{Compared with the Hoeffding-based algorithm, there are two major improvements in the Freedman-based one. The first improvement is the replacement of the Hoeffding-based bonus term $b_h^k$ with a tighter term $\underline{b}_h^k$. The latter term takes into account the second moment information of the random variables, which allows sharper tail bounds that rely on second moments to come into use (in our case, the Freedman's inequality). The second improvement is a variance reduction technique, or more specifically, the reference-advantage decomposition as coined in~\citet{zhang2020almost}. The intuition is to first learn a reference value function $V^\re$ that serves as a roughly accurate estimate of the optimal value function $V^\star$ in each epoch. The goal of learning the optimal value function $V^\star = V^\re + (V^*-V_\re)$ can hence be decomposed into estimating the two terms $V^\re$ and $V^*-V_\re$. The reference value $V^\re$ is a fixed term, and can be accurately estimated using a large number of samples (in Algorithm~\ref{alg:RQUCB}, we estimate $V^\re$ only when we have $N_0 = c SAH^6\iota$ samples for a large constant $c$). The advantage term $V^* - V^\re$ can also be estimated more accurately due to the reduced variance. }

\section{Analysis}\label{sec:analysis}
In this section, we present our main result---a dynamic regret analysis of the RestartQ-UCB algorithm. Our first result on RestartQ-UCB with Hoeffding-style bonus terms is summarized in the following theorem. Complete proofs of its supporting lemmas are given in Appendix~\ref{app:lemmas}. 

\begin{theorem}~\label{thm:regret}
	(Hoeffding) For $T= \Omega(SA\Delta H^2)$, and for any $\delta \in (0,1)$, with probability at least $1-\delta$, the dynamic regret of RestartQ-UCB with Hoeffding bonuses is bounded by $\widetilde{O}( S^{\frac{1}{3}}A^{\frac{1}{3}} \Delta^{\frac{1}{3}} H^{\frac{5}{3}} T^{\frac{2}{3}} )$, where $\widetilde{O}(\cdot)$ hides poly-logarithmic factors of $S,A,T$ and $1/\delta$. 
\end{theorem}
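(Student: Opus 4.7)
The plan is to first reduce the analysis to a single epoch, then sum across epochs and optimize the number of epochs $D$. Because Algorithm~\ref{alg:RQUCB} resets all of its state at the beginning of every epoch, execution during epoch $d$ is essentially that of a stationary episodic Q-UCB algorithm, distorted only by the fact that the underlying MDP can drift by at most $\Delta_r^{(d)}$ in mean reward and $\Delta_p^{(d)}$ in transition over the $K=\lceil M/D\rceil$ episodes of that epoch. All the heavy lifting therefore happens inside a single epoch, and epoch-level bounds are aggregated at the end using $\sum_d \Delta_r^{(d)}\le \Delta_r$ and $\sum_d \Delta_p^{(d)}\le \Delta_p$.

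Fix one epoch, say epoch $1$. The first key lemma establishes that, on a high-probability event, the maintained $Q_h(s,a)$ values stay optimistic throughout the epoch, i.e.\ $Q_h(s,a)\ge Q_h^{k,\star}(s,a)$ for every $(k,h,s,a)$ visited. At the end of each stage for a triple $(s,a,h)$, the running averages $\check r/\check n$ and $\check v/\check n$ are built from samples drawn at various past episodes $k'$ in the epoch. A standard Hoeffding argument controls the sampling noise via $b_h=\sqrt{H^2\iota/\check n}+\sqrt{\iota/\check n}$, yielding $|\check r/\check n+\check v/\check n-(r_h^{k'}+P_h^{k'}V_{h+1})(s,a)|\le b_h$ with high probability at those past time indices. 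The extra drift bonus $b_\Delta=\Delta_r^{(d)}+H\Delta_p^{(d)}$ then absorbs the change from the past episode $k'$ to the current episode $k$, because $|r_h^{k'}(s,a)-r_h^k(s,a)|\le \Delta_r^{(d)}$ and $|(P_h^{k'}-P_h^k)V_{h+1}(s,a)|\le H\Delta_p^{(d)}$ using $V_{h+1}\in[0,H]$. Combining these with a backward induction over $h$ on the stored $V_{h+1}$, and using the factor of $2$ in front of $b_\Delta$ in Line~\ref{line:16}, closes the inductive optimism argument.

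With optimism in hand, the per-episode regret $V_1^{k,\star}(s_1^k)-V_1^{k,\pi}(s_1^k)$ is dominated by $V_1(s_1^k)-V_1^{k,\pi}(s_1^k)$, which telescopes along the trajectory into a sum over the $H$ steps of the last-stage bonuses applied at each visited triple, plus a martingale-difference term handled by Azuma--Hoeffding. The exponential stage lengths $e_{i+1}=\lfloor (1+1/H)e_i\rfloor$ mimic the learning-rate schedule of \citet{jin2018q}, so that only the last $O(1/H)$ fraction of samples carries non-negligible weight and the standard pigeonhole counting over stage endings applies. This yields a per-epoch regret of order $\widetilde O\bigl(\sqrt{SAH^5 K}\bigr)+O\bigl(HK(\Delta_r^{(d)}+H\Delta_p^{(d)})\bigr)$, where the first term is the standard Hoeffding Q-UCB contribution and the second is the cost of paying $b_\Delta$ at each of the $O(HK)$ visits in the epoch. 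Summing over $d=1,\dots,D$ with $K=T/(HD)$, and using $\sum_d(\Delta_r^{(d)}+H\Delta_p^{(d)})\le H\Delta$, produces a total dynamic regret of order
\begin{equation*}
\widetilde O\!\left(\sqrt{SAH^4\,TD}\right)+O\!\left(\frac{HT\Delta}{D}\right).
\end{equation*}

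Balancing the two terms by setting $D=\lceil (T\Delta^2/(SAH^2))^{1/3}\rceil$ yields the advertised bound $\widetilde O(S^{\frac{1}{3}}A^{\frac{1}{3}}\Delta^{\frac{1}{3}}H^{\frac{5}{3}}T^{\frac{2}{3}})$; the assumption $T=\Omega(SA\Delta H^2)$ guarantees that this choice is feasible ($D\le M$) and that the leading-order terms dominate the martingale burn-in. The main obstacle I foresee is the optimism step: the samples aggregated into $\check v/\check n$ come from transitions $P_h^{k'}$ that themselves drift within the epoch, so one must argue carefully that paying $b_\Delta$ only once per stage update (rather than once per sample) still dominates the accumulated drift, and that the backward induction over the $H$ layers does not amplify the slack geometrically in $H$. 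The exponential stage design, together with the constant factor $2$ in front of $b_\Delta$, is precisely what keeps this slack under control.
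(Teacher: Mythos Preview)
Your proposal is correct and follows essentially the same route as the paper: establish per-epoch optimism, obtain a per-epoch regret of order $\widetilde O(\sqrt{SAH^5K})+O(KH\Delta_r^{(d)}+KH^2\Delta_p^{(d)})$, sum over epochs, and optimize $D=(T\Delta^2/(SAH^2))^{1/3}$. One small difference worth noting: the paper proves optimism by forward induction on the episode index $k$ (not backward on $h$), and the drift is absorbed not at the level of $r_h^{k'}$ and $P_h^{k'}$ separately but via an intermediate lemma bounding $|Q_h^{k_1,\star}-Q_h^{k_2,\star}|\le \Delta_r^{(d)}+H\Delta_p^{(d)}$ directly; this is what makes a single $2b_\Delta$ per stage update suffice without geometric blow-up in $H$, resolving exactly the obstacle you flag at the end.
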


Our proof relies on the following technical lemma, stating that for any triple $(s,a,h)$, the difference of their optimal $Q$-values at two different episodes $1 \leq k_1 < k_2 \leq K$ is bounded by the variation of this epoch.

\begin{lemma}\label{lemma:optimalQ}
	For any triple $(s,a,h)$ and any $1 \leq k_1 < k_2\leq K$, it holds that $|Q_h^{k_1,\star}(s,a) - Q_h^{k_2,\star}(s,a)|\leq \Delta_r^{(1)} +H \Delta_p^{(1)}$.
\end{lemma}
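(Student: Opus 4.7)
My plan is to perform a backward induction on $h$, using the Bellman optimality equation together with a telescoping trick across the episodes $k_1, k_1+1, \dots, k_2$ to connect the difference of optimal $Q$-values to the local variation budget. Define
\[
\epsilon_h \;\defeq\; \sup_{(s,a)} \sup_{k_1 < k_2 \in [1,K]} \bigl|Q_h^{k_1,\star}(s,a)-Q_h^{k_2,\star}(s,a)\bigr|,
\]
and note that the base case $\epsilon_{H+1}=0$ holds since $V_{H+1}^{k,\star}\equiv 0$ for every $k$.

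For the inductive step, I would use $Q_h^{k,\star}=r_h^k+P_h^k V_{h+1}^{k,\star}$ and insert the intermediate term $P_h^{k_1}V_{h+1}^{k_2,\star}$, yielding
\[
Q_h^{k_1,\star}-Q_h^{k_2,\star} \;=\; \bigl(r_h^{k_1}-r_h^{k_2}\bigr) \;+\; \bigl(P_h^{k_1}-P_h^{k_2}\bigr)V_{h+1}^{k_2,\star} \;+\; P_h^{k_1}\bigl(V_{h+1}^{k_1,\star}-V_{h+1}^{k_2,\star}\bigr).
\]
Telescoping $r_h^{k_1}-r_h^{k_2}=\sum_{k=k_1}^{k_2-1}(r_h^k-r_h^{k+1})$ and similarly for $P_h^{k_1}-P_h^{k_2}$, the first two terms are controlled in absolute value by the per-step reward- and transition-variation summed from $k_1$ to $k_2-1$; since $\|V_{h+1}^{k_2,\star}\|_\infty\leq H$, the $L^1$--$L^\infty$ Hölder bound gives a factor of $H$ on the transition term. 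The third term is bounded by $\epsilon_{h+1}$ after using $|V_{h+1}^{k_1,\star}(s)-V_{h+1}^{k_2,\star}(s)|\leq \max_a|Q_{h+1}^{k_1,\star}(s,a)-Q_{h+1}^{k_2,\star}(s,a)|\leq \epsilon_{h+1}$ and taking expectation under $P_h^{k_1}$. This gives the recursion
\[
\epsilon_h \;\leq\; \sum_{k=k_1}^{k_2-1}\sup_{s,a}|r_h^k(s,a)-r_h^{k+1}(s,a)| \;+\; H\sum_{k=k_1}^{k_2-1}\sup_{s,a}\|P_h^k(\cdot|s,a)-P_h^{k+1}(\cdot|s,a)\|_1 \;+\; \epsilon_{h+1}.
\]

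Unrolling this recursion from $h$ up to $H$ and then summing over all steps $h'=h,\dots,H$ inside the epoch, the reward differences collect into $\sum_{h'=1}^H\sum_{k=k_1}^{k_2-1}\sup_{s,a}|r_{h'}^k(s,a)-r_{h'}^{k+1}(s,a)|\leq \Delta_r^{(1)}$, and similarly the transition piece is bounded by $H\Delta_p^{(1)}$, both directly by the definitions of the local variation budgets restricted to epoch~$1$. This yields $\epsilon_h\leq \Delta_r^{(1)}+H\Delta_p^{(1)}$, which is exactly the claim.

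The only subtlety, and the step I would be most careful about, is the bookkeeping of the telescoping across $k$ in combination with the sum over $h'$: one must verify that after unrolling the induction one really obtains double sums matching the definitions of $\Delta_r^{(1)}$ and $\Delta_p^{(1)}$ rather than picking up extra factors of $H$ or $K$. The factor of $H$ in front of $\Delta_p^{(1)}$ arises solely from the $\|V^\star\|_\infty\leq H$ crude bound and does not double-count across the induction, which is the key thing to check.
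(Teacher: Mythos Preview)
Your proposal is correct and follows essentially the same route as the paper's proof: backward induction on $h$, the Bellman optimality equation, telescoping across episodes $k_1,\dots,k_2$, and H\"older with $\|V_{h+1}^{\star}\|_\infty\le H$ to pick up the factor $H$ on the transition variation. The only cosmetic differences are that the paper states the induction hypothesis as the explicit bound $\sum_{h'=h}^H \Delta_{r,h'}^{(1)}+H\sum_{h'=h}^H \Delta_{p,h'}^{(1)}$ rather than via your recursion on $\epsilon_h$, and it handles $V_{h+1}^{k_1,\star}-V_{h+1}^{k_2,\star}$ by swapping policies and invoking optimality instead of your cleaner $|\max_a f-\max_a g|\le\max_a|f-g|$ step; both lead to the same place.
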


Let $Q_{h}^{k}(s, a)$ denote the value of $Q_{h}(s, a)$ at the \emph{beginning} of the $k$-th episode in RestartQ-UCB Hoeffding. The following lemma states that the optimistic $Q$-value $Q_h^k(s,a)$ is an upper bound of the optimal $Q$-value $Q_h^{k,\star}(s,a)$ with high probability. Note that we only need to show that the event holds with probability $1-\texttt{poly}(S,A,K,H)\delta$, because we can replace $\delta$ with $\delta / \texttt{poly}(S,A,K,H)$ in the end to get the desired high probability bound without affecting the polynomial part of the regret bound. 

\begin{lemma}\label{lemma:optimistic_bound}
	(Hoeffding) For $\delta \in (0,1)$, with probability at least $1-2KH\delta$, it holds that $Q_{h}^{k,\star}(s, a) \leq Q_{h}^{k+1}(s, a) \leq Q_{h}^{k}(s, a),\forall (s,a,h,k)\in \mc{S}\times\mc{A}\times[H]\times[K]$. 
\end{lemma}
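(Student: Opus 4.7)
\textbf{Proof proposal for Lemma~\ref{lemma:optimistic_bound}.} The plan is to establish the two inequalities separately: monotonicity ($Q_h^{k+1}\le Q_h^{k}$) is essentially immediate, while optimism ($Q_h^{k,\star}\le Q_h^{k+1}$) requires an induction. For monotonicity, observe that the only place $Q_h$ is modified in Algorithm~\ref{alg:RQUCB} is line~\ref{line:16}, and every such update is of the form $Q_h\gets \min\{\text{candidate},Q_h\}$. Hence $Q_h^{k+1}(s,a)\le Q_h^{k}(s,a)$ deterministically, with no probabilistic argument needed.

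For optimism, I would proceed by backwards induction on $h$ (from $H+1$ down to $1$), combined with an inner forward induction on $k$. The base case $h=H+1$ is trivial, and at $k=1$ before any update we have $Q_h^{1}(s,a)=H-h+1\ge Q_h^{1,\star}(s,a)$. For the step, fix $(s,a,h,k)$ and let $\tilde k\le k$ be the last episode (if any) in which an update to $Q_h(s,a)$ was triggered; let $k_1<\cdots<k_n=\tilde k$ be the visit times to $(s,a,h)$ forming the stage that ended at $\tilde k$, with $n=\check n$. Since later updates only decrease $Q_h$, it suffices to show the candidate value committed at time $\tilde k$ dominates $Q_h^{k,\star}(s,a)$. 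That candidate equals
\[
\frac{1}{n}\sum_{i=1}^n R_h^{k_i}(s,a) + \frac{1}{n}\sum_{i=1}^n V_{h+1}^{k_i}(s_{h+1}^{k_i}) + b_h + 2b_\Delta,
\]
and I would lower-bound it by $Q_h^{k,\star}(s,a)=r_h^k(s,a)+P_h^k V_{h+1}^{k,\star}(s,a)$ through five controlled perturbations: (i) an Azuma--Hoeffding bound on the reward sample average costing $\sqrt{\iota/n}$; (ii) Lemma~\ref{lemma:optimalQ} applied to rewards giving $|r_h^{k_i}-r_h^k|\le \Delta_r^{(1)}$; (iii) the induction hypothesis at step $h+1$ combined with monotonicity, giving $V_{h+1}^{k_i}(s_{h+1}^{k_i})\ge V_{h+1}^{k_i,\star}(s_{h+1}^{k_i})$, and then Lemma~\ref{lemma:optimalQ} (passed from $Q^\star$ to $V^\star$ via $V=\max_a Q$) to replace $V_{h+1}^{k_i,\star}$ by $V_{h+1}^{k,\star}$ at cost $\Delta_r^{(1)}+H\Delta_p^{(1)}$; (iv) an Azuma--Hoeffding bound on $\frac{1}{n}\sum V_{h+1}^{k,\star}(s_{h+1}^{k_i})-\frac{1}{n}\sum P_h^{k_i} V_{h+1}^{k,\star}(s,a)$ costing $H\sqrt{\iota/n}$; (v) the transition-variation bound $\|P_h^{k_i}(\cdot\mid s,a)-P_h^k(\cdot\mid s,a)\|_1\le \Delta_p^{(1)}$ (by telescoping over consecutive episodes in epoch $1$), giving an additional $H\Delta_p^{(1)}$ error. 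Collecting, the total error is at most $\sqrt{\iota/n}+H\sqrt{\iota/n}+2(\Delta_r^{(1)}+H\Delta_p^{(1)})$, which is exactly $b_h+2b_\Delta$ by definition, completing the induction.

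Finally, a union bound is needed to convert the per-tuple high-probability Azuma--Hoeffding statements into a uniform one. Because each triple $(s,a,h)$ contributes a logarithmic number of stages and the total number of $(s,a,h,k)$ configurations at which a bound must hold is at most $O(SAHK)$ after absorbing stage counts into $\iota$, rescaling $\delta$ to $\delta/\texttt{poly}(S,A,K,H)$ at the outset yields the claimed $1-2KH\delta$ probability (with the polynomial prefactor absorbed into the logarithmic factor hidden in $\widetilde O$). The main obstacle I anticipate is step (iv): $V_{h+1}^{k,\star}$ is a deterministic function of the (unknown) MDP at episode $k$, but it must be used to linearize the noisy samples $V_{h+1}^{k_i}(s_{h+1}^{k_i})$ which are themselves random and adapted to the filtration up to $k_i$. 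The clean way I plan to handle this is to split the error exactly as in (iii)--(v): first absorb the history-dependence of $V_{h+1}^{k_i}$ using the inner induction and Lemma~\ref{lemma:optimalQ}, so that the Azuma martingale in (iv) is built from the deterministic function $V_{h+1}^{k,\star}$ evaluated along the random trajectory, which makes the increments bounded by $H$ and independent of the optimistic value iterates.
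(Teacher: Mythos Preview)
Your approach is correct and closely mirrors the paper's, with one organizational difference worth flagging. The paper applies Azuma--Hoeffding to the martingale increments $V_{h+1}^{\check l_i,\star}(s_{h+1}^{\check l_i})-P_h^{\check l_i}V_{h+1}^{\check l_i,\star}(s,a)$ (which do not depend on the target episode $k$), then uses the Bellman equation and a \emph{single} application of Lemma~\ref{lemma:optimalQ} to pass from $Q_h^{\check l_i,\star}$ to $Q_h^{k,\star}$. This costs only $b_h+b_\Delta$, leaving $b_\Delta$ of slack; the paper then spends that slack in a separate ``Case 2'' (no update at episode $k$) to bridge from the last update episode $j$ to $k$ via $Q_h^{j+1}\ge Q_h^{j,\star}+b_\Delta\ge Q_h^{k,\star}$. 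Your decomposition instead pays $2b_\Delta$ up front (steps (ii), (iii), (v)) to target $Q_h^{k,\star}$ directly, so you need no slack and no Case 2 --- but in exchange your Azuma martingale in step (iv) depends on $k$ through $V_{h+1}^{k,\star}$, which formally requires a union over $k$ in addition to the one over stages. Both routes work; the paper's is marginally tidier because its concentration event is shared across all target episodes $k$.

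One phrasing issue: your sentence ``it suffices to show the candidate value committed at time $\tilde k$ dominates $Q_h^{k,\star}$'' is not literally sufficient, since $Q_h^{k+1}(s,a)=\min\{\text{candidate at }\tilde k,\,Q_h^{\tilde k}(s,a)\}$ and the second term in the min must also be handled. Your argument does cover it --- the same bound applies to every earlier candidate, and the initial value $H-h+1$ trivially dominates --- but you should say so explicitly rather than appeal only to ``later updates only decrease $Q_h$''.
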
 

Building upon Lemmas~\ref{lemma:optimalQ} and~\ref{lemma:optimistic_bound}, a complete proof of Theorem~\ref{thm:regret} is given in Appendix~\ref{app:proof_regret}. We remark that Algorithm~\ref{alg:RQUCB} relies on the assumption that the local variations $b_\Delta$ are known a priori, which is a strong but commonly made assumption in the literature on non-stationary RL~\citep{ortner2020variational,zhou2020nonstationary}. To the best of our knowledge, existing restart-based solutions either crucially rely on this local variation assumption~\citep{ortner2020variational}, or suffer a severe regret degeneration after removing this assumption~\citep{zhou2020nonstationary}. Interestingly, in the following theorem, we show that this assumption can be safely removed in our approach without affecting the regret bound. The only modification to the algorithm is to replace the $Q$-value update rule in Equation ($*$) of Algorithm~\ref{alg:RQUCB} with the new update rule in Equation~\eqref{eqn:new_update}. 

\begin{theorem}~\label{thm:regret_no_budget}
	(Hoeffding, no local budgets) For $T= \Omega(SA\Delta H^2)$, and for any $\delta \in (0,1)$, with probability at least $1-\delta$, the dynamic regret of RestartQ-UCB with Hoeffding bonuses and no knowledge of local budgets is bounded by $\widetilde{O}( S^{\frac{1}{3}}A^{\frac{1}{3}} \Delta^{\frac{1}{3}} H^{\frac{5}{3}} T^{\frac{2}{3}} )$, where $\widetilde{O}(\cdot)$ hides poly-logarithmic factors of $S,A,T$ and $1/\delta$. 
\end{theorem}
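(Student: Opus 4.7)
The plan is to show that removing the $b_\Delta$ bonus term from the update in Equation $(*)$ only weakens the optimism of the $Q$-estimates by a controllable additive error, which when accumulated contributes exactly the same order as the main regret term. The first step is to prove an \emph{approximate} analog of Lemma~\ref{lemma:optimistic_bound}: with probability at least $1-2KH\delta$, for every $(s,a,h,k)$ inside a fixed epoch $d$,
\[
Q_{h}^{k,\star}(s,a) \leq Q_{h}^{k}(s,a) + C\,(H-h+1)\,b_\Delta^{(d)},
\]
where $b_\Delta^{(d)} := \Delta_r^{(d)} + H\Delta_p^{(d)}$ and $C$ is an absolute constant. The argument mirrors the original optimism proof, running a backward induction on $h$ simultaneously for all $k$. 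In the inductive step, Azuma--Hoeffding places $\tfrac{\check{r}}{\check{n}} + \tfrac{\check{v}}{\check{n}}$ within $b_h$ of $\tfrac{1}{\check{n}} \sum_{t_i} \bigl(r_h^{t_i}(s,a) + P_h^{t_i} V_{h+1}^{[t_i]}(s,a)\bigr)$; the inductive hypothesis replaces each $V_{h+1}^{[t_i]}$ by $V_{h+1}^{t_i,\star}$ up to slack $E_{h+1} = C(H-h)b_\Delta^{(d)}$; and Lemma~\ref{lemma:optimalQ} replaces each $Q_h^{t_i,\star}$ by $Q_h^{k,\star}$ up to $b_\Delta^{(d)}$. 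Together these yield the recursion $E_h = E_{h+1} + O(b_\Delta^{(d)})$, and hence $E_h = O((H-h+1)b_\Delta^{(d)})$. Epoch restarts ensure that only the local variation enters $E_h$.

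With approximate optimism in hand, I would reuse the regret decomposition from the proof of Theorem~\ref{thm:regret}, now using
\[
V_1^{k,\star}(s_1^k) - V_1^{k,\pi_k}(s_1^k) \leq \bigl(V_1^k(s_1^k) - V_1^{k,\pi_k}(s_1^k)\bigr) + E_1.
\]
The first summand is handled verbatim by the original analysis and contributes $\widetilde O(S^{1/3}A^{1/3}\Delta^{1/3}H^{5/3}T^{2/3})$. The extra slack summed across all episodes gives
\[
\sum_{d=1}^{D} K \cdot O\bigl(H\, b_\Delta^{(d)}\bigr) = O\bigl(KH(\Delta_r + H\Delta_p)\bigr) = O\!\left(\tfrac{TH\Delta}{D}\right),
\]
which, upon plugging the choice $D = \Theta(S^{-1/3}A^{-1/3}\Delta^{2/3}H^{-2/3}T^{1/3})$ already used in Theorem~\ref{thm:regret}, simplifies to $\widetilde O(S^{1/3}A^{1/3}\Delta^{1/3}H^{5/3}T^{2/3})$, matching the first summand and giving the stated bound.

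The delicate part is the approximate-optimism step: in the original proof, $b_\Delta$ was introduced \emph{after} the concentration step specifically to dominate a deterministic MDP drift within the stage; without it, that drift must be tracked explicitly through the induction and folded into $E_h$. It is crucial that the $\min$ in the update rule, together with the stage-based learning rate, prevents the slack from compounding across stages --- only the latest stage's drift affects the current $Q$-value --- which is what keeps $E_h$ linear (rather than exponential) in $h$. Once this technical point is controlled, Steps 2 and 3 are essentially bookkeeping, and all other high-probability events and parameter choices are inherited from the proof of Theorem~\ref{thm:regret}.
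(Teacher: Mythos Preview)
Your proposal is correct and follows essentially the same route as the paper: the paper proves exactly your approximate-optimism statement as Lemma~\ref{lemma:new_optimistic_bound} (with $C=2$), then adds the resulting slack $2KHb_\Delta$ to the per-epoch regret and observes that it is absorbed by the existing $O(KH\Delta_r^{(d)}+KH^2\Delta_p^{(d)})$ term, leaving the final bound unchanged. Two small caveats worth noting: the paper runs the induction on $k$ rather than backward on $h$, because the case where $Q_h(s,a)$ was last updated at some earlier episode $j<k$ requires the approximate-optimism bound at the \emph{same} step $h$ for episode $j$ (so a pure $h$-induction would need a nested $k$-induction); and the claim that ``the first summand is handled verbatim'' is not quite literal---the $\xi_{h+1}^k$ term in the decomposition (Lemma~\ref{lemma:xi_bound}) also invokes exact optimism and must be re-checked with the weakened inequality, though it still contributes the same order.
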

	
To understand why this simple modification works, notice that in ($*$) we add exactly the same value $2b_\Delta$ to the upper confidence bounds of all $(s,a)$ pairs in the same epoch. Subtracting the same value from all optimistic $Q$-values simultaneously should not change the choice of actions in future steps. \wmedit{The only difference is that the new ``optimistic'' $Q_h^k(s_h,a_h)$ values would no longer be strict upper bounds of the optimal $Q_h^{k,\star}(s_h,a_h)$ anymore, but instead ``upper bounds'' subject to some error term induced by $b_\Delta$. Specifically, since $Q_h(s_h,a_h)$ is updated using $V_{h+1}(s_{h+1})$, which in turn contains some error in terms of $b_\Delta$, the error will propagate across the steps. By properly tracking such error terms, we can see that there are in total $H-h+1$ copies of the $2b_\Delta$ error accumulated from step $H$ back to step $h$.  This leads to the following variant of Lemma~\ref{lemma:optimistic_bound} that quantifies the error terms in the new ``optimistic'' bounds. }

\begin{lemma}\label{lemma:new_optimistic_bound}
	(Hoeffding, no local budgets) Suppose that we have no prior knowledge of the local variations and replace the update rule ($*$) in RestartQ-UCB Hoeffding with Equation~\eqref{eqn:new_update}. For $\delta \in (0,1)$, with probability at least $1-2KH\delta$, it holds that $Q_{h}^{k,\star}(s, a) - 2(H-h+1)b_\Delta \leq Q_{h}^{k+1}(s, a) \leq Q_{h}^{k}(s, a),\forall (s,a,h,k)\in \mc{S}\times\mc{A}\times[H]\times[K]$. 
\end{lemma}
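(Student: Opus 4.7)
The plan is to mimic the induction in the proof of Lemma~\ref{lemma:optimistic_bound}, but to carry an extra additive shift of $2b_\Delta$ per recursion level as an ``optimism debt.'' The right-hand inequality $Q_h^{k+1}(s,a)\leq Q_h^k(s,a)$ continues to hold for free, because the $\min$ in~\eqref{eqn:new_update} has the same form as in $(*)$, so monotonicity in $k$ is inherited from the Lemma~\ref{lemma:optimistic_bound} argument verbatim. All of the work is therefore on the left-hand inequality, which I would prove by joint induction on $k$ and on $h$ from $H+1$ down to $1$; the base case $h=H+1$ is trivial because $V_{H+1}\equiv 0$ and the prefactor $2(H-h+1)b_\Delta$ vanishes.

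For the inductive step, assume $Q_{h+1}^{k,\star}(s,a)-2(H-h)b_\Delta \leq Q_{h+1}^{k+1}(s,a)$ for every $(s,a)$, which by taking the $\max$ over actions gives $V_{h+1}^{k,\star}(s)-2(H-h)b_\Delta \leq V_{h+1}^{k+1}(s)$. Now I would re-run the analysis of the stage update at step $h$ from Lemma~\ref{lemma:optimistic_bound}: the Hoeffding concentration for $\tfrac{\check r}{\check n}+\tfrac{\check v}{\check n}+b_h$ around $r_h^{k'}(s,a)+(P_h^{k'}V_{h+1}^{k,\star})(s,a)$ for some reference episode $k'$ in the latest stage goes through unchanged, and Lemma~\ref{lemma:optimalQ} converts the reward/transition indexed by $k'$ back to those indexed by $k$ at an additive cost of $\Delta_r^{(1)}+H\Delta_p^{(1)}=b_\Delta$. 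In the original proof these two sources of slack were absorbed precisely by the $+2b_\Delta$ term in $(*)$; in the new update~\eqref{eqn:new_update} that buffer is gone, which costs $2b_\Delta$ at this level. In addition, the empirical backup $\tfrac{\check v}{\check n}$ is an average of observed $V_{h+1}^{k+1}(s_{h+1})$ values which, by the inductive hypothesis, underestimate the corresponding $V_{h+1}^{k,\star}(s_{h+1})$ by at most $2(H-h)b_\Delta$ pointwise. Combining the two sources of slack yields an accumulated shortfall of $2(H-h)b_\Delta+2b_\Delta=2(H-h+1)b_\Delta$, matching the claim. The outer $\min$ with the previous $Q_h^k(s,a)$ respects the inequality by the outer induction on $k$ together with one application of Lemma~\ref{lemma:optimalQ} within the epoch.

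The main obstacle is bookkeeping rather than a new technical device: I must verify that the error truly accumulates additively in $H-h$ rather than multiplicatively, which relies on the fact that the pointwise underestimate in $V_{h+1}^{k+1}$ enters the empirical Bellman backup $\tfrac{\check v}{\check n}$ linearly (as an average), so its expectation absorbs a shift of at most $2(H-h)b_\Delta$ with no amplification through the $H$-step horizon. I also need to confirm that the constant $2b_\Delta$ shift does not disturb the greedy action at step $h$, since subtracting the same $(s,a)$-independent quantity from every optimistic $Q$-value leaves $\arg\max_a Q_h(s_h,a)$ untouched; this is what allows the monotonicity argument for $Q_h^{k+1}\leq Q_h^k$ to transfer without change. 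A union bound over the at most $SAKH$ stage updates, together with the standard substitution $\delta\mapsto\delta/\mathrm{poly}(S,A,K,H)$, then yields the high-probability statement exactly as in Lemma~\ref{lemma:optimistic_bound}.
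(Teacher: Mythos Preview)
Your proposal is essentially the paper's own argument: mimic the induction from Lemma~\ref{lemma:optimistic_bound}, carry a $2(H-h)b_\Delta$ deficit from the hypothesis at step $h+1$, pick up one more $b_\Delta$ from Lemma~\ref{lemma:optimalQ} when passing from $Q_h^{\check l_i,\star}$ to $Q_h^{k,\star}$, and handle the non-updated case via monotonicity plus another application of Lemma~\ref{lemma:optimalQ}. One small correction worth making when you write it out: the empirical backup $\check v/\check n$ averages $V_{h+1}^{\check l_i}(s_{h+1}^{\check l_i})$ over the episodes $\check l_i$ in the just-completed stage, not $V_{h+1}^{k+1}(s_{h+1})$; hence the hypothesis you actually invoke at step $h+1$ must cover all earlier episodes $\check l_i\le k$, which is why the paper phrases the induction purely over $k$ with the claim holding for every $h$ simultaneously (your ``outer induction on $k$'' supplies exactly this, so the argument still closes).
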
 

\begin{remark}
	The easy removal of the local budget assumption is non-trivial in the design of the algorithm, and to the best of our knowledge is absent in the non-stationary RL literature with restarts. In fact, it has been shown in a concurrent work~\citep{zhou2020nonstationary} that removing this assumption could lead to a much worse regret bound (cf. Corollary 2 and Corollary 3 therein). 
\end{remark}
Replacing the Hoeffding-based upper confidence bound with a Freedman-style one will lead to a tighter regret bound, summarized in Theorem~\ref{thm:freedman} below. \wmedit{To remove the local budget assumption, we also need to replace the update rule $(*)$ in Algorithm~\ref{alg:RQUCB} with Equation~\eqref{eqn:new_update}.} The proof of the theorem follows a similar procedure as in the proof of Theorem~\ref{thm:regret_no_budget}, and is given in Appendix~\ref{app:proof_freedman}. It relies on a  reference-advantage decomposition technique for variance reduction as  in~\citet{zhang2020almost}. 
\begin{theorem}~\label{thm:freedman}
	(Freedman, \wmedit{no local budgets}) For $T$ greater than some polynomial of $S, A,\Delta$ and $H$, and for any $\delta \in (0,1)$, with probability at least $1-\delta$, the dynamic regret of RestartQ-UCB with Freedman bonuses (Algorithm~\ref{alg:RQUCB} including the gray parts) is upper bounded by $\widetilde{O}( S^{\frac{1}{3}}A^{\frac{1}{3}} \Delta^{\frac{1}{3}} H T^{\frac{2}{3}} )$, where $\widetilde{O}(\cdot)$ hides poly-logarithmic factors of $S,A,T$ and $1/\delta$.
\end{theorem}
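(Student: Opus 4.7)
The plan is to adapt the proof of Theorem~\ref{thm:regret_no_budget} by replacing the Hoeffding bonus with the Freedman/Bernstein bonus $\underline{b}_h$, and by invoking the reference-advantage decomposition of \citet{zhang2020almost} to shave an $H^{2/3}$ factor. As in the Hoeffding case, I first fix an epoch (say epoch $1$, with episodes $1,\ldots,K$), prove a per-epoch regret bound, and then sum across the $D$ epochs. Since each epoch invokes a standalone instantiation of the learner, the new update rule~\eqref{eqn:new_update} still adds the same $2b_\Delta$ to every upper confidence bound in an epoch, so I can again drop the knowledge of the local variation budget by paying only an extra additive $2(H-h+1)b_\Delta$ in the optimism guarantee (the direct Freedman analog of Lemma~\ref{lemma:new_optimistic_bound}).

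Concretely, the first step is to establish a Freedman-based optimism lemma: with high probability, for all $(s,a,h,k)$, $Q_h^{k,\star}(s,a) - 2(H-h+1)b_\Delta \le Q_h^{k+1}(s,a) \le Q_h^k(s,a)$. The new ingredient is that $\underline{b}_h$ contains both an ``advantage variance'' term $\sqrt{(\check{\sigma}/\check n - (\check\mu/\check n)^2)/\check n}$ and a ``reference variance'' term $\sqrt{(\sigma^\re/n-(\mu^\re/n)^2)/n}$ together with lower-order correction terms of order $H\iota^{3/4}/n^{3/4}$. I apply Freedman's inequality separately to the martingale differences $[V_{h+1}(s_{h+1}')-V_{h+1}^\re(s_{h+1}')] - P_h^{k'} (V_{h+1}-V_{h+1}^\re)(s,a)$ and to $V_{h+1}^\re(s_{h+1}')-P_h^{k'}V_{h+1}^\re(s,a)$, where the extra non-stationarity bias $\|P_h^{k'}-P_h^k\|_1 \le \Delta_p^{(1)}$ is absorbed into the $b_\Delta$ slack, exactly as in Lemma~\ref{lemma:optimalQ}. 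Lemma~\ref{lemma:optimalQ} then also controls the drift of $V^\star$ within the epoch, so that the reference $V^\re$, once frozen after $N_0 = \Theta(SAH^6\iota)$ samples, approximates every $V_h^{k,\star}$ in the epoch up to $O(H\Delta^{(1)})$ plus a $1/\sqrt{N_0}$-scale statistical error.

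The second step is the recursive regret decomposition. Defining $\phi_h^k \defeq Q_h^k(s_h^k,a_h^k) - Q_h^{k,\star}(s_h^k,a_h^k)$ and $\xi_h^k \defeq V_h^k(s_h^k) - V_h^{k,\star}(s_h^k)$, expanding via~\eqref{eqn:new_update} and the Bellman equation gives a recursion of the form $\phi_h^k \lesssim (1+1/H)\xi_{h+1}^k + (\text{sum of bonuses over the current stage}) + (\text{epoch variation terms}) + (\text{martingale noise})$. Summing over $k$ and $h$, bounding $\sum_{k,h}\underline{b}_h^k$ via Cauchy--Schwarz, and using the Law-of-Total-Variance identity $\sum_h \mathrm{Var}[V^\star_{h+1}] \lesssim H^2$ (modulo $\Delta^{(1)}$ corrections), one obtains a per-epoch regret of $\widetilde{O}(\sqrt{SAH^2 K} + SAH \cdot \mathrm{poly}(\iota) + H^2 \Delta^{(1)} K)$. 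Summing over the $D$ epochs gives $\widetilde{O}(\sqrt{SAH^2 T D} + H^2 \Delta)$ after using $\sum_d \Delta^{(d)}\le \Delta$ and $\sum_d \sqrt{K_d}\le \sqrt{DM}$. Optimizing $D = S^{-1/3}A^{-1/3}\Delta^{2/3}H^{-2/3}T^{1/3}$, just as before, yields $\widetilde{O}(S^{1/3}A^{1/3}\Delta^{1/3} H T^{2/3})$, with the mild-$T$ condition absorbing the lower-order $H^2\Delta$ and reference-burn-in contributions.

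The main obstacle is the interaction between the reference-advantage decomposition and the non-stationarity. In the stationary analysis of~\citet{zhang2020almost}, $V^\re$ is a good proxy for a single fixed $V^\star$; here $V^{k,\star}$ drifts with $k$, and the reference is learned only after $N_0$ visits, which in a bad epoch could happen late. I will handle this by (i) using Lemma~\ref{lemma:optimalQ} to bound the drift of $V^{k,\star}$ within an epoch by $H\Delta^{(1)}$ uniformly, so the reference-advantage variance $\mathrm{Var}[V_{h+1}-V_{h+1}^\re]$ remains small and is controllable via Freedman, and (ii) charging the visits before $V^\re$ is set to a crude $O(N_0 H)$ bound, which is absorbed in the lower-order terms provided $T$ exceeds the stated polynomial threshold. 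The remaining bookkeeping for the propagated $b_\Delta$ slack in the new optimism lemma follows the same induction on $h$ as in Lemma~\ref{lemma:new_optimistic_bound}, simply with $\underline{b}_h$ in place of $b_h$.
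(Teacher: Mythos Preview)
Your overall skeleton matches the paper's proof: a Freedman-based optimism lemma in place of Lemma~\ref{lemma:new_optimistic_bound}, a per-epoch regret decomposition that plugs into the reference-advantage machinery of \citet{zhang2020almost}, and then summation over epochs with $D$ optimized. Two concrete corrections are needed, though, and without them the bookkeeping does not close.

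First, the optimism slack in the Freedman case is $4(H-h+1)b_\Delta$, not $2(H-h+1)b_\Delta$. The reason is that the Freedman update~\eqref{eqn:new_update} contains the reference term $\mu^{\re}/n$, which averages $V_{h+1}^{\re}(s_{h+1}^{l_i})$ over \emph{all} past visits $l_1,\dots,l_n$, not just the last stage. When you insert $P_h^{l_i}$ and compare to $P_h^k$, you pick up two additional transition-drift terms of size $b_\Delta$ (one for the reference average over $n$ visits, one for the reference average over $\check n$ visits) on top of the two you already account for in the advantage part. The paper handles this via an explicit $\chi_1+\chi_2+\chi_3$ split and the monotonicity of $V^{\re,k}$ in $k$; your sketch ``absorbed into the $b_\Delta$ slack'' is the right idea but undercounts.

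Second, your per-epoch bound and your choice of $D$ are inconsistent and both off. The per-epoch Freedman bound inherited from \citet{zhang2020almost} is $\widetilde O(\sqrt{SAH^3K})$ (equivalently $\sqrt{SAH^2\cdot KH}$, i.e.\ $\sqrt{H}$ larger than what you wrote), and summing gives $\widetilde O(\sqrt{SAH^2TD}+TH\Delta/D)$. Balancing these yields $D=S^{-1/3}A^{-1/3}\Delta^{2/3}T^{1/3}$ with \emph{no} $H^{-2/3}$ factor. If you keep the Hoeffding-case $D=S^{-1/3}A^{-1/3}\Delta^{2/3}H^{-2/3}T^{1/3}$ as you wrote, the variation term $TH\Delta/D$ becomes $\Theta(S^{1/3}A^{1/3}\Delta^{1/3}H^{5/3}T^{2/3})$ and you recover only the Hoeffding rate, not the claimed $H$. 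With these two fixes the argument goes through exactly as in the paper.
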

\begin{remark}[From High Probability Regret Bound to Expected Regret Bound]
	We note that $\delta$ is an input parameter, and our high probability regret bounds can immediately imply expected regret bounds. In all the above theorems presented in this section, the dynamic regret depends on $1/\delta$ through logarithmic terms. Since the regret can at most be $O(T)$, by setting $\delta=1/T$, one can retain the same regret bound in an expectation sense. For instance, in Theorem \ref{thm:freedman}, by setting $\delta = 1/T,$ we have that with probability at least $1-\delta,$ the regret is $\widetilde{O}( S^{\frac{1}{3}}A^{\frac{1}{3}} \Delta^{\frac{1}{3}} H T^{\frac{2}{3}} ),$ while with probability at most $\delta,$ the regret is $O(T).$ Hence, the expected regret of the algorithm is $(1-\delta)\widetilde{O}( S^{\frac{1}{3}}A^{\frac{1}{3}} \Delta^{\frac{1}{3}} H T^{\frac{2}{3}} )+\delta O(T)=\widetilde{O}( S^{\frac{1}{3}}A^{\frac{1}{3}} \Delta^{\frac{1}{3}} H T^{\frac{2}{3}} )$
\end{remark}

\section{Unknown Variation Budgets}\label{sec:borl}
 
In Theorem~\ref{thm:freedman}, we have removed the assumption on the knowledge of ``local''  variation budgets $\Delta^{(d)}_r$ and $\Delta_p^{(d)}$ for $d \in [D]$, but the design of the algorithm still relies on knowledge of the ``total'' variation budget $\Delta$. Specifically, to achieve the dynamic regret bound presented in Theorem~\ref{thm:freedman}, we need to set the number of epochs to  $D^\star=S^{-\frac{1}{3}}A^{-\frac{1}{3}}\Delta^{\frac{2}{3}}T^{\frac{1}{3}}$, which clearly requires to know $\Delta$ in advance. To further overcome such a limitation, in this section, we propose a parameter-free algorithm that adaptively learns the variation budget $\Delta$ when it is unknown a priori, while still achieving sublinear dynamic regret in $T$.

\begin{algorithm*}[t]
	\textbf{Input:} Parameters $W, \mc{J}$, $\alpha$, and $\gamma$ as given in Equation~\eqref{eqn:para} and 
	\eqref{eqn:gamma}. 
	
	\textbf{Initialize:} Weights of the bandit arms $s_1(j) =  \exp\left( \frac{\alpha \gamma}{3}\sqrt{\frac{\ceil{M/W}}{J+1}}\right)$ for $j = 0, 1,\dots, \ceil{\ln W}$.
	
	\For{phase $i \gets 1$ to $\ceil{\frac{M}{W}}$}
	{
		$p_i(j) \gets (1-\gamma) \frac{s_i(j)}{\sum_{j'=0}^J s_i(j')} + \frac{\gamma}{J+1}, \forall j =0,1,\dots, J$\;
		Draw an arm $A_i$ from $\{0,\dots, J\}$ randomly according to the probabilities $p_i(0),\dots, p_i(J)$\;
		Set the estimated number of epochs $D_i \gets \floor{\frac{TW^{\frac{A_i}{J}}}{SAH^2 W}}$\;
		Run a new instance of Algorithm~\ref{alg:RQUCB} (including gray parts) for $W$ episodes with parameter value $D\gets D_i$\;
		Observe the cumulative reward $R_i$ from the last $W$ episodes\;
		\For{arm $j\gets 0,1,\dots, J$}
		{
			$\hat{R}_i(j) \gets R_i\mb{I}\{j=A_i\} /\left( WH p_i(j)\right) $\;
			$s_{i+1}(j)\gets s_{i}(j)\exp\left( \frac{\gamma}{3(J+1)} \left(\hat{R}_i(j)+ \frac{\alpha}{p_i(j)\sqrt{(J+1)\ceil{M/W}}}\right) \right)$\;
		}
	} 
	\caption{Double-Restart Q-UCB}\label{alg:borl}
\end{algorithm*}

Our new algorithm, Double-Restart Q-UCB, for the unknown variation budget setting is presented in Algorithm~\ref{alg:borl}. Inspired by the Bandit-over-Bandit algorithm \citep{cheung2019hedging,cheung2020reinforcement} that adaptively tunes the algorithm parameters in a linear bandit problem, we also use a multi-armed bandit algorithm as a master procedure to learn the optimal value $D^\star$ of $D$. Given a set $\mc{J}$ of candidate values for $D$, the idea of our algorithm is to first divide the time horizon $T$ into multiple \emph{phases}, and then in each phase we experiment with one candidate value from the set $\mc{J}$. If we choose values from $\mc{J}$ properly using a bandit algorithm, the cumulative reward we obtain through this experimentation procedure should be close to the performance of using the best fixed candidate from $\mc{J}$ in hindsight. Since the underlying environment need not drift according to any statistical pattern, we use an adversarial bandit algorithm Exp3.P~\citep{auer2002nonstochastic} to defend against the possibly adversarial changes of the best $D$ value in each phase. 

\begin{figure}[!htbp]
	\centering\includegraphics[width=.9\textwidth]{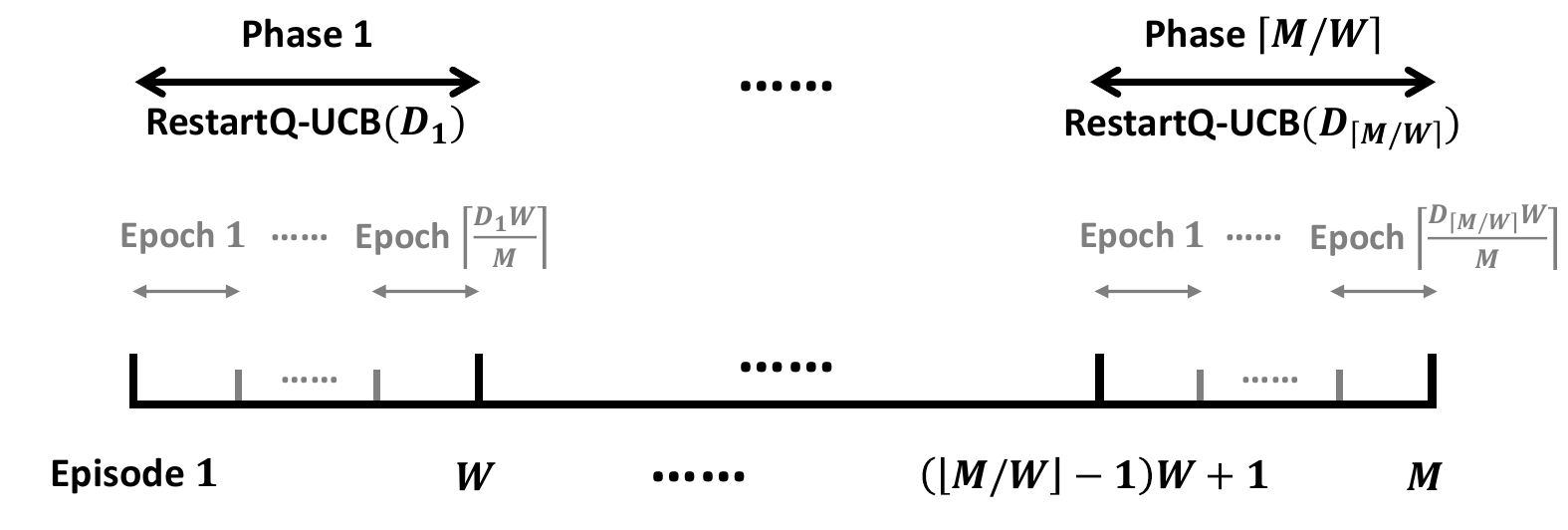}
	\caption{Structure of the Double-Restart Q-UCB algorithm.}\label{fig:structure}
\end{figure}

We sketch the high-level structure of the Double-Restart Q-UCB algorithm in Figure~\ref{fig:structure} to help clarify any possible confusion regarding our definitions of ``phases'', ``epochs'', and ``episodes''. Concretely, we divide the overall $M$ episodes into $\ceil{\frac{M}{W}}$ phases, each phase containing $W\in\nn_+$ episodes (except that the last phase could have less than $W$ episodes). At the beginning of each phase $i$, we start a new instance of Algorithm~\ref{alg:RQUCB} (including gray parts) with a candidate value of $D_i\in\mc{J}$ to be experimented in this phase. Since Algorithm~\ref{alg:RQUCB} itself is a restart-based process, it further sub-divides the $W$ episodes in phase $i$ into $\ceil{\frac{D_i W}{M}}$ epochs. To understand this value, suppose $D_i$ is an appropriate value for $D$, such that dividing the overall horizon into $D_i$ epochs leads to near-optimal dynamic regret. Then, since the overall horizon contains $M$ episodes while each phase only contains $W$ episodes, we should only divide each phase into $\ceil{\frac{D_i W}{M}}$ epochs to reflect the corresponding consequence of choosing $D_i$ as the overall number of epochs. Since we restart Algorithm~\ref{alg:RQUCB} in each phase and Algorithm~\ref{alg:RQUCB} in turn restarts an optimistic Q-learning sub-routine in each epoch, our overall algorithm exhibits a double-loop restarting behavior, and hence the name Double-Restart Q-UCB.

In the following, we instantiate the choices of the set $\mc{J}$, the phase length $W$, as well as the parameter values used in the Exp3.P bandit algorithm. First, we define
\begin{equation}\label{eqn:para}
W = \sqrt{HT}, J = \ceil{\ln W}, \text{ and } \mc{J} = \left\{\floor{\frac{T}{SAH^2 W}}, \floor{\frac{TW^{\frac{1}{J}}}{SAH^2 W}},\floor{\frac{TW^{\frac{2}{J}}}{SAH^2 W}},\dots, \floor{\frac{TW}{SAH^2 W}} \right\}, 
\end{equation}
where $\mc{J}$ is the set of candidate values for $D$ and we can see that $|\mc{J}| = \ceil{\ln W} + 1 = J + 1$. Each candidate value in $\mc{J}$ is also called an ``arm'' in the language of bandits, and we use ``arm $j$'' to refer to the candidate value $\floor{\frac{TW^{\frac{j}{J}}}{SAH^2 W}}$ for $j = 0,1,\dots, J$. We initialize the weights of the bandit arms by $s_1(j) = \exp\left( \frac{\alpha \gamma}{3}\sqrt{\frac{\ceil{M/W}}{J+1}}\right)$ for $j = 0, 1,\dots, J$, where  as specified in \citet{auer2002nonstochastic},
\begin{equation}\label{eqn:gamma}
\alpha = 2\sqrt{\ln \left(\ceil{M/W}(J+1)/\delta\right)},\text{ and } \gamma = \min\left\{ \frac{3}{5}, 2\sqrt{\frac{3}{5}\frac{(J+1)\ln(J+1)}{\ceil{M/W}}} \right\},
\end{equation}
for some failure probability $\delta>0$. 
At the beginning of each phase $i \in \{1,2,\dots, \ceil{\frac{M}{W}}\}$, we randomly draw an arm $j$ with probability $p_i(j)$ that is calculated from the weights
\[
p_i(j) = (1-\gamma) \frac{s_i(j)}{\sum_{j'=0}^J s_i(j')} + \frac{\gamma}{J+1}, \forall j =0,1,\dots, J.
\]

We set our estimated parameter $D_i$ to be the value associated with the selected arm $j$ in the set $\mc{J}$. We then run Algorithm~\ref{alg:RQUCB} for $W$ episodes by setting the number of epochs to be $D = D_i$. To put it in another way, we execute a new instance of Algorithm~\ref{alg:RQUCB} for $\ceil{\frac{D_i W}{M}}$ epochs, where each epoch contains $K_i = \floor{\frac{M}{D_i}}$ episodes. We collect the cumulative reward $R_i$ from the aforementioned $W$ episodes. The normalized value $R_i/(WH) \in [0,1]$ hence corresponds to the reward of playing the selected arm in time step $i$ of the bandit problem. Finally, we update the weights of the bandit arms based on the observed reward, using the following update rule specified in the Exp3.P algorithm:
\[
s_{i+1}(j)\gets s_{i}(j)\exp\left( \frac{\gamma}{3(J+1)} \left(\hat{R}_i(j)+ \frac{\alpha}{p_i(j)\sqrt{(J+1)\ceil{M/W}}}\right) \right), 
\]
where $\hat{R}_i(j) = R_i\mb{I}\{j=A_i\} /\left( WH p_i(j)\right),\forall j = 0,1,\dots,J$, and $A_i$ denotes the arm selected at phase $i$. 

The following result states that our Double-Restart Q-UCB algorithm achieves a sublinear dynamic regret in $T$, without requiring knowledge of the (total) variation budget $\Delta$. 

\begin{theorem}~\label{thm:borl}
	(\wmedit{Freedman}, no total budgets) For $T$ greater than some polynomial of $S, A,\Delta$ and $H$, and for any $\delta \in (0,1)$, with probability at least $1-\delta$, the dynamic regret of Double-Restart Q-UCB with Freedman bonuses and no prior knowledge of the total variation budget $\Delta$ is bounded by \wmedit{$\widetilde{O}( S^{\frac{1}{3}}A^{\frac{1}{3}} \Delta^{\frac{1}{3}} H T^{\frac{2}{3}} + H^{\frac{3}{4}}T^{\frac{3}{4}} )$}, where $\widetilde{O}(\cdot)$ hides poly-logarithmic factors. 
\end{theorem}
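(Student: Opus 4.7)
The plan is to use the standard Bandit-over-Bandit (BOB) decomposition, splitting the dynamic regret into (i) the meta-regret of the Exp3.P outer bandit relative to the best fixed arm in $\mathcal{J}$, and (ii) the cumulative regret of running RestartQ-UCB (Freedman) with that best fixed $D$-value across all phases. I would then bound each part and combine.

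For part (ii), I would first establish a per-phase regret bound. Within phase $i$, a fresh instance of Algorithm~\ref{alg:RQUCB} is run for $W$ episodes ($WH$ steps) with the chosen epoch-count parameter $D_i$, so it effectively partitions the phase into $\lceil D_i W/M\rceil$ epochs of $\lfloor M/D_i\rfloor$ episodes each. Reusing the proof of Theorem~\ref{thm:freedman} verbatim on this sub-horizon, with variation budget $\Delta_{(i)}$ restricted to phase $i$, gives a phase-$i$ regret bound of the same form as in Theorem~\ref{thm:freedman} but with $T\to WH$, $\Delta\to\Delta_{(i)}$, and $D\to D_i$. Since $\mathcal{J}$ is a geometric grid in $[\lfloor T/(SAH^2 W)\rfloor, \lfloor T/(SAH^2)\rfloor]$ with ratio $W^{1/J}\le e$ (because $J=\lceil \ln W\rceil$), for the globally optimal choice $D^\star=S^{-1/3}A^{-1/3}\Delta^{2/3}H^{-2/3}T^{1/3}$ identified in the proof of Theorem~\ref{thm:freedman} there is an arm $D^\dagger\in\mathcal{J}$ with $D^\dagger/D^\star\in[e^{-1},1]$, provided $D^\star$ lies in $\mathcal{J}$'s range; a short algebraic check shows that setting $D_i=D^\dagger$ in the per-phase bound corresponds, up to constants, to running the inner algorithm with the locally optimal epoch length in each phase. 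Summing over phases and using $\sum_i \Delta_{(i)}\le\Delta$ together with concavity of $x\mapsto x^{1/3}$ (Jensen) collapses the contribution to $\widetilde O(S^{1/3}A^{1/3}\Delta^{1/3}HT^{2/3})$. The boundary regimes $D^\star<\min\mathcal{J}$ (very small $\Delta$, handled by a near-stationary bound at the smallest arm) and $D^\star>\max\mathcal{J}$ (very large $\Delta$, handled by the trivial $\mathcal{R}\le MH$) are treated as separate cases and absorbed into the target bound.

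For part (i), I would invoke the high-probability regret guarantee for Exp3.P in \citet{auer2002nonstochastic} with the parameter choices in \eqref{eqn:gamma}. Over $N=\lceil M/W\rceil$ rounds, $J+1=\Theta(\log T)$ arms, and rewards normalized into $[0,1]$ by the divisor $WH$, the meta-regret in normalized units is $\widetilde O(\sqrt{(J+1)N})$. Rescaling by $WH$ and plugging in $W=\sqrt{HT}$, $M=T/H$, and hence $N=\sqrt{T/H^3}$, gives $\widetilde O(WH\sqrt{N})=\widetilde O(H\sqrt{WM})=\widetilde O(H^{3/4}T^{3/4})$, matching the additive term in the claim. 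Adding parts (i) and (ii) then produces the stated bound.

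I expect the main obstacle to lie in the per-phase step of part (ii): one must verify carefully that forcing an artificial restart every $W$ episodes, imposed by the BOB meta-procedure, inflates the inner regret only by quantities already absorbed in $H^{3/4}T^{3/4}$, and that the constant-factor ($\le e$) discretization of $D$ onto $\mathcal{J}$ is tight enough that the $S^{1/3}A^{1/3}\Delta^{1/3}$ scaling survives after summing over phases. The per-phase dynamic regret bound and the Exp3.P high-probability bound are otherwise direct applications of existing results, requiring no new technical ingredients beyond what was developed for Theorem~\ref{thm:freedman}.
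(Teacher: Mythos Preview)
Your overall architecture---the BOB decomposition into an Exp3.P meta-regret plus the regret of running the inner algorithm with the best fixed arm $D^\dagger$, followed by a case analysis on whether $D^\star$ falls in the range of $\mathcal{J}$---is exactly what the paper does, and your computation of the $\widetilde O(H^{3/4}T^{3/4})$ meta-regret term is correct.

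There is, however, a genuine gap in your part~(ii). The claim that ``setting $D_i=D^\dagger$ in the per-phase bound corresponds, up to constants, to running the inner algorithm with the locally optimal epoch length in each phase'' is false: $D^\dagger$ is tuned to the \emph{global} $\Delta$, not to the phase-local $\Delta_{(i)}$, so the per-phase regret with $D_i=D^\dagger$ is \emph{not} of the optimized form $S^{1/3}A^{1/3}\Delta_{(i)}^{1/3}H(WH)^{2/3}$. Consequently the Jensen step on $x\mapsto x^{1/3}$ does not apply as written. The fix is simpler than what you propose: keep the per-phase bound in its \emph{pre-optimized} form, which for a fixed $D^\dagger$ reads (from the intermediate step of Theorem~\ref{thm:freedman}) roughly $\widetilde O\bigl(W\sqrt{SAH^3 D^\dagger/M} + (M/D^\dagger)H^2\Delta_{(i)}\bigr)$. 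This is \emph{linear} in $\Delta_{(i)}$, so summing over the $\lceil M/W\rceil$ phases and using $\sum_i\Delta_{(i)}\le\Delta$ gives directly $\widetilde O\bigl(\sqrt{SAT D^\dagger H^2} + TH\Delta/D^\dagger\bigr)$---no concavity argument needed. This is precisely the form the paper writes down (its equation~\eqref{eqn:decompose2}), after which optimizing over $D^\dagger\approx D^\star$ yields the claimed $S^{1/3}A^{1/3}\Delta^{1/3}HT^{2/3}$.

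Two smaller points: (a) for the Freedman version the optimal epoch count is $D^\star=S^{-1/3}A^{-1/3}\Delta^{2/3}T^{1/3}$, without the $H^{-2/3}$ factor you quote (that factor appears only in the Hoeffding analysis); (b) the case $D^\star>\max\mathcal{J}$ does not arise under the standing assumption that $T$ is large enough, so only the low-$\Delta$ boundary needs separate treatment.
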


The regret bound in Theorem~\ref{thm:borl} consists of two terms: The first term is the dynamic regret of using the optimal candidate value $D^\dagger\in\mc{J}$ of the number of epochs. This term is in the same order as the known-variation case (Theorem~\ref{thm:freedman}), because we have discretized the candidate value set $\mc{J}$ at a proper granularity such that the optimal candidate value $D^\dagger\in\mc{J}$ approximates the actual optimal value $D^\star$. The second regret term in Theorem \ref{thm:borl} is caused by the regret of learning the optimal candidate value inside $\mc{J}$ using the Exp3.P algorithm. Due to the additional step of estimating the unknown variation budget, the overall dynamic regret bound becomes slightly worse in terms of its dependence on $T$ (from $\widetilde{O}(T^{\frac{2}{3}})$ in Theorem \ref{thm:freedman} to $\widetilde{O}(T^{\frac{3}{4}})$). Such a degradation seems unavoidable under the current framework as it has also appeared in a similar bandit scenario~\citep{cheung2019learning}. 

\begin{remark}[Comparison with \citet{CheungSLZ20}]
	We follow the Bandit-over-RL technique to utilize a separate bandit algorithm to select the key parameters for our algorithm. But we have to emphasize that the resulting algorithm is simpler and more practical for implementation. This is because our Double-Restart Q-UCB algorithm is essentially running a stationary Q-UCB algorithm in between restarts. In contrast, the algorithm in \citet{CheungSLZ20}) relies on a carefully tuned sliding-window update schedule. More importantly, we point out that such a design (together with our new analysis) can lead to an improved dynamic regret bound in terms of $S$ and $A$ (even with the Hoeffding-style bonus terms similar to \citet{CheungSLZ20}). This exhibits the advantage of our design compared to that of \citet{CheungSLZ20}, which combines restart and sliding-window.
\end{remark}

\section{Lower Bounds}\label{sec:lower_bound}
In this section, we provide information-theoretical lower bounds of the dynamic regret to characterize the fundamental limits of any algorithm in non-stationary RL.

\begin{theorem}~\label{thm:lb}
	For any algorithm, there exists an episodic non-stationary MDP such that the dynamic regret of the algorithm is at least $\Omega( S^{\frac{1}{3}}A^{\frac{1}{3}} \Delta^{\frac{1}{3}} H^{\frac{2}{3}} T^{\frac{2}{3}} )$.
\end{theorem}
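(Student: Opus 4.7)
I would adopt the standard divide-and-conquer strategy for non-stationary online learning lower bounds, in the spirit of Besbes et al.\ (2014) for non-stationary bandits and Jaksch et al.\ (2010) for non-stationary average-reward RL. Specifically, partition the $M$-episode horizon into $L$ consecutive blocks of $K=M/L$ episodes each (i.e.\ $\tau:=KH=T/L$ time steps per block), where $L\in\{1,\dots,M\}$ is a parameter to be optimized. Inside each block the MDP is held stationary, and all non-stationarity is concentrated at the $L-1$ block boundaries. The dynamic regret then decomposes episode-by-episode into a sum of $L$ per-block regrets against independently chosen stationary hard instances; I will pick $L$ to trade off the per-block stationary regret against the variation-budget cost of the $L-1$ switches.

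Within each block I would embed a JAO-style hard episodic MDP (as in Jaksch et al.\ 2010 / Azar et al.\ 2017 / Domingues et al.\ 2021). The transition kernel is identical across instances, and at one ``critical'' state-step pair $(s^\star,h^\star)$ a bandit subproblem is hidden: some designated action has mean reward $\tfrac12+\epsilon$ while every other action has mean reward $\tfrac12$. A standard Le Cam / Fano KL-divergence argument gives, for any learner and a uniformly random choice of the good action, a per-block expected regret of $\Omega\bigl(\min(\epsilon\tau,\,H\sqrt{SA\tau})\bigr)$, matching the tight stationary episodic lower bound $\Omega(\sqrt{H^2 SAT})$. Between two consecutive blocks the adversary swaps the identity of the good action, changing the reward function at that single cell by $\Theta(\epsilon)$; this contributes $\Theta(\epsilon)$ to $\Delta_r$ and $0$ to $\Delta_p$ per switch, so the total variation across the $L-1$ boundaries is $\Delta=O(L\epsilon)$.

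It remains to tune $\epsilon$ and $L$. Choose $\epsilon=\Theta(H\sqrt{SA/\tau})=\Theta(H\sqrt{SAL/T})$ so that the two regimes $\epsilon\tau$ and $H\sqrt{SA\tau}$ coincide, locking the per-block lower bound at $\Omega(H\sqrt{SA\tau})$. The variation-budget constraint $L\epsilon=O(\Delta)$ then reduces to $H\sqrt{SAL^3/T}\lesssim\Delta$, i.e.\ $L\leq c\,\bigl(\Delta^2 T/(H^2 SA)\bigr)^{1/3}$; saturating this bound yields $L=\Theta\bigl(\Delta^{2/3}T^{1/3}/(H^{2/3}S^{1/3}A^{1/3})\bigr)$. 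Summing across blocks via Yao's minimax principle,
\[
\text{Regret}\;\geq\; L\cdot\Omega\bigl(H\sqrt{SA\tau}\bigr)\;=\;\Omega\bigl(H\sqrt{SALT}\bigr)\;=\;\Omega\bigl(S^{1/3}A^{1/3}\Delta^{1/3}H^{2/3}T^{2/3}\bigr),
\]
which matches the statement of the theorem.

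The main obstacle in turning this sketch into a rigorous proof is cross-block information leakage: \emph{a priori} a learner could use data collected in blocks $\ell'<\ell$ to accelerate learning in block $\ell$, invalidating a na\"ive blockwise summation. I would neutralize this by drawing the identity of the good action in each block \emph{independently and uniformly at random} from the $A$ available actions (a product prior over blocks), so that the posterior for block $\ell$ remains uniform given any past data and each per-block lower bound applies to a fresh, previously unseen instance. Two minor technicalities to dispatch along the way are: (i) the oblivious choice of initial states $s_1^m$ is handled by making $s^\star$ reachable with constant probability from every state, so the bandit subproblem is engaged $\Omega(K)$ times per block; and (ii) the partition is non-degenerate ($1\le L\le M$, with $\epsilon\le \tfrac12$ so that rewards stay in $[0,1]$) only when $T\gtrsim H^2 SA/\Delta^2$---outside this regime the target bound $\Omega(S^{1/3}A^{1/3}\Delta^{1/3}H^{2/3}T^{2/3})$ is already dominated by the classical stationary lower bound $\Omega(\sqrt{H^2 SAT})$ and so follows for free.
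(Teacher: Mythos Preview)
Your high-level strategy---partition into $L$ stationary blocks, invoke a per-block stationary lower bound, re-randomize the hard instance independently across blocks, and optimize $L$ against the variation budget---is exactly the paper's strategy. The discrepancy is in the hard instance itself, and it is a genuine gap.

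You hide the non-stationarity in a \emph{reward} perturbation at a \emph{single} state-step cell $(s^\star,h^\star)$, and then claim a per-block regret of $\Omega\bigl(\min(\epsilon\tau,\,H\sqrt{SA\tau})\bigr)$. That claim does not follow from your construction. With the bandit living at one step only, each episode yields at most one sample and at most $\epsilon$ instantaneous regret, so over $K=\tau/H$ episodes the maximum achievable regret is $\epsilon K=\epsilon\tau/H$, not $\epsilon\tau$; and the information-theoretic bound is $\Omega(\sqrt{SA\,K})=\Omega(\sqrt{SA\tau/H})$, not $\Omega(H\sqrt{SA\tau})$. Plugging this corrected per-block bound through your optimization yields only $\Omega(S^{1/3}A^{1/3}\Delta^{1/3}H^{-2/3}T^{2/3})$, off by $H^{4/3}$ from the target. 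The tight stationary episodic lower bound $\Omega(\sqrt{H^2SAT})$ you cite is \emph{not} attained by a single-cell reward bandit; it requires either (i) $H$ independent hard subproblems, one per step, or (ii) a JAO-type \emph{transition} perturbation in which a wrong action costs $\Theta(H\epsilon)$ in future value rather than $\epsilon$ in immediate reward.

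The paper takes route (ii): it chains $H$ copies of the JAO two-state MDP, perturbs the \emph{transition} at the $\so$ state by $\epsilon=\Theta(\sqrt{SA/T_0})$, and obtains the per-segment bound $\Omega(H\sqrt{SAT_0})$ from the mixing structure (a wrong action delays reaching $\si$ by $\Theta(1/\delta)=\Theta(H)$ steps). Because all $H$ copies change simultaneously at each switch, the variation cost per switch is $2\epsilon H$, not $\epsilon$; the budget constraint becomes $2\epsilon HL\le\Delta$, which after optimization still yields $\Omega(S^{1/3}A^{1/3}\Delta^{1/3}H^{2/3}T^{2/3})$. Note in particular that the paper's non-stationarity lives entirely in $\Delta_p$ with $\Delta_r=0$, the opposite of your plan. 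If you prefer route (i) with pure reward perturbations, you must place a bandit at every step $h$, accept a per-switch cost of $\Theta(\epsilon H)$ in $\Delta_r$, and even then you only recover $\Omega(\sqrt{HSA\tau})$ per block, which is weaker by $\sqrt{H}$; the transition-based construction is what buys the full $H$ factor.
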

\begin{proofsketch}
	The proof of our lower bound relies on the construction of a ``hard instance'' of non-stationary MDPs. The instance we construct is essentially an MDP with piecewise constant dynamics on each \emph{segment} of the horizon, and its dynamics experience an abrupt change at the beginning of each new segment. Specifically, we divide the horizon $T$ into $L$ segments\footnote{The definition of segments is irrelevant to, and should not be confused with, the notion of epochs we previously defined. }, where each segment has $T_0 \defeq \floor{\frac{T}{L}}$ steps and contains $M_0\defeq \floor{\frac{M}{L}}$ episodes. Within each segment, the system dynamics of the MDP do not vary, and we construct the dynamics for each segment in a way such that the instance is a hard instance of stationary MDPs on its own. The MDP within each segment is essentially similar to the hard instances constructed in \citet{osband2016lower,jin2018q}. Between two consecutive segments, the dynamics of the MDP change abruptly, and we let the dynamics vary in a way such that no information learned from previous interactions with the MDP can be used in the new segment. In this sense, the agent needs to learn a new hard MDP in each segment. Finally, optimizing the value of $L$ and the variation magnitude between consecutive segments (subject to the constraints of the total variation budget) leads to our lower bound. 
\end{proofsketch}

\begin{remark}
	We emphasize that in our construction of the worst-case non-stationary MDP, we only let the state transition kernel vary over time but keep the reward functions fixed. By doing so, we are able to provide a lower bound of order $\Omega( S^{\frac{1}{3}}A^{\frac{1}{3}} \Delta^{\frac{1}{3}} H^{\frac{2}{3}} T^{\frac{2}{3}} ).$ Recall that the upper bound stated in Theorem \ref{thm:freedman} is  $O( S^{\frac{1}{3}}A^{\frac{1}{3}} \Delta^{\frac{1}{3}} HT^{\frac{2}{3}} ),$ and hence our upper and lower bounds match in terms of $\Delta~(=\Delta_r+\Delta_p).$
\end{remark}
\begin{remark}[Tightness of Our Results]
	For our setting, we conjecture that the lower bound can be improved. Our current construction of the lower bound relies on a chain of $H$ copies of ``JAO MDPs'' \cite{jaksch2010near}. The non-stationarity is achieved by changing the transitions abruptly after a fixed time period, and such a change applies \emph{simultaneously} across all $H$ copies of JAO MDPs. One possible direction is to construct the lower bound instances such that the state transition kernel is allowed to vary within the same episode, which we have not taken advantage of. Including this extra ingredient into the construction could potentially lead to a sharper lower bound, and we leave this as future work.
\end{remark}

A useful side result of our proof is the following lower bound for non-stationary RL in the un-discounted setting, which is the same setting as studied in~\citet{gajane2018sliding},~\citet{ortner2020variational} and \citet{cheung2020reinforcement}. 

\begin{proposition}\label{corollary}
	Consider a reinforcement learning problem in un-discounted non-stationary MDPs with horizon length $T$, total variation budget $\Delta$, and maximum MDP diameter $D$~\citep{cheung2020reinforcement}. For any learning algorithm, there exists a non-stationary MDP such that the dynamic regret of the algorithm is at least $\Omega( S^{\frac{1}{3}}A^{\frac{1}{3}} \Delta^{\frac{1}{3}} D^{\frac{2}{3}} T^{\frac{2}{3}} )$.
\end{proposition}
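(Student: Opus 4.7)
The plan is to adapt the construction used in the proof sketch of Theorem~\ref{thm:lb} from the episodic setting to the un-discounted setting, with the diameter $D$ taking the role previously played by the episode length $H$. In the episodic construction, each segment is built from a chain-type hard instance of \citet{osband2016lower,jin2018q}, which yields a stationary per-segment lower bound proportional to $H\sqrt{SAT_0}$. For the un-discounted case, I would instead use the ``JAO MDP'' hard instances of \citet{jaksch2010near} as building blocks: these are known to produce a stationary lower bound of $\Omega(\sqrt{DSAT_0})$ on any MDP of diameter $\Theta(D)$, which is exactly the per-segment ingredient we need.

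Concretely, I would partition the horizon $T$ into $L$ segments of length $T_0 = \lfloor T/L \rfloor$. Inside each segment, the transition kernel is held stationary and equal to a JAO MDP in which a single ``good'' state--action pair has a slightly biased transition probability of magnitude $\epsilon$ toward a rewarding state, for a parameter $\epsilon$ to be chosen later. Between consecutive segments, the identity of the good state--action pair is permuted so that information learned in one segment provides no advantage in the next, forcing the agent to relearn from scratch. Crucially, the transition kernel changes at only one time step per segment boundary, and the change occurs at a single state--action pair with $L^1$-magnitude $O(\epsilon)$; therefore the variation budget accumulated across $L-1$ boundaries satisfies $\Delta_p = O(L\epsilon)$, while the reward function may be kept fixed throughout.

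The regret is then bounded from below by summing the per-segment JAO lower bounds, giving at least $L\cdot\Omega(\sqrt{DSAT_0}) = \Omega(\sqrt{LDSAT})$ in the non-trivial regime $T_0 \gtrsim DSA$. Subject to the variation constraint $L\epsilon \lesssim \Delta$ and to $\epsilon$ chosen at the scale that saturates the JAO lower bound on each segment (so that each segment is a ``genuinely hard'' stationary instance), a direct optimization over $L$ yields the target $\Omega(S^{1/3}A^{1/3}\Delta^{1/3}D^{2/3}T^{2/3})$. This optimization is structurally the same as the one in Theorem~\ref{thm:lb}, with $D^{2/3}$ playing the role of $H^{2/3}$.

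The main obstacle, inherited from Theorem~\ref{thm:lb}, is to verify two delicate points. First, each segment-boundary permutation of the good action contributes only $O(\epsilon)$ rather than $O(1)$ to the total variation under the standard un-discounted variation-budget definition; this requires confirming that the affected state--action pair is unique and that the JAO construction can be symmetrized so that only the bias direction (not its magnitude) differs between neighboring segments. Second, every MDP in the constructed family must retain diameter $\Theta(D)$, uniformly over the permutations; this follows by checking that the underlying transition graph is invariant under the relabeling, but the check needs to be made explicit. Once these points are in place, the information-theoretic lower bound on per-segment regret carries over via the same KL/Pinsker-type argument as in Theorem~\ref{thm:lb}, and the proposition follows by straightforward bookkeeping.
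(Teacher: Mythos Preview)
Your proposal is correct and follows essentially the same route as the paper: the paper also partitions the horizon into $L$ segments, places a JAO MDP from \citet{jaksch2010near} with diameter $D=1/\delta$ in each segment to obtain a per-segment regret of $\Omega(\sqrt{SAT_0D})$, resamples the good action $a^\star$ at each boundary (incurring variation $2\epsilon$ per switch), and then optimizes $L$ under the constraint $2\epsilon L\le\Delta$ with $\epsilon=\Theta(\sqrt{SA/(T_0D)})$ to arrive at $\Omega(S^{1/3}A^{1/3}\Delta^{1/3}D^{2/3}T^{2/3})$. The two verification points you flag---that each switch contributes only $O(\epsilon)$ to the variation budget and that the diameter stays $\Theta(D)$ uniformly---are handled in the paper exactly as you outline.
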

\section{Simulations}\label{sec:simulations}
In this section, we empirically evaluate RestartQ-UCB on reinforcement learning tasks with various types of non-stationarity. We compare RestartQ-UCB with three baseline algorithms: LSVI-UCB-Restart~\citep{zhou2020nonstationary}, Q-Learning UCB, and Epsilon-Greedy~\citep{watkins1989learning}. LSVI-UCB-Restart is a state-of-the-art non-stationary RL algorithm that combines optimistic least-squares value iteration with periodic restarts. 
Q-Learning UCB is simply our RestartQ-UCB algorithm with no restart. It is a Q-learning based algorithm that uses upper confidence bounds to guide the exploration. Epsilon-Greedy is a restart-based algorithm that uses an epsilon-greedy strategy for action selection.

\begin{figure*}[t]
	\vspace{-0.5cm}\centering
	\subfigure[Abrupt variations]{\label{subfig:1}\includegraphics[width=0.35\textwidth]{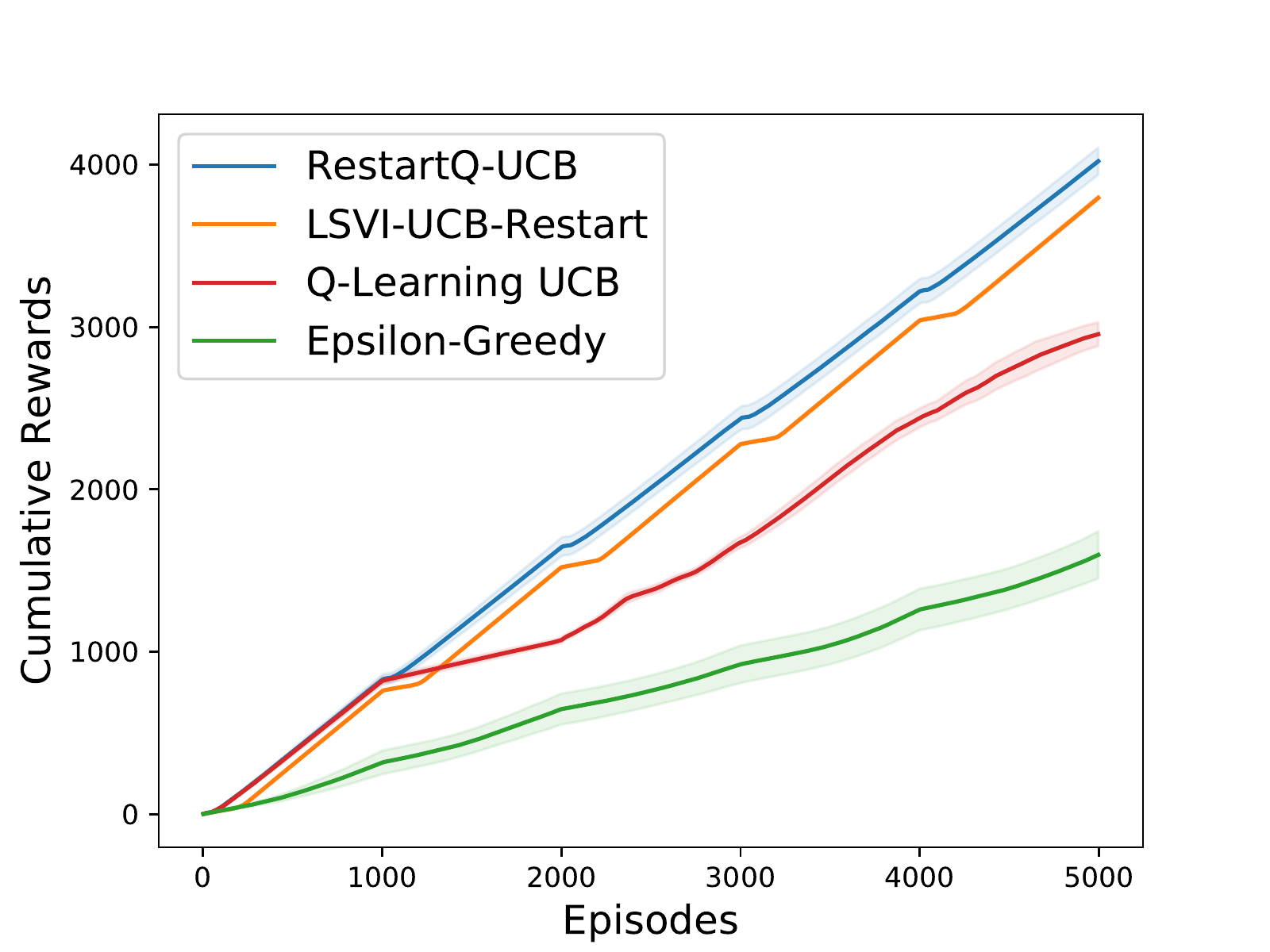}}
	\hspace{-.4cm}\subfigure[Gradual variations]{\label{subfig:2}\includegraphics[width=0.35\textwidth]{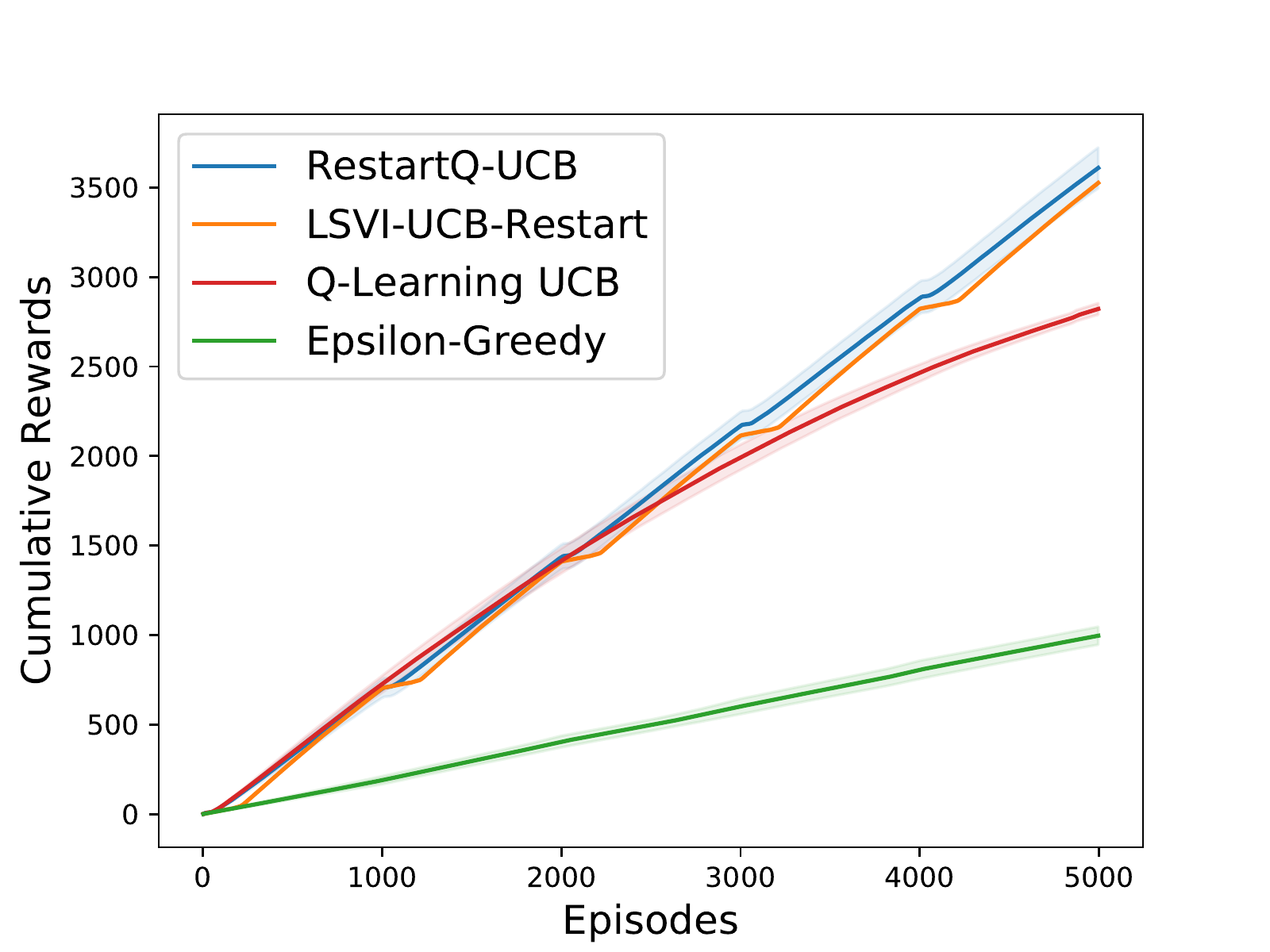}}
	\hspace{-.4cm}\subfigure[Time usage]{\label{subfig:3}\raisebox{10mm}{\includegraphics[width=0.32\textwidth]{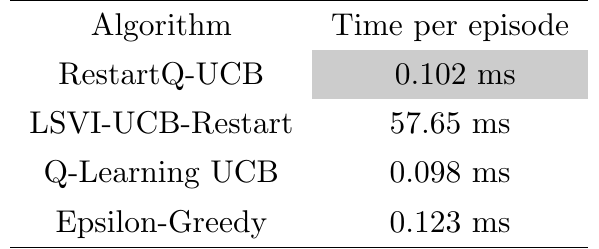}}}
	\caption{Cumulative rewards of the four algorithms under (a) abrupt variations, and (b) gradual variations, respectively, as well as their (c) time usage. Shaded areas denote the standard deviations of rewards. Note that RestartQ-UCB significantly outperforms Q-Learning UCB and Epsilon-Greedy, and matches LSVI-UCB-Restart while being \emph{much more} time-efficient. }
	\label{fig:simulations}
\end{figure*}

We evaluate the cumulative rewards of the four algorithms on a variant of a reinforcement learning task named Bidirectional Diabolical Combination Lock~\citep{agarwal2020pc,misra2020kinematic}. This task is designed to be particularly difficult for \emph{exploration}. We introduce two types of non-stationarity to the task, namely \emph{abrupt} variations and \emph{gradual} variations. A detailed discussion on the task settings as well as the configuration of the hyper-parameters is deferred to Appendix~\ref{app:simulations}. The cumulative rewards of the four algorithms in the abruptly-changing and gradually-changing environments are shown in Figures~\ref{subfig:1} and~\ref{subfig:2}, respectively. All results are averaged over $30$ runs. 

As we can see, RestartQ-UCB outperforms Q-Learning UCB and Epsilon-Greedy under both types of environment variations. For the abruptly-changing environment as an example, RestartQ-UCB achieves $1.36$ and $2.52$ times of the cumulative rewards of Q-Learning UCB and Epsilon-Greedy, respectively. This demonstrates the importance of both addressing the environment variations (using restarts) and actively exploring the environment (using UCB-based bonus terms) in non-stationary RL. LSVI-UCB-Restart nearly matches the performance of RestartQ-UCB, which is unsurprising because both of them use the restarting strategy and optimistic exploration. Nevertheless, LSVI-UCB-Restart requires a higher time and space complexity. It needs to store all the history information in one epoch and solve a regularized least-squares minimization problem at every time step. This is indeed evidenced by our simulation results (shown in Figure~\ref{subfig:3}) that RestartQ-UCB only takes $0.18\%$ of the computation time of LSVI-UCB-Restart. 

\begin{remark}
	The heavy computation in LSVI-UCB-Restart mostly comes from the usage of a high-dimensional feature. In our simulations, we followed Example 2.1 in \citet{jin2019learning} to convert a linear MDP algorithm to a tabular one, which results in a feature dimension of $d= S\times A$. This is essentially the most efficient feature encoding when no special structure is imposed on the tabular MDP. We believe that designing low-dimensional features for specific MDP instances can possibly reduce the computations for LSVI-UCB-Restart by a large amount, and is an interesting future direction for learning in linear MDPs per se. 
\end{remark}

\section{Application to Multi-Agent RL}\label{sec:application}
\wmedit{In this section, we discuss an application of our non-stationary RL method to multi-agent RL in episodic stochastic teams, which by nature leads to a non-stationary RL problem from each agent's perspective. Such a decentralized multi-agent scenario also helps us demonstrate the significance and flexibility of the model-free approach. }

\subsection{Problem Setup}
In general, an $N$-player episodic stochastic \wmedit{team is defined by a tuple $(\mc{N}, H, \mc{S},\{\mc{A}^{(i)}\}_{i=1}^N, r, P)$}, where (1) $\mc{N} = \{1,2,\dots,N\}$ is the set of agents; (2) $H\in\mb{N}_+$ is the number of time steps in each episode; (3) $\mc{S}$ is the finite state space; (4) $\mc{A}^{(i)}$ is the finite action space for agent $i\in\mc{N}$; (5) $r_{h}:\mc{S}\times \mc{A} \ra [0,1]$ is the \wmedit{team reward function at step $h\in[H]$ common to all the agents $i\in\mc{N}$}, where $\mc{A} = \times_{i=1}^N \mc{A}^{(i)}$; and (6) $P_{h}: \mc{S}\times \mc{A} \ra \Delta(\mc{S})$ is the transition kernel at step $h\in[H]$, where the next state depends on the current state and the joint actions of all the agents. The game lasts for $M$ episodes, and we let $T = MH$ be the total number of time steps. At each time step $(m,h)$, the agents observe the state $s_h^m \in \mc{S}$, and take actions $a^{{(i)},m}_h \in\mc{A}^{(i)}, i\in\mc{N}$ simultaneously.\footnote{Note that we use superscripts in parentheses to index the agents, while a superscript with no parenthesis denotes the index of an episode.} We let $a_h^m = (a_h^{(1),m},\dots, a_h^{(N),m}) $. \wmedit{All the agents receive a common reward with an expected value of $r_h(s_h^m,a_h^m)$}, and the environment transitions to the next state $s_{h+1}^m\sim P_h(\cdot | s_h^m,a_h^m)$. For each agent $i$, a policy is a mapping from the time index and state space to (possibly a distribution over) the action space. We denote the set of policies for agent $i$ by $\Pi^{(i)}=\{ \pi^{(i)} : [M]\times[H]\times \mc{S}\ra \Delta(\mc{A}^{(i)})\}$. The set of joint policies are denoted by $\Pi = \times_{i=1}^N \Pi^{(i)}$. \wmedit{Each agent seeks to find a policy such that the collective choice maximizes the cumulative team reward. }

For notational convenience, and without much loss of conceptual generality, we consider two-player teams, i.e., $N = 2$. We consider the problem where we can control the policy of agent 1, while agent 2 is an opponent that is adapting its own policy in an unknown way. Achieving sublinear regret in the face of an arbitrarily changing opponent is known
to be computationally hard~\citep{radanovic2019learning}. Therefore, existing works~\citep{radanovic2019learning,lee2020linear} often focus on a setting where the opponent is only ``slowly changing'' its policy over time. One such example is when the opponent is using a relatively stable learning algorithm. \wmedit{We also focus on the \emph{decentralized} setting, a more practical multi-agent RL paradigm, where an agent \emph{cannot} observe the actions and rewards of the other agent. }

A joint policy induces a probability measure on the sequence of states and joint actions. For a joint policy $\pi = (\pi^{(1)},\pi^{(2)}) \in \Pi$, and for each time step $(m,h)\in[M]\times [H]$, state $s \in \mc{S}$, we define the \wmedit{state value function for the agents as follows}:
\[
\begin{aligned}
V_{h}^{m,\pi}(s)&\defeq\mathbb{E}\left[\sum_{h^{\prime}=h}^{H} r\left(s_{h^{\prime}}, \pi_{h^{\prime}}^{(1),m}\left(s_{h^{\prime}}\right),  \pi_{h^{\prime}}^{(2),m}\left(s_{h^{\prime}}\right) \right) \mid s_{h}=s\right].
\end{aligned}
\]
For a joint policy $(\pi^{(1)},\pi^{(2)})$, we again evaluate the optimality of agent 1's policy $\pi^{(1)}$ in terms of its \emph{dynamic regret}, which compares the agent's policy with the optimal policy of each individual episode in hindsight: 
\[
\mc{R}^{\pi^{(2)}}(\pi^{(1)}, M) \defeq \sum_{m=1}^{M}\left( \sup_{\pi^{{(1)}\star}} V_1^{m,(\pi^{{(1)}\star},\pi^{(2)})} (s_{1}^{m})-V_{1}^{m,(\pi^{(1)},\pi^{(2)})}(s_{1}^{m})\right).
\]
The initial state of each episode $s_1^m$ is again chosen by an oblivious adversary. 



We model the slowly-changing behavior of agent 2 by requiring it to have a \wmedit{low \emph{switching cost}~\citep{bai2019provably,gao2021provably} defined as follows.}

\begin{definition}
	The switching cost between any pair of policies $(\pi,\pi')$ is the number of $(h, s)$ pairs on which $\pi$ and $\pi'$ act differently:
	\[
	n_{\text{switch}}(\pi,\pi') \defeq \abs{\left\{ (h,s)\in[H]\times \mc{S}: \pi_h(s)\neq \pi_h'(s) \right\}}. 
	\]
	For a policy trajectory $(\pi^1,\dots,\pi^M)$ across $M$ episodes, its switching cost is defined as
	$
	N_{\text{switch}} \defeq \sum_{m=1}^M n_{\text{switch}}(\pi^m, \pi^{m+1}). 
	$
\end{definition}

\wmedit{We characterize the behavior of agent 2 by assuming that the switching cost of its policy trajectory is upper bounded by $O(T^\beta)$ for some $0 < \beta < 1$. Many existing RL algorithms~\citep{bai2019provably,zhang2020almost} satisfy this upper bound. }

\subsection{Learning Team-Optimality}

Our non-stationary RL algorithm can be readily applied to learning team-optimal policies in ``smooth games'', which is the setting considered in~\citet{radanovic2019learning}. This corresponds to the setting where a team of agents learn to collaborate. \wmedit{We define team-optimality as the joint policy of the agents that induces the highest possible accumulated reward. }

\begin{definition}
	In a two-player team, a joint policy $\pi^\star = (\pi^{{(1)}\star},\pi^{{(2)}\star}) \in \Pi$ is called \emph{team-optimal} if 
	\[
	V_h^{(\pi^{(1)\star},\pi^{(2)\star})}(s) = \sup_{\pi^{(1)},\pi^{(2)}} V_h^{(\pi^{(1)},\pi^{(2)})}(s), \forall s \in \mc{S},h\in[H],
	\] 
	where $V_h^{(\pi^{(1)},\pi^{(2)})}(s) \defeq \mathbb{E}[\sum_{h'=h}^{H} r_{h'}(s_{h'}, \pi_{h'}^{(1)}\left(s_{h'}\right),  \pi_{h'}^{(2)}(s_{h'}) ) \mid s_{h}=s]$ is the value function. 
\end{definition}

Since we cannot control the behavior of agent 2, its behavior might be sub-optimal and drive us away from team-optimality. To avoid such scenarios, we impose a structural assumption that allows us to quantify the distance from optimality. In particular, we assume that the team is $(\lambda,\mu)$-smooth, following the definition in \citet{radanovic2019learning}. 

\begin{definition}
 	(Adapted from Definition 1 in~\citet{radanovic2019learning}) A two-player stochastic team is $(\lambda,\mu)$-smooth if there exists a pair of policies $(\pi^{(1)\star},\pi^{(2)\star})$ such that for every policy pair $(\pi^{(1)},\pi^{(2)})$ and every $h\in[H], s\in\mc{S}$:
 	\[
 	\begin{aligned}
 	V^{(\pi^{(1)\star},\pi^{(2)\star})}_h(s)&\geq V^{(\pi^{(1)},\pi^{(2)})}_h(s),\\
 	V^{(\pi^{(1)\star},\pi^{(2)})}_h(s)&\geq \lambda \cdot V^{(\pi^{(1)\star},\pi^{(2)\star})}_h(s) - \mu\cdot V^{(\pi^{(1)},\pi^{(2)})}(s).
 	\end{aligned}
 	\]
\end{definition}
The $(\lambda,\mu)$-smoothness ensures that agent 2's sub-optimal behavior only has a bounded negative impact on the joint value. This notion of smoothness is motivated by the definition of smooth games in~\citet{roughgarden2009intrinsic} and \citet{syrgkanis2015fast}, as stated in \citet{radanovic2019learning}. \wmedit{Applying our RestartQ-UCB algorithm for agent 1 would lead to the following theorem, which implies that the time-average return of the agents converges to a $\frac{\lambda}{1+\mu}$ factor of the team-optimal value as $T$ grows. }

\begin{theorem}\label{thm:team}
	Let $\pi^{(2)}$ denote the policy of agent 2, and suppose that the switching cost of agent 2 satisfies $N_{\text{switch}} = O(T^{\beta})$ for $0 < \beta < 1$. Assume that the team problem is $(\lambda,\mu)$-smooth. Let agent 1 run the RestartQ-UCB algorithm, and let $\pi^{(1)}$ denote its induced policy. For $T$ large enough, the return of the algorithm is lower bounded by:
	\[
	\sum_{m=1}^M V_1^{(\pi^{(1)},\pi^{(2)})}(s_1^m) \geq \frac{\lambda}{1+\mu} \left[ \sum_{m=1}^M V_1^{(\pi^{(1)\star},\pi^{(2)\star})}(s_1^m) - \widetilde{O}(T^{\frac{\beta+2}{3}})\right]. 
	\] 
\end{theorem}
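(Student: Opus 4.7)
The plan is to reduce the problem to the single-agent non-stationary MDP that agent 1 faces once agent 2's policy trajectory is fixed, and then combine the dynamic regret guarantee of RestartQ-UCB with the $(\lambda,\mu)$-smoothness inequality. From agent 1's viewpoint at episode $m$, agent 2's (deterministic) policy $\pi^{(2),m}_h(\cdot)$ induces an effective reward $\tilde r^m_h(s,a^{(1)}) = r_h(s,a^{(1)},\pi^{(2),m}_h(s))$ and an effective transition kernel $\tilde P^m_h(\cdot\mid s,a^{(1)}) = P_h(\cdot\mid s,a^{(1)},\pi^{(2),m}_h(s))$, so that all non-stationarity seen by agent 1 is produced by agent 2's policy switches. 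The overall strategy is therefore: (i) translate $N_{\text{switch}}$ into a variation budget $\Delta$ for the induced MDP, (ii) apply Theorem~\ref{thm:freedman} to agent 1 to bound its dynamic regret against the per-episode best response, and (iii) use smoothness to convert this best-response guarantee into one against the team-optimal benchmark $(\pi^{(1)\star},\pi^{(2)\star})$.

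For step (i), I observe that at any step $h$ and any consecutive episode pair $(m,m+1)$, if $\pi^{(2),m}_h$ and $\pi^{(2),m+1}_h$ agree on every $s$ the induced reward and transition are identical at step $h$; otherwise $\sup_{s,a^{(1)}}|\tilde r^m_h - \tilde r^{m+1}_h|\leq 1$ and $\sup_{s,a^{(1)}}\|\tilde P^m_h - \tilde P^{m+1}_h\|_1\leq 2$. Bounding the number of such ``activated'' $(m,h)$ pairs by the $(h,s)$-granular switching cost gives $\Delta_r\leq N_{\text{switch}}$ and $\Delta_p\leq 2N_{\text{switch}}$, hence $\Delta = O(T^{\beta})$. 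Plugging this into Theorem~\ref{thm:freedman} yields the dynamic regret bound
\[
\sum_{m=1}^{M}\sup_{\pi^{(1)}}V_1^{m,(\pi^{(1)},\pi^{(2),m})}(s_1^m) \leq \sum_{m=1}^{M}V_1^{m,(\pi^{(1),m},\pi^{(2),m})}(s_1^m) + \widetilde{O}\bigl(S^{\frac{1}{3}}A^{\frac{1}{3}} H T^{\frac{\beta+2}{3}}\bigr).
\]
For step (iii), I instantiate the smoothness inequality at $(\pi^{(1)},\pi^{(2)}) = (\pi^{(1),m},\pi^{(2),m})$ and $s=s_1^m$ to obtain $V_1^{(\pi^{(1)\star},\pi^{(2),m})}(s_1^m) \geq \lambda V_1^{(\pi^{(1)\star},\pi^{(2)\star})}(s_1^m) - \mu V_1^{(\pi^{(1),m},\pi^{(2),m})}(s_1^m)$, bound the left-hand side by $\sup_{\pi^{(1)}} V_1^{m,(\pi^{(1)},\pi^{(2),m})}(s_1^m)$, sum over $m$, substitute the regret bound from step (ii), and rearrange to isolate $\sum_m V_1^{m,(\pi^{(1),m},\pi^{(2),m})}(s_1^m)$; the constant factor $\tfrac{1}{\lambda}$ is then absorbed into $\widetilde{O}$ to deliver the claimed inequality.

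The main technical point to get right is step (i): the variation budget is defined as a sum over $h$ of a supremum over $(s,a^{(1)})$, while $N_{\text{switch}}$ is a sum over $(h,s)$ pairs, so a single switch at one state is enough to contribute a full unit of step-$h$ variation. Naively supremum-loosening before summing would pay an extra factor of $|\mathcal{S}|$ or $H$ and miss the target rate $\widetilde{O}(T^{\frac{\beta+2}{3}})$; the one-indicator-per-step argument above is what yields the tight $\Delta = O(N_{\text{switch}})$ bound. A minor additional subtlety is that Theorem~\ref{thm:freedman} is stated for an oblivious non-stationary MDP sequence, whereas agent 2 may in principle react to agent 1's history; however, since the smoothness inequality is pointwise over $(h,s)$ and the switching-cost hypothesis is assumed to hold on the realized trajectory of $\pi^{(2)}$, the reduction goes through on that realized trajectory without modification.
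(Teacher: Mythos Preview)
Your proposal is correct and follows essentially the same approach as the paper's proof: reduce agent~2's switching cost to a variation budget $\Delta=O(T^{\beta})$ for the single-agent MDP seen by agent~1, invoke the dynamic regret bound of RestartQ-UCB (Theorem~\ref{thm:freedman} or~\ref{thm:regret_no_budget}) to get $\widetilde{O}(T^{(\beta+2)/3})$, and then combine with the $(\lambda,\mu)$-smoothness inequality and rearrange. Your step~(i) is more explicit than the paper's (which simply asserts $\Delta=O(T^{\beta})$ ``by the definitions of $\Delta_r$ and $\Delta_p$''), and you correctly flag the obliviousness subtlety, which the paper also leaves implicit.
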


\begin{remark}
	(Significance of model-freeness.) Decentralized multi-agent RL is generally only possible with model-free approaches (see, e.g.,~\citet{arslan2016decentralized,tian2020provably,daskalakis2020independent}); model-based methods proceed by explicitly estimating the transition and reward functions, which crucially relies on observing the other agents' actions. This further demonstrates the flexibility and significance of model-free methods, when one addresses the non-stationarity issues in  multi-agent RL through the lens of non-stationary RL. 
\end{remark}

\section{Application to Inventory Control Across Related Products}\label{sec:application_inventory}

In this section, we discuss the application of our non-stationary RL algorithm to the problem of inventory control across related products. Different from conventional inventory control problems (e.g., \cite{HuhR09}) that only consider one product, we investigate the case where a sequence of related products are being sold, 
and the products share similar but different demand distributions. This is motivated by the sequential launch of  related products (e.g., the line of iPhone) that allows us to leverage experience from past products to inform inventory management for future ones. Following \cite{yuan2021marrying,CheungSLZ20} (who only consider a single product being sold), we focus on the setting of zero lead time, fixed cost, and lost sales.

\subsection{Problem Setup}\label{subsec:inventory}
The inventory control problem has $M$ episodes, representing $M$ different but related products. Each episode/product lasts for $H$ time steps.\footnote{We assume for simplicity that the life cycle of each product is of the same length. } For each time step $h\in[H]$ of an episode $m\in[M]$, the following sequence of events happens in order:
\begin{enumerate}
	\item The seller observes her stock level $s_h^m\geq 0$ for product $m$ at the beginning of time step $h$, and decides on the quantity $a_h^m\geq 0$ to order. 
	\item If $a_h^m>0$, the order arrives immediately, and the seller's stock level becomes $s_h^m+a_h^m$. The seller pays a fixed cost $f$ and a $c$ per-unit ordering cost. 
	\item The random demand $X_h^m$ is realized. The seller only observes the actual sales quantity, or \emph{censored demand} $Y_h^m = \min\{X_h^m,s_h^m+a_h^m\}$. She will not know the actual demand if $X_h^m \geq s_h^m+a_h^m$. \wmedit{Following prior works~\citet{cheung2020reinforcement,agarwal2020pc} and \citet{yuan2021marrying}, we assume that the demands $X_h^m$ are independent random variables over $m$, but they do not necessarily follow identical distributions since we consider different products across the episodes.  }
	\item All unfulfilled demands are permanently lost and incur a per-unit lost sales cost $p$. Excess inventory incurs a per-unit holding cost $q$. The total cost at step $h$ can be expressed as
	\[
	C_h^m(s_h^m,a_h^m) = f \cdot \mb{I}[a_h^m>0] + c\cdot a_h^m + p\cdot  \left[X_h^m -s_h^m-a_h^m\right]^+ + q\cdot \left[s_h^m+a_h^m-X_h^m\right]^+. 
	\]
	\item The inventory carried over to the next step $h+1$ is $s_{h+1}^m = [s_h^m + a_h^m - X_h^m]^+$. 
\end{enumerate}

Following \citet{CheungSLZ20,yuan2021marrying}, we assume that the seller has a finite storage capacity $S$, in the sense that she can hold at most $S-1$ units of inventory at any time. The seller's objective is to minimize her cumulative cost  $\sum_{m=1}^M\sum_{h=1}^H C_h^m(s_h^m,a_h^m)$. At the end of each episode, as a product is reaching the end of its life cycle, we assume for simplicity that the storage is emptied at no cost. Such an inventory control problem can be easily formulated as an instance of the non-stationary RL model that we defined in Section~\ref{sec:preliminary}. Concretely, we treat the stock level $s_h^m$ at the beginning of each time step as the state of the environment, and regard the order quantity $a_h^m$ as the action at the corresponding time step. Consequently, we define the state space of the problem as $\mc{S} = \{0, 1, \dots, S-1\}$, and the state-dependent action space as $\mc{A}_s = \{0,1,\dots, S-1-s\}$. One can verify that Algorithm~\ref{alg:RQUCB} and its analysis easily generalize to state-dependent action spaces. 

The reward function of the non-stationary MDP is defined as $R_h^m(s_h^m,a_h^m) = - C_h^m(s_h^m,a_h^m)$, and we let $r_h^m(s_h^m,a_h^m) = \mathbb{E}[R_h^m(s_h^m,a_h^m)]$ be the expected value of the reward. For any $s_h^m,s_{h+1}^m\in \mc{S}$ and $a_h^m\in \mc{A}_s$, we define the state transition function as 
$$
P_h^m(s_{h+1}^m \mid s_h^m,a_h^m)=\mbp\left(s_h^m+a_h^m-\min \{s_h^m+a_h^m,X_h^m\}=s_{h+1}^m\right).
$$ 
Our definitions of the policy $\pi$, the value function $V_h^{m,\pi}$, the state-action value function $Q_h^{m,\pi}$, as well as the optimal policy $\pi^\star$ and its corresponding value functions $V_h^{m,\star}, Q_h^{m,\star}$ directly carry over from Section~\ref{sec:preliminary} to this problem instance, and we do not repeat such definitions here for simplicity. The variation budget $\Delta$ is also defined in the same way as in Section~\ref{sec:preliminary}, which captures the differences in the products' demand distributions for this problem. The dynamic regret of the agent's policy is defined analogously as
\[
\mc{R}(\pi, M) = \sum_{m=1}^{M}\left(V_{1}^{m,\star}\left(s_{1}^{m}\right)-V_{1}^{m,\pi}\left(s_{1}^{m}\right)\right).
\]

\subsection{Implementation of RestartQ-UCB}
Notably, one major difference between the inventory control problem we considered in Section~\ref{subsec:inventory} and our non-stationary MDP formulation in Section~\ref{sec:preliminary} is that due to demand censoring, the seller cannot calculate the actual cost $C^m_h(s^m_h,a^m_h)$, and hence the immediate reward $R_h^m(s_h^m,a_h^m)$ is also not observable. Nevertheless, we will show that one can bypass such an issue by using a pseudo-reward technique, \wmedit{which was originally introduced for a stationary problem~\citep{agrawal2019learning}}. Specifically, for every time step $h\in[H]$ in episode $m\in[M]$, and for every state $s\in\mc{S}$ and action $a\in\mc{A}_s$, we define the pseudo-reward as
\[
R_h^{m,\text{pseudo}}(s,a) \defeq R_h^m(s,a) + p \cdot X_h^m = -f \cdot \mb{I}[a>0] - c\cdot a -q\cdot [s+a-Y_h^m]^+ +p\cdot Y_h^m,
\]
where we recall that the censored demand $Y_h^m = \min\{X_h^m,s+a\}$ is perfectly observable. Similarly, we can also define the mean pseudo-reward as
\[
r_h^{m,\text{pseudo}}(s,a) \defeq \ee[R_h^{m,\text{pseudo}}(s,a)]=\ee[R_h^m(s,a)+p\cdot X_h^m] = r_h^m(s,a) + p\cdot \ee[X_h^m]. 
\]
Therefore, the mean pseudo-reward can be considered as shifting the mean reward function uniformly by an amount of $p\cdot \ee[X_h^m]$. Without loss of generality, we normalize the pseudo-reward to the range $[0,1]$. We use the tuple $\mc{M} = \left\{\mc{S},\mc{A}, H, \{P_h^m\}_{m\in[M],h\in[H]}, \{r_h^{m}\}_{m\in[M],h\in[H]}\right\}$ to denote the non-stationary MDP with respect to the original reward function, and let $\mc{M}^{\text{pseudo}} = \left\{\mc{S},\mc{A}, H, \{P_h^m\}_{m\in[M],h\in[H]}, \{r_h^{m,\text{pseudo}}\}_{m\in[M],h\in[H]}\right\}$ be the one corresponding to the pseudo-reward. We further define $\pi^{\star, \text{pseudo}}$ to be the (episode-wise) optimal policy for $\mc{M}^{\text{pseudo}}$, and let $V_h^{m,\star,\text{pseudo}}$ and $Q_h^{m,\star,\text{pseudo}}$, respectively, be the corresponding value function and state-action value function. 

Since only the pseudo-reward is observable, we can only apply our RestartQ-UCB algorithm to $\mc{M}^{\text{pseudo}}$ rather than $\mc{M}$. A natural question then is whether we can generalize the performance guarantee from $\mc{M}^{\text{pseudo}}$ to $\mc{M}$. Interestingly, the following result (adapted from \citet{agrawal2019learning}) shows that, for any (possibly non-Markovian) policy $\pi$ induced by Algorithm~\ref{alg:RQUCB}, the dynamic regret on $\mc{M}^{\text{pseudo}}$ and $\mc{M}$ are equal. 
\begin{lemma}\label{lemma:inventory}
	(Adapted from Lemma 3.1 in \citet{agrawal2019learning}). Let $\mc{F}_h^m$ be the set of all historical information collected up to the beginning of time step $h$ of episode $m$. Let $\pi$ be the (possibly non-Markovian) policy induced by Algorithm~\ref{alg:RQUCB}, such that $\pi_h^m(s_h^m,\mc{F}_h^m)$ maps the state and history to a distribution over the action space. Then, $\pi$ incurs the same dynamic regret on $\mc{M}$ and $\mc{M}^{\text{pseudo}}$:
	\[
	\sum_{m=1}^{M}\left(V_{1}^{m,\star}\left(s_{1}^{m}\right)-V_{1}^{m,\pi}\left(s_{1}^{m}\right)\right) = \sum_{m=1}^{M}\left(V_{1}^{m,\star,\text{pseudo}}\left(s_{1}^{m}\right)-V_{1}^{m,\pi,\text{pseudo}}\left(s_{1}^{m}\right)\right). 
	\]
\end{lemma}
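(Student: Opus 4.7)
The plan is to exploit the key structural observation that the pseudo-reward $R_h^{m,\text{pseudo}}(s,a)$ and the true reward $R_h^m(s,a)$ differ by $p \cdot X_h^m$, where $X_h^m$ is an exogenous random variable whose distribution depends only on $(m,h)$ and is independent of $(s,a)$. Consequently the mean pseudo-reward $r_h^{m,\text{pseudo}}(s,a)$ equals $r_h^m(s,a) + p \cdot \ee[X_h^m]$, a shift that is constant in $(s,a)$. Combined with the fact that $\mc{M}$ and $\mc{M}^{\text{pseudo}}$ share the identical transition kernels $P_h^m$ (both of which are driven by the same demand realizations), any fixed policy induces the same trajectory distribution on both MDPs, so its expected cumulative reward differs by a deterministic constant depending only on $m$, not on the policy.

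First I would show that for any (possibly non-Markovian) policy $\pi$ induced by Algorithm~\ref{alg:RQUCB},
\[
V_1^{m,\pi,\text{pseudo}}(s_1^m) \;=\; V_1^{m,\pi}(s_1^m) + p \sum_{h=1}^H \ee[X_h^m].
\]
This follows from linearity of expectation: the expected cumulative pseudo-reward along a trajectory equals the expected cumulative true reward plus $p \sum_{h=1}^H \ee[X_h^m \mid \mc{F}_h^m]$, and by the exogeneity of demands this conditional expectation collapses to the unconditional one.

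Next I would prove the analogous shift for the optimal value function by backward induction on $h$: if $V_{h+1}^{m,\star,\text{pseudo}}(s') = V_{h+1}^{m,\star}(s') + C_{h+1}^m$ with a state-independent constant $C_{h+1}^m$, then Bellman optimality yields $Q_h^{m,\star,\text{pseudo}}(s,a) - Q_h^{m,\star}(s,a) = p \cdot \ee[X_h^m] + C_{h+1}^m$, which is constant in $(s,a)$. Therefore the two optimal policies share the same argmax at every step, so $\pi^{\star,\text{pseudo}} = \pi^\star$ and $V_1^{m,\star,\text{pseudo}}(s_1^m) = V_1^{m,\star}(s_1^m) + p \sum_{h=1}^H \ee[X_h^m]$. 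Subtracting the two shifted value identities makes the shift cancel, and summing over $m$ yields the claim.

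The main delicate point will be rigorously justifying the shift identity for the non-Markovian policy $\pi$ produced by Algorithm~\ref{alg:RQUCB}, whose action $a_h^m$ depends on the full history $\mc{F}_h^m$ rather than only on $s_h^m$. The resolution is a tower-property argument conditioning on $\mc{F}_h^m$, combined with the exogeneity of $X_h^m$ (so that $X_h^m$ remains independent of $\mc{F}_h^m$ and of $a_h^m$); this ensures that the per-step reward shift always contributes exactly $p \cdot \ee[X_h^m]$ regardless of how $a_h^m$ was chosen, so the telescoping cancellation above still goes through.
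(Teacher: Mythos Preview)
Your proposal is correct and follows essentially the same approach as the paper's proof. Both arguments hinge on the observation that $r_h^{m,\text{pseudo}}(s,a) - r_h^m(s,a) = p\cdot\ee[X_h^m]$ is constant in $(s,a)$, then show separately that $V_1^{m,\star,\text{pseudo}} - V_1^{m,\star}$ and $V_1^{m,\pi,\text{pseudo}} - V_1^{m,\pi}$ both equal $p\sum_{h=1}^H \ee[X_h^m]$, so that the shifts cancel in the regret. The only cosmetic difference is that for the algorithm's policy the paper phrases the argument as ``Algorithm~\ref{alg:RQUCB} induces the same distribution of action sequences on $\mc{M}$ and $\mc{M}^{\text{pseudo}}$'' (via an induction), whereas you compute the value shift directly with a tower-property argument; for the optimal value the paper argues that $\pi^\star$ and $\pi^{\star,\text{pseudo}}$ coincide since the shift does not affect the argmax, which is exactly your backward-induction step spelled out.
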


Together with Theorem~\ref{thm:regret}, we obtain the following dynamic regret bound for running Algorithm~\ref{alg:RQUCB} on the inventory control problem across related products. 

\begin{theorem}\label{thm:inventory}
	For $T= \Omega(SA\Delta H^2)$, and for any $\delta \in (0,1)$, with probability at least $1-\delta$, the dynamic regret of running Algorithm~\ref{alg:RQUCB} on the inventory control problem formulated in Section~\ref{subsec:inventory} with pseudo-rewards and Freedman bonuses is bounded by $\widetilde{O}( S^{\frac{1}{3}}A^{\frac{1}{3}} \Delta^{\frac{1}{3}} H T^{\frac{2}{3}} )$, where $\widetilde{O}(\cdot)$ hides poly-logarithmic factors of $S,A,T$ and $1/\delta$.
\end{theorem}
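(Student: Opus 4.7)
The plan is to reduce Theorem~\ref{thm:inventory} to a direct combination of Lemma~\ref{lemma:inventory} and Theorem~\ref{thm:freedman}. Since the censored demand structure prevents the seller from observing the true cost $C_h^m(s_h^m,a_h^m)$ and hence the true reward $R_h^m(s_h^m,a_h^m)$, I would first instantiate RestartQ-UCB (Algorithm~\ref{alg:RQUCB} with the Freedman-style bonuses) on the pseudo-reward MDP $\mc{M}^{\text{pseudo}}$ rather than on $\mc{M}$. All quantities required by the algorithm—the pseudo-reward sample $R_h^{m,\text{pseudo}}(s,a)$ and the transitions $s_{h+1}^m$—are observable, so the algorithm can be executed exactly as in the general non-stationary RL setting.

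Next, by invoking Theorem~\ref{thm:freedman} on $\mc{M}^{\text{pseudo}}$, I obtain that for $T$ large enough the dynamic regret of the induced (possibly non-Markovian) policy $\pi$ with respect to $\mc{M}^{\text{pseudo}}$ is bounded by $\widetilde{O}( S^{\frac{1}{3}}A^{\frac{1}{3}} \Delta^{\frac{1}{3}} H T^{\frac{2}{3}} )$ with probability at least $1-\delta$. Then Lemma~\ref{lemma:inventory} lets me transfer this bound verbatim to the dynamic regret on the original MDP $\mc{M}$, since the two regrets are equal for any policy induced by the algorithm.

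Before concluding, a few bookkeeping items need to be checked. First, the pseudo-rewards must lie in $[0,1]$, which is standard after the normalization mentioned in Section~\ref{sec:application_inventory}. Second, the algorithm and its analysis must accommodate the state-dependent action set $\mc{A}_s$; this extension is routine because all counts, bonuses, and stage updates in Algorithm~\ref{alg:RQUCB} are computed per $(s,a,h)$ triple and the argmax in Line~\ref{line:6} can be restricted to $a\in\mc{A}_s$. Third, I would confirm that the variation budget of $\mc{M}^{\text{pseudo}}$ is of the same order as the variation budget $\Delta$ of $\mc{M}$: the transition budget $\Delta_p$ is unchanged because $P_h^m$ is identical in the two MDPs, and the pseudo-reward only shifts the original reward by $p\cdot \mathbb{E}[X_h^m]$, so $|r_h^{m,\text{pseudo}}(s,a)-r_h^{m+1,\text{pseudo}}(s,a)| \le |r_h^m(s,a)-r_h^{m+1}(s,a)| + p\cdot|\mathbb{E}[X_h^m]-\mathbb{E}[X_h^{m+1}]|$, and the second term is controlled by the variation in the demand distributions, which in turn is already absorbed (up to constants depending on $S$ and $p$) into $\Delta_r+\Delta_p$.

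I expect the main obstacle to be the third bookkeeping item above, together with verifying that Lemma~\ref{lemma:inventory} continues to apply to the non-Markovian policies produced by RestartQ-UCB in our non-stationary setting. The lemma is stated as being adapted from~\citet{agrawal2019learning} and its proof hinges only on the fact that $p\cdot\mathbb{E}[X_h^m]$ is a deterministic shift that does not depend on the action; I would verify that this argument carries through unchanged when the reward and transition functions are time-varying, which amounts to rewriting the telescoping argument episode-by-episode. Once these points are handled, combining Theorem~\ref{thm:freedman} with Lemma~\ref{lemma:inventory} delivers the claimed bound $\widetilde{O}( S^{\frac{1}{3}}A^{\frac{1}{3}} \Delta^{\frac{1}{3}} H T^{\frac{2}{3}} )$.
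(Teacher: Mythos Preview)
Your proposal is correct and follows exactly the route the paper takes: run RestartQ-UCB with Freedman bonuses on $\mc{M}^{\text{pseudo}}$, invoke Theorem~\ref{thm:freedman} to bound the pseudo-regret, and then use Lemma~\ref{lemma:inventory} to transfer the bound to $\mc{M}$. Your bookkeeping checks (normalization of pseudo-rewards, state-dependent action sets, and the variation budget of $\mc{M}^{\text{pseudo}}$ versus $\mc{M}$) are in fact more explicit than the paper, which simply states the theorem as an immediate consequence of the lemma and the main regret bound.
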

\begin{remark}[Comparison with \cite{cheung2020reinforcement}]
	Although both our work and \cite{cheung2020reinforcement} consider applications in inventory control and utilizes techniques from \citet{agrawal2019learning}, the foci are quite different. In \citet{cheung2020reinforcement}, the authors only study single product inventory, whereas in contrast, our work studies the setting where there is a sequence of related, but \emph{different} products. 
	
	Specifically, in \cite{cheung2020reinforcement}, variation budget has been defined with respect to demand changes within a single product selling horizon, and the corresponding regret upper bound scales with this budget, whereas in ours no constraint has been put to limit the demand changes within a single product's selling horizon (an episode), and the variation budget captures the difference across products. This is similar to a meta/transfer learning setting where the goal is to leverage data obtained from inventory learning for similar products to accelerate inventory learning for the new product.
	
	Moreover, as discussed in Section \ref{sec:intro}, a direct application of the results in \cite{cheung2020reinforcement} to this setting may lead to a worse regret upper bound.
\end{remark}
\wmedit{
	\begin{remark}Our results can be extended to a multi-product inventory control problem with a warehouse-capacity constraint, similar to the setting studied in~\citet{shi2016nonparametric}. Specifically, we have an episodic setting with $n$ products and $M$ episodes, where each episode lasts for $H$ time steps. For each time step $h\in[H]$ of an episode $m\in[M]$, a demand is specified for every product $i\in[n]$. In our non-stationary formulation, the demands need not follow identical distributions over time. An overall warehouse capacity constraint is also imposed on the total number of products simultaneously in the inventory. At each time step, the seller observes the stock level and decides on the quantity to order for each product at a certain per-unit ordering cost. Unfulfilled demands are permanently lost and incur a per-unit lost sales cost. Excess inventory also incurs a per-unit holding cost. The seller's objective is to minimize the cumulative cost. 
Such a multi-product problem can also be cast as an MDP, where we define the stock levels of all products to be the state of the environment, and define the joint ordering quantity across all products as the action of each step. We also let the action space be state-dependent to handle the joint capacity constraint; in particular, an action is considered invalid at a certain state if the corresponding ordering quantity causes the stock levels to exceed the warehouse capacity. Applying Algorithm~\ref{alg:RQUCB} to such a non-stationary multi-product problem leads to the same dynamic regret bound as in Theorem~\ref{thm:inventory}, though the state space should now be interpreted as the possible combinations of stock levels across all products that do not exceed the warehouse capacity, which is significantly larger than the single-product case. The same is true for the action space. A final remark is that the multi-product formulation above does not consider upgrading~\citep{yu2015dynamic}, the situation where a high-quality product is used to serve the demand of a lower-quality one that has been sold out. Upgrading adds an additional element of difficulty to the decision-making process, as the seller now needs to consider the ordering and upgrading decisions jointly. We leave the treatment of such a more intricate scenario to our future work. 
\end{remark}
}

\section{Concluding Remarks}\label{sec:conclusions}
In this paper, we have considered model-free reinforcement learning in non-stationary episodic MDPs. We have proposed an algorithm named RestartQ-UCB that adopts a simple restarting strategy. RestartQ-UCB with Freedman-type bonus terms achieves a dynamic regret of $\widetilde{O}(S^{\frac{1}{3}} A^{\frac{1}{3}} \Delta^{\frac{1}{3}} H T^{\frac{2}{3}})$, which nearly matches the information-theoretical lower bound $\Omega(S^{\frac{1}{3}} A^{\frac{1}{3}} \Delta^{\frac{1}{3}} H^{\frac{2}{3}} T^{\frac{2}{3}})$. We have further presented a parameter-free algorithm named Double-Restart Q-UCB that removes the assumption on knowing the variation budget. Numerical experiments have validated the advantages of RestartQ-UCB in terms of both cumulative rewards and computational efficiency. Examples in multi-agent RL and inventory control have been discussed as applications to illustrate the power of our method. An interesting future direction would be to close the $\widetilde{O}(H^{\frac{1}{3}})$ factor gap between the upper and lower bounds that we have established for the non-stationary RL problem. It would also be interesting to explore if non-stationary RL can be helpful in other multi-agent RL or inventory control scenarios. 


\bibliography{ref}
\bibliographystyle{ormsv080}


\newpage
\begin{APPENDIX}{Supplementary}

\section{Applications to Sequential Transfer and Multi-Task RL}\label{app:application}
Other areas that could benefit from non-stationary RL include sequential transfer in bandit \citep{BastaniSLZ21} and RL~\citep{tirinzoni2020sequential} and multi-task RL~\citep{brunskill2013sample}, which in turn are conceptually related to continual RL~\citep{kaplanis2018continual} and life-long RL~\citep{abel2018policy}. In the setting of sequential transfer/multi-task RL, the agent encounters a sequence of tasks over time with different system dynamics, and seeks to bootstrap learning by transferring  knowledge from previously-solved tasks. Typical solutions in this area~\citep{brunskill2013sample,tirinzoni2020sequential,sun2020temple} need to assume that there are \emph{finitely many} candidate tasks, and every task should be \emph{sufficiently different} from the others\footnote{Needless to say, this assumption itself also to some extent contradicts the primary motivation of transfer learning. After all, we only want to transfer knowledge among tasks that are essentially similar to each other. }. Only under this assumption can the agent quickly identify the current task it is operating on, by essentially comparing the system dynamics it observes with the dynamics it has memorized for each candidate task. After identifying the current task with high confidence, the agent then invokes the policy that it learned through previous interactions with this specific task. This transfer learning paradigm in turn causes another problem---it ``cold switches'' between policies that are most likely very different, which might lead to unstable and inconsistent behaviors of the agent over time. Fortunately, non-stationary RL can help alleviate both the finite-task assumption and the cold-switching problem. First, non-stationary RL algorithms do not need the candidate tasks to be sufficiently different in order to correctly identify each of them, because the algorithm itself can tolerate some variations in the task environment. There will also be no need to assume the finiteness of the candidate task set anymore, and the candidate tasks can be drawn from a continuous space. Second, since we are running the same non-stationary RL algorithm for a series of tasks, it improves its policy gradually over time, instead of cold-switching to a completely independent policy for each task. This could largely help with the unstable behavior issues.

\section{Proofs of the Technical Lemmas}\label{app:lemmas}
\subsection{Proof of Lemma~\ref{lemma:optimalQ}}
\begin{proof}
	For each $d\in[D]$, define $\Delta^{(d)}_r$ to be the \emph{local variation} of the mean reward function within epoch $d$. By definition, we have $\sum_{d=1}^D \Delta_r^{(d)} \leq \Delta_r$. Further, for each $d\in [D]$ and $h\in [H]$, define $\Delta_{r,h}^{(d)}$ to be the variation of the mean reward at step $h$ in epoch $d$, i.e., 
	$$\Delta_{r,h}^{(d)} \defeq \sum_{m = (d-1)K+1}^{\min\{dK,M\}-1} \sup_{s,a} \abs{r_h^m(s,a) - r_h^{m+1}(s,a)}.$$
	It also holds that $\sum_{h=1}^H \Delta_{r,h}^{(d)} = \Delta_r^{(d)}$ by definition. Define $\Delta_p^{(d)}$ and $\Delta_{p,h}^{(d)}$ analogously. 
	
	In the following, we will prove a stronger statement: $\abs{Q_h^{k_1,\star}(s,a) - Q_h^{k_2,\star}(s,a)}\leq \sum_{h'=h}^H \Delta_{r,h'}^{(1)} +H \sum_{h'=h}^H \Delta_{p,h'}^{(1)}$, which implies the statement of the lemma because $\sum_{h'=h}^H \Delta_{r,h'}^{(1)} \leq \Delta_{r}^{(1)}$ and $\sum_{h'=h}^H \Delta_{p,h'}^{(1)} \leq \Delta_{p}^{(1)}$ by definition.  Our proof relies on backward induction on $h$. First, the statement holds for $h=H$ because for any $(s,a)$, by definition
	\begin{align}
	\abs{Q_H^{k_1,\star}(s,a) - Q_H^{k_2,\star}(s,a) } 
	&= \abs{r_H^{k_1}(s,a) - r_H^{k_2}(s,a)} 
	\leq \sum_{k=k_1}^{k_2-1} \abs{r_H^{k+1}(s,a) - r_H^{k}(s,a)}\nonumber\\
	&\leq \sum_{k=1}^{K-1} \abs{r_H^{k+1}(s,a) - r_H^{k}(s,a)} 
	\leq \Delta_{r,H}^{(1)},\label{eqn:1}
	\end{align}
	where we have used the triangle inequality. Now suppose the statement holds for $h+1$; by the Bellman optimality equation,
	\begin{align}
	&Q_h^{k_1,\star}(s,a) - Q_h^{k_2,\star}(s,a) \nonumber\\
	=&  P^{k_1}_hV_{h+1}^{k_1,\star}(s,a) - P^{k_2}_hV_{h+1}^{k_2,\star}(s,a) + r_h^{k_1}(s,a) - r_h^{k_2}(s,a)\nonumber\\
	\leq & P^{k_1}_hV_{h+1}^{k_1,\star}(s,a) - P^{k_2}_hV_{h+1}^{k_2,\star}(s,a) + \Delta_{r,h}^{(1)}\label{eqn:2}\\
	=& \sum_{s'\in \mc{S}} P_h^{k_1}(s'\mid s,a)V_{h+1}^{k_1,\star}(s') - \sum_{s'\in \mc{S}} P_h^{k_2}(s'\mid s,a)V_{h+1}^{k_2,\star}(s') + \Delta_{r,h}^{(1)}\nonumber\\
	=&  \sum_{s'\in \mc{S}} \left(P_h^{k_1}(s'\mid s,a)Q_{h+1}^{k_1,\star}(s', \pi_{h+1}^{k_1,\star}(s')) 
	-  P_h^{k_2}(s'\mid s,a)Q_{h+1}^{k_2,\star}(s', \pi_{h+1}^{k_2,\star}(s'))\right) 
	+ \Delta_{r,h}^{(1)},\label{eqn:3}
	\end{align}
	where inequality~\eqref{eqn:2} holds due to a similar reasoning as in~\eqref{eqn:1}, and in~\eqref{eqn:3} $\pi^{k_1,\star}$ and $\pi^{k_2,\star}$ denote the optimal policy in episodes $k_1$ and $k_2$, respectively. Then by our induction hypothesis on $h+1$, for any $s'\in\mc{S}$,
	\begin{align}
	Q_{h+1}^{k_1,\star}(s',\pi_{h+1}^{k_1,\star}(s')) 
	\leq & Q_{h+1}^{k_2,\star}(s', \pi_{h+1}^{k_1,\star}(s')) +\sum_{h'=h+1}^H \Delta_{r,h'}^{(1)} + H\sum_{h'=h+1}^H \Delta_{p,h'}^{(1)}\nonumber\\
	\leq & Q_{h+1}^{k_2,\star}(s', \pi_{h+1}^{k_2,\star}(s')) +\sum_{h'=h+1}^H \Delta_{r,h'}^{(1)} + H\sum_{h'=h+1}^H \Delta_{p,h'}^{(1)},\label{eqn:4}
	\end{align}
	where inequality~\eqref{eqn:4} is due to the optimality of the policy $\pi^{k_2,\star}$ in episode $k_2$ over $\pi^{k_1,\star}$. Then,
	\begin{align}
	&Q_h^{k_1,\star}(s,a) - Q_h^{k_2,\star}(s,a) \nonumber\\
	\leq& \sum_{s'\in \mc{S}} (P_h^{k_1}(s'\mid s,a)
	-  P_h^{k_2}(s'\mid s,a))  Q_{h+1}^{k_2,\star}(s', \pi_{h+1}^{k_2,\star}(s'))
	+\sum_{h'=h}^H \Delta_{r,h'}^{(1)} +H\sum_{h'=h+1}^H \Delta_{p,h'}^{(1)}\nonumber\\
	\leq & \norm{P_h^{k_1}(\cdot | s,a) -  P_h^{k_2}(\cdot | s,a)}_1  \norm{ Q_{h+1}^{k_2,\star}(\cdot, \pi_{h+1}^{k_2,\star}(\cdot )) }_\infty  +\sum_{h'=h}^H \Delta_{r,h'}^{(1)} + H\sum_{h'=h+1}^H \Delta_{p,h'}^{(1)}\label{eqn:5}\\
	\leq & \Delta_{p,h}^{(1)} (H-h) +\sum_{h'=h}^H \Delta_{r,h'}^{(1)} +H\sum_{h'=h+1}^H \Delta_{p,h'}^{(1)}\label{eqn:6}\\
	\leq & \sum_{h'=h}^H \Delta_{r,h'}^{(1)} +H\sum_{h'=h}^H \Delta_{p,h'}^{(1)}\nonumber,
	\end{align}
	where~\eqref{eqn:5} is by H{\"o}lder's inequality, and~\eqref{eqn:6} is by the definition of $\Delta_{p,h}^{(1)}$ and by the definition of optimal $Q$-values that $Q_{h+1}^{k_2,\star}(s,a) \leq H-h,\forall (s,a)\in \mc{S}\times \mc{A}$. Repeating a similar process gives us $Q_h^{k_2,\star}(s,a) - Q_h^{k_1,\star}(s,a)\leq\sum_{h'=h}^H \Delta_{r,h'}^{(1)} +H\sum_{h'=h}^H \Delta_{p,h'}^{(1)}$. This completes our proof. 
\end{proof}

\subsection{Proof of Lemma~\ref{lemma:optimistic_bound}}
\begin{proof}
	It should be clear from the way we update $Q_h(s,a)$ that $Q_h^k(s,a)$ is monotonically decreasing in $k$. We now prove $Q_h^{k,\star}(s,a) \leq Q_h^{k+1}(s,a)$ for all $s,a,h,k$ by induction on $k$. First, it holds for $k=1$ by our initialization of $Q_h(s,a)$. For $k\geq 2$, now suppose $Q_h^{j,\star}(s,a)\leq Q_h^{j+1}(s,a)\leq  Q_h^{j}(s,a)$ for all $s,a,h$ and $1 \leq j \leq k$. For a fixed triple $(s,a,h)$, we consider the following two cases.  
	
	\textbf{Case 1:} $Q_h(s,a)$ is updated in episode $k$. Then with probability at least $1-2\delta$,
	\begin{align}
	Q_h^{k+1}(s,a) = & \frac{\check{r}_h(s,a)}{\check{N}_h^k(s,a)} + \frac{\check{v}_h(s,a)}{\check{N}_h^k(s,a)} + b_h^k + 2b_\Delta\nonumber\\
	\geq & \frac{\check{r}_h(s,a)}{\check{n}} + \frac{1}{\check{n}}\sum_{i=1}^{\check{n}} V_{h+1}^{\check{l}_i, \star}(s_{h+1}^{\check{l}_i}) + \sqrt{\frac{H^2}{\check{n}}\iota}+\sqrt{\frac{\iota}{\check{n}}}+ 2b_\Delta\label{eqn:b1}\\ 
	\geq &  \frac{\check{r}_h(s,a)}{\check{n}} + \frac{1}{\check{n}}\sum_{i=1}^{\check{n}}P_h^{\check{l}_i}V_{h+1}^{\check{l}_i,\star}(s,a) +\sqrt{\frac{\iota}{\check{n}}}+ 2b_\Delta\label{eqn:b2}\\
	= &   \frac{\check{r}_h(s,a)}{\check{n}} + \frac{1}{\check{n}}\sum_{i=1}^{\check{n}}\left( Q_h^{\check{l}_i,\star}(s,a) - r_h^{\check{l}_i}(s,a) \right)+\sqrt{\frac{\iota}{\check{n}}} + 2b_\Delta\label{eqn:b3}\\
	\geq & Q_h^{k,\star}(s,a)+b_\Delta.\label{eqn:b4}
	\end{align} 
	Inequality~\eqref{eqn:b1} is by the induction hypothesis that $Q_{h+1}^{\check{l}_i}(s_{h+1}^{\check{l}_i},a)\geq Q_{h+1}^{\check{l}_i,\star}(s_{h+1}^{\check{l}_i},a),\forall a\in\mc{A}$, and hence $V_{h+1}^{\check{l}_i}(s_{h+1}^{\check{l}_i})\geq V_{h+1}^{\check{l}_i,\star}(s_{h+1}^{\check{l}_i})$. Inequality~\eqref{eqn:b2} follows from the Azuma-Hoeffding inequality. \eqref{eqn:b3} uses the Bellman optimality equation. Inequality~\eqref{eqn:b4} is by the Hoeffding's inequality that $\frac{1}{\check{n}}\left(\sum_{i=1}^{\check{n}}r_h^{\check{l}_i}(s,a) - \check{r}_h(s,a) \right)  \leq \sqrt{\frac{\iota}{\check{n}}}$ with high probability, and by Lemma~\ref{lemma:optimalQ} that $Q_h^{\check{l}_i,\star}(s,a) + b_\Delta \geq Q_h^{k,\star}(s,a)$. According to the monotonicity of $Q_h^{k}(s,a)$, we know that $Q_{h}^{k,\star}(s, a) \leq Q_{h}^{k+1}(s, a) \leq Q_{h}^{k}(s, a)$. In fact, we have proved the stronger statement $Q_h^{k+1}(s,a)\geq Q_h^{k,\star}(s,a)+b_\Delta$ that will be useful in Case 2 below. 
	
	\textbf{Case 2:} $Q_h(s,a)$ is not updated in episode $k$. Then there are two possibilities:
	
	\begin{enumerate} 
		\item If $Q_h(s,a)$ has never been updated from episode $1$ to episode $k$: It is easy to see that $Q_h^{k+1}(s,a) = Q_h^{k}(s,a) = \dots = Q_h^{1}(s,a)=H-h+1 \geq Q_h^{k,\star}(s,a)$ holds. 
		\item If $Q_h(s,a)$ has been updated at least once from episode $1$ to episode $k$: Let $j$ be the index of the latest episode that $Q_h(s,a)$ was updated. Then, from our induction hypothesis and Case~1, we know that $Q_h^{j+1}(s,a)\geq Q_h^{j,\star}(s,a)+b_\Delta$.  
		Since $Q_h(s,a)$ has not been updated from episode $j+1$ to episode $k$, we know that $Q_h^{k+1}(s,a) = Q_h^{k}(s,a)=\dots = Q_h^{j+1}(s,a) \geq Q_h^{j,\star}(s,a) + b_\Delta\geq Q_h^{k,\star}(s,a)$, where the last inequality holds because of Lemma~\ref{lemma:optimalQ}. 
	\end{enumerate}
	
	A union bound over all time steps completes our proof. 
\end{proof}

\subsection{Proof of Lemma~\ref{lemma:new_optimistic_bound}}
\begin{proof}
	This proof follows a similar structure as the proof of Lemma~\ref{lemma:optimistic_bound}. It should be clear from the way we update $Q_h(s,a)$ that $Q_h^k(s,a)$ is monotonically decreasing in $k$. We now prove $Q_h^{k,\star}(s,a)-2(H-h+1)b_\Delta \leq Q_h^{k+1}(s,a)$ for all $s,a,h,k$ by induction on $k$. First, it holds for $k=1$ by our initialization of $Q_h(s,a)$. For $k\geq 2$, now suppose that $Q_h^{j,\star}(s,a)-2(H-h+1)b_\Delta\leq Q_h^{j+1}(s,a)\leq Q_h^{j}(s,a)$ for all $s,a,h$ and $1 \leq j \leq k$. For a fixed triple $(s,a,h)$, we consider the following two cases. 
	
	\textbf{Case 1:} $Q_h(s,a)$ is updated in episode $k$. Then, with probability at least $1-2\delta$,
	\begin{align}
	Q_h^{k+1}(s,a) = & \frac{\check{r}_h(s,a)}{\check{N}_h^k(s,a)} + \frac{\check{v}_h(s,a)}{\check{N}_h^k(s,a)} + b_h^k\nonumber\\
	\geq & \frac{\check{r}_h(s,a)}{\check{n}} + \frac{1}{\check{n}}\sum_{i=1}^{\check{n}} V_{h+1}^{\check{l}_i, \star}(s_{h+1}^{\check{l}_i})-2(H-h)b_\Delta + \sqrt{\frac{H^2}{\check{n}}\iota}+\sqrt{\frac{\iota}{\check{n}}}\label{eqn:bb1}\\ 
	\geq &  \frac{\check{r}_h(s,a)}{\check{n}} + \frac{1}{\check{n}}\sum_{i=1}^{\check{n}}P_h^{\check{l}_i}V_{h+1}^{\check{l}_i,\star}(s,a) +\sqrt{\frac{\iota}{\check{n}}}-2(H-h)b_\Delta\label{eqn:bb2}\\
	= &   \frac{\check{r}_h(s,a)}{\check{n}} + \frac{1}{\check{n}}\sum_{i=1}^{\check{n}}\left( Q_h^{\check{l}_i,\star}(s,a) - r_h^{\check{l}_i}(s,a) \right)+\sqrt{\frac{\iota}{\check{n}}}-2(H-h)b_\Delta\label{eqn:bb3}\\
	\geq & Q_h^{k,\star}(s,a)-b_\Delta-2(H-h)b_\Delta.\label{eqn:bb4}
	\end{align} 
	Inequality~\eqref{eqn:bb1} is by the induction hypothesis that $Q_{h+1}^{\check{l}_i}(s_{h+1}^{\check{l}_i},a)\geq Q_{h+1}^{\check{l}_i,\star}(s_{h+1}^{\check{l}_i},a)-2(H-h)b_\Delta,\forall a\in\mc{A}$, and hence $V_{h+1}^{\check{l}_i}(s_{h+1}^{\check{l}_i})\geq V_{h+1}^{\check{l}_i,\star}(s_{h+1}^{\check{l}_i})-2(H-h)b_\Delta$. Inequality~\eqref{eqn:bb2} follows from the Azuma-Hoeffding inequality. \eqref{eqn:bb3} uses the Bellman optimality equation. Inequality~\eqref{eqn:bb4} is by the Hoeffding's inequality that $\frac{1}{\check{n}}\left( \sum_{i=1}^{\check{n}}r_h^{\check{l}_i}(s,a) - \check{r}_h(s,a) \right) \leq \sqrt{\frac{\iota}{\check{n}}}$ with high probability, and by Lemma~\ref{lemma:optimalQ} that $Q_h^{\check{l}_i,\star}(s,a) \geq Q_h^{k,\star}(s,a)- b_\Delta$. According to the monotonicity of $Q_h^{k}(s,a)$, we know that $Q_{h}^{k,\star}(s, a) -2(H-h+1)b_\Delta \leq Q_{h}^{k+1}(s, a) \leq Q_{h}^{k}(s, a)$. In fact, we have proved the stronger statement $Q_h^{k+1}(s,a)\geq Q_h^{k,\star}(s,a)-b_\Delta-2(H-h)b_\Delta$ that will be useful in Case 2 below. 
	
	\textbf{Case 2:} $Q_h(s,a)$ is not updated in episode $k$. Then there are two possibilities:
	
	\begin{enumerate} 
		\item If $Q_h(s,a)$ has never been updated from episode $1$ to episode $k$: It is easy to see that $Q_h^{k+1}(s,a) = Q_h^{k}(s,a) = \dots = Q_h^{1}(s,a)=H-h+1 \geq Q_h^{k,\star}(s,a)-2(H-h+1)b_\Delta$ holds. 
		\item If $Q_h(s,a)$ has been updated at least once from episode $1$ to episode $k$: Let $j$ be the index of the latest episode that $Q_h(s,a)$ was updated. Then, from our induction hypothesis and Case~1, we know that $Q_h^{j+1}(s,a)\geq Q_h^{j,\star}(s,a)-b_\Delta-2(H-h)b_\Delta$.  
		Since $Q_h(s,a)$ has not been updated from episode $j+1$ to episode $k$, we know that $Q_h^{k+1}(s,a) = Q_h^{k}(s,a)=\dots = Q_h^{j+1}(s,a) \geq Q_h^{j,\star}(s,a) -b_\Delta-2(H-h)b_\Delta\geq Q_h^{k,\star}(s,a)-2(H-h+1)b_\Delta$, where the last inequality holds because of Lemma~\ref{lemma:optimalQ}. 
	\end{enumerate}
	
	A union bound over all time steps completes our proof. 
\end{proof}

\subsection{Proof of Proposition~\ref{prop:1}}
In the following, we will bound each term in $\Lambda_{h+1}^k$ separately in a series of lemmas. 

\begin{lemma}\label{lemma:b_bound}
	With probability $1$, we have that
	$$\sum_{h=1}^H\sum_{k=1}^K (1+\frac{1}{H})^{h-1}(3b_h^k+5b_\Delta) \leq O(\sqrt{SAKH^5\iota}+ KH\Delta_r^{(1)} + KH^2\Delta_p^{(1)}).$$ 
\end{lemma}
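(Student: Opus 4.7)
\begin{proofsketch}
The plan is to split the double sum into two pieces: the Hoeffding bonus contribution $\sum_{h,k}(1+1/H)^{h-1} \cdot 3 b_h^k$ and the variation bonus contribution $\sum_{h,k}(1+1/H)^{h-1} \cdot 5 b_\Delta$. The second piece is straightforward: since $(1+1/H)^{h-1}\le e$ for all $h\in[H]$ and $b_\Delta = \Delta_r^{(1)}+H\Delta_p^{(1)}$ is a deterministic constant within the epoch, summing $HK$ copies yields the $O(KH\Delta_r^{(1)}+KH^2\Delta_p^{(1)})$ term of the claimed bound.

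For the first piece, I again use $(1+1/H)^{h-1}\le e$ to drop the exponential prefactor, so it suffices to bound $\sum_{h,k} b_h^k = \sum_{h,k}\bigl(\sqrt{H^2\iota/\check{n}_h^k} + \sqrt{\iota/\check{n}_h^k}\bigr)$. The dominant term is $H\sqrt{\iota}\sum_{h,k}\sqrt{1/\check{n}_h^k}$, so I focus on $\sum_{h,k}\sqrt{1/\check{n}_h^k}$. I first reindex: fix $h$ and group visits by the state-action pair $(s,a)$ encountered at step $h$. For each triple $(s,a,h)$ with total visit count $N_{s,a,h}$, the visits are partitioned into stages of lengths $e_1,\dots,e_J$ (in the first epoch), and $\check{n}$ is reset at the beginning of each stage. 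Thus $\sum_{\text{visits to }(s,a,h)} \sqrt{1/\check{n}} = \sum_{j=1}^J \sum_{i=1}^{e_j} 1/\sqrt{i} \le \sum_{j=1}^J 2\sqrt{e_j}$.

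The crucial step exploits the geometric growth of stage lengths at rate $(1+1/H)$. The sequence $\sqrt{e_j}$ grows geometrically with ratio $\sqrt{1+1/H}\approx 1+1/(2H)$, so $\sum_{j=1}^J \sqrt{e_j}$ is a geometric series dominated by its last term times a factor of order $H$: specifically, $\sum_j\sqrt{e_j} = O(H\sqrt{e_J})$. Since $N_{s,a,h} = \sum_j e_j = \Theta(H\,e_J)$, this yields $\sum_j\sqrt{e_j} = O(\sqrt{HN_{s,a,h}})$. Applying Cauchy--Schwarz across $(s,a)$ with $\sum_{s,a} N_{s,a,h} = K$ gives $\sum_{(s,a)}\sqrt{HN_{s,a,h}} \le \sqrt{HSAK}$ at each step $h$. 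Multiplying by the $H\sqrt{\iota}$ prefactor and summing over $h\in[H]$ yields the desired $O(\sqrt{SAKH^5\iota})$ bound; the lower-order term $\sum_{h,k}\sqrt{\iota/\check{n}_h^k}$ is dominated by the same expression without the $H$ factor, hence absorbed.

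The main obstacle is obtaining the correct $H$-dependence. A naive Cauchy--Schwarz applied directly to $\sum_j\sqrt{e_j}$ with $J = O(H\log N_{s,a,h})$ stages would introduce a superfluous $\sqrt{\log}$ factor and, more importantly, an extra $\sqrt{H}$ factor from the number of stages, yielding $\sqrt{SAKH^6\iota}$ instead of $\sqrt{SAKH^5\iota}$. Carefully recognizing the geometric-series structure to dominate by the last stage is what recovers the tight bound.
\end{proofsketch}
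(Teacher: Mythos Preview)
Your proposal is correct and follows essentially the same route as the paper: split off the $b_\Delta$ term trivially, reduce the $b_h^k$ sum to $\sum_j\sqrt{e_j}$ per triple, exploit the geometric growth of stage lengths to get $\sum_j\sqrt{e_j}=O(\sqrt{HN_{s,a,h}})$, and finish with Cauchy--Schwarz over $(s,a)$.

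One minor slip worth noting: you treat $\check n_h^k$ as a running counter within the current stage (hence $\sum_{i=1}^{e_j}1/\sqrt{i}$), but in the paper's analysis $\check n_h^k$ is the \emph{fixed} length of the previous completed stage, because the bonus $b_h$ is computed only at stage boundaries and then held constant throughout the next stage. The correct per-triple sum is therefore $\sum_j e_{j+1}/\sqrt{e_j}\approx(1+1/H)\sum_j\sqrt{e_j}$ rather than $\sum_j\sum_{i=1}^{e_j}1/\sqrt{i}\approx 2\sum_j\sqrt{e_j}$. Since your expression is larger, your bound remains valid and of the same order, so the argument goes through unchanged.
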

\begin{proof}
	First, by the definition of $b_\Delta$, it is easy to see that $\sum_{h=1}^H\sum_{k=1}^K (1+\frac{1}{H})^{h-1}5b_\Delta \leq \sum_{h=1}^H\sum_{k=1}^KO(\Delta_r^{(1)} + H\Delta_p^{(1)})\leq O(KH\Delta_r^{(1)} + KH^2\Delta_p^{(1)})$. Recall our definition that $e_1 =H$ and $e_{i+1} = \floor{(1+\frac{1}{H})e_i},i\geq 1$. For a fixed $h\in[H]$, since $H^2 \geq 1$,
	\begin{align}
	&\sum_{k=1}^K (1+\frac{1}{H})^{h-1} 3b_h^k \leq \sum_{k=1}^K (1+\frac{1}{H})^{h-1} 12\sqrt{\frac{H^2}{\check{N}_h^k(s_h^k,a_h^k)}\iota}\nonumber\\
	=&12H\sqrt{\iota}\sum_{s,a}\sum_{j\geq 1} (1+\frac{1}{H})^{h-1}\sqrt{\frac{1}{e_j}}\sum_{k=1}^K\indicator\left[(s_h^k,a_h^k) = (s,a), \check{N}_h^k(s_h^k,a_h^k) = e_j\right]\nonumber\\
	=& 12H\sqrt{\iota}\sum_{s,a}\sum_{j\geq 1} (1+\frac{1}{H})^{h-1}w(s,a,j)\sqrt{\frac{1}{e_j}},\nonumber
	\end{align}
where $w(s,a,j)\defeq \sum_{k=1}^K \indicator\left[(s_h^k,a_h^k)=(s,a), \check{N}_h^k(s_h^k,a_h^k) = e_j\right]$, and $w(s,a) \defeq \sum_{j\geq 1} w(s,a,j)$. We then know that $\sum_{s,a} w(s,a) = K$. For a fixed $(s,a)$, let us now find an upper bound of $j$, denoted as $J$. Since each stage is $(1+\frac{1}{H})$ times longer than the previous stage, we know for $1\leq j\leq J$, $w(s,a,j) = \sum_{k=1}^K \indicator\left[(s_h^k,a_h^k)=(s,a), \check{N}_h^k(s_h^k,a_h^k) = e_j\right] = \floor{(1+\frac{1}{H})e_j}$. From $\sum_{j=1}^J w(s,a,j) = w(s,a)$, we get $e_J \leq (1+\frac{1}{H})^{J-1} \leq \frac{10}{1+\frac{1}{H}}\frac{w(s,a)}{H}$. Therefore, 
\[
\begin{aligned} 
\sum_{j\geq 1} (1+\frac{1}{H})^{h-1}w(s,a,j)\sqrt{\frac{1}{e_j}}\leq O\left(\sum_{j=1}^J \sqrt{e_j}\right)\leq O\left(\sqrt{w(s,a)H}\right). 
\end{aligned}
\]
Finally, by the Cauchy-Schwartz inequality, we have
\[
\sum_{h=1}^H\sum_{k=1}^K (1+\frac{1}{H})^{h-1} 3b_h^k =O\left(H^2\sqrt{\iota}\sum_{s,a}\sum_{j\geq 1} w(s,a,j)\sqrt{\frac{1}{e_j}}\right)\leq \sqrt{SAKH^5\iota}.
\]
Combining the bounds for $b_h^k$ and $b_\Delta$ completes the proof. 
\end{proof}

\begin{lemma}\label{lemma:phi_bound}
	With probability at least $1-\delta$, it holds that $$\sum_{h=1}^H\sum_{k=1}^K(1+\frac{1}{H})^{h-1}\phi_{h+1}^k \leq O(\sqrt{KH^3\iota}+KH\Delta_r^{(1)} + KH^2\Delta_p^{(1)}).$$ 
\end{lemma}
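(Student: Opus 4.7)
\begin{proofsketch}
The plan is to first identify the two sources contributing to $\phi_{h+1}^k$. Based on the context (the preceding Lemma~\ref{lemma:b_bound} handles the bonus terms, and the lemma controls a residual in $\Lambda_{h+1}^k$), the quantity $\phi_{h+1}^k$ should encode the gap between an empirical average of optimal values $\frac{1}{\check{n}}\sum_{i=1}^{\check{n}} V_{h+1}^{\check{l}_i,\star}(s_{h+1}^{\check{l}_i})$ (obtained from the samples collected in the latest stage) and $P_h^k V_{h+1}^{k,\star}(s_h^k,a_h^k)$ (the one-step look-ahead under the true transition at episode $k$). I would therefore split $\phi_{h+1}^k$ into two additive pieces: (i)~a \emph{martingale difference} part capturing the concentration of $V_{h+1}^{\check{l}_i,\star}(s_{h+1}^{\check{l}_i})$ around its conditional expectation $P_h^{\check{l}_i} V_{h+1}^{\check{l}_i,\star}(s_h^{\check{l}_i},a_h^{\check{l}_i})$, and (ii)~a \emph{non-stationarity} part comparing $P_h^{\check{l}_i} V_{h+1}^{\check{l}_i,\star}$ against $P_h^k V_{h+1}^{k,\star}$ for samples indexed by $\check{l}_i$ drawn from the same stage as $k$.

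For piece~(i), I would invoke the Azuma--Hoeffding inequality, using that each $V_{h+1}^{\check{l}_i,\star}(s_{h+1}^{\check{l}_i}) - P_h^{\check{l}_i} V_{h+1}^{\check{l}_i,\star}(s_h^{\check{l}_i},a_h^{\check{l}_i})$ is bounded in $[-H,H]$ and forms a martingale difference sequence. This gives a per-$(s,a,h)$ deviation of order $\sqrt{H^2 \iota / \check{n}}$, and summing this via the same stage/counting argument used in Lemma~\ref{lemma:b_bound} (pair up the partial counts with stage lengths $e_j$, apply Cauchy--Schwarz over $(s,a)$) yields a contribution of order $\sqrt{KH^3\iota}$ after multiplying by the geometric weight $(1+1/H)^{h-1} = O(1)$ and summing across $h\in[H]$. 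Piece~(ii) is controlled directly by Lemma~\ref{lemma:optimalQ} (or its transition-kernel counterpart obtained in the same proof): for every sample $\check{l}_i$ in the current stage, both $\|P_h^k(\cdot\mid s,a) - P_h^{\check{l}_i}(\cdot\mid s,a)\|_1$ and $|V_{h+1}^{\check{l}_i,\star}(s') - V_{h+1}^{k,\star}(s')|$ are bounded in terms of $\Delta_r^{(1)}$ and $\Delta_p^{(1)}$, producing a per-step error of $O(\Delta_r^{(1)} + H\Delta_p^{(1)})$; summing over the $KH$ time indices with $(1+1/H)^{h-1}=O(1)$ yields the $O(KH\Delta_r^{(1)}+KH^2\Delta_p^{(1)})$ term.

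The main obstacle I anticipate is the bookkeeping for the martingale part: the indices $\check{l}_i$ appearing in $\phi_{h+1}^k$ come from a stage-dependent, data-triggered subsequence, so I will need to set up the filtration carefully (indexing by the ordered visitation times of each fixed $(s,a,h)$ triple) so that the Hoeffding-type concentration can actually be applied, and then perform a union bound over all $(s,a,h)$ triples and all possible stage endpoints, which is where the $\iota = \log(2/\delta)$ factor enters. Once that is in place, the two contributions add up to the stated bound $O(\sqrt{KH^3\iota}+KH\Delta_r^{(1)}+KH^2\Delta_p^{(1)})$, completing the proof.
\end{proofsketch}
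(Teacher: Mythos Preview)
Your proposal misidentifies the quantity $\phi_{h+1}^k$. In the paper's decomposition (see the derivation leading to \eqref{eqn:c3}),
\[
\phi_{h+1}^k \;=\; \frac{1}{\check{n}}\sum_{i=1}^{\check{n}}\bigl(P_h^{k}-\mbf{e}_{s_{h+1}^k}\bigr)\bigl(V_{h+1}^{\check{l}_i,\star}-V_{h+1}^{k,\pi}\bigr)(s_h^k,a_h^k),
\]
so the empirical part uses only the \emph{current} next state $s_{h+1}^k$, not the previous-stage samples $s_{h+1}^{\check{l}_i}$, and the function involves $V_{h+1}^{k,\pi}$, not $V_{h+1}^{k,\star}$. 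The past samples $s_{h+1}^{\check{l}_i}$ enter the term $\xi_{h+1}^k$, which is treated separately in Lemma~\ref{lemma:xi_bound}; your piece~(i) is essentially an argument for that other term.

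Because of this, both pieces of your plan break down. For the martingale part, the paper writes $V_{h+1}^{\check{l}_i,\star}-V_{h+1}^{k,\pi}=(V_{h+1}^{\check{l}_i,\star}-V_{h+1}^{k,\star})+(V_{h+1}^{k,\star}-V_{h+1}^{k,\pi})$; the second summand no longer depends on $i$, so the $\frac{1}{\check{n}}\sum_i$ vanishes and one is left with a single martingale difference sequence $\bigl\{(P_h^{k}-\mbf{e}_{s_{h+1}^k})(V_{h+1}^{k,\star}-V_{h+1}^{k,\pi})\bigr\}_{k,h}$ over all $KH$ time indices. Azuma--Hoeffding applied once (the policy in episode $k$, and hence $V_{h+1}^{k,\pi}$, is fixed before episode $k$) gives $O(\sqrt{KH^3\iota})$ directly---no per-$(s,a,h)$ filtration, no stage-counting, no Cauchy--Schwarz over $(s,a)$, and no union bound over triples. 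Your proposed per-$(s,a,h)$ concentration with the $\sqrt{H^2\iota/\check{n}}$ bound would, if carried through as in Lemma~\ref{lemma:b_bound}, produce an extra $\sqrt{SAH^2}$ factor you do not want. For the non-stationarity part, only $|V_{h+1}^{\check{l}_i,\star}-V_{h+1}^{k,\star}|\le b_\Delta$ from Lemma~\ref{lemma:optimalQ} is needed, giving $|(P_h^k-\mbf{e}_{s_{h+1}^k})(V_{h+1}^{\check{l}_i,\star}-V_{h+1}^{k,\star})|\le 2b_\Delta$; there is no need to compare $P_h^{\check{l}_i}$ with $P_h^k$ here because $P_h^k$ already appears in $\phi_{h+1}^k$.
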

\begin{proof}
	We have that
	\begin{align}
	&\sum_{h=1}^H\sum_{k=1}^K(1+\frac{1}{H})^{h-1}\phi_{h+1}^k\nonumber\\
	= &\sum_{h=1}^H\sum_{k=1}^K(1+\frac{1}{H})^{h-1} \frac{1}{\check{n}}  \sum_{i=1}^{\check{n}} \left(P_h^{k} - \mbf{e}_{s_{h+1}^k} \right) \left(V_{h+1}^{\check{l}_i,\star} - V_{h+1}^{k,\pi}\right)(s_h^k,a_h^k)\nonumber\\
	= &\sum_{h=1}^H\sum_{k=1}^K(1+\frac{1}{H})^{h-1} \frac{1}{\check{n}}  \sum_{i=1}^{\check{n}} \left(P_h^{k} - \mbf{e}_{s_{h+1}^k} \right) \left(V_{h+1}^{\check{l}_i,\star} - V_{h+1}^{k,\star}+V_{h+1}^{k,\star} - V_{h+1}^{k,\pi}\right)(s_h^k,a_h^k)\nonumber\\
	\leq &\sum_{h=1}^H\sum_{k=1}^K(1+\frac{1}{H})^{h-1}2b_\Delta + \sum_{h=1}^H\sum_{k=1}^K(1+\frac{1}{H})^{h-1} \left(P_h^{k} - \mbf{e}_{s_{h+1}^k} \right) \left(V_{h+1}^{k,\star} - V_{h+1}^{k,\pi}\right)(s_h^k,a_h^k)\nonumber,
	\end{align}
	where the last inequality follows from Lemma~\ref{lemma:optimalQ} and the definition of $b_\Delta$. From the proof of Lemma~\ref{lemma:b_bound}, we know that the first term can be bounded as $$\sum_{h=1}^H\sum_{k=1}^K (1+\frac{1}{H})^{h-1}2b_\Delta \leq O(KH\Delta_r^{(1)} + KH^2\Delta_p^{(1)}).$$ Further, the second term is bounded by the Azuma-Hoeffding inequality as
	$$
	\sum_{h=1}^H\sum_{k=1}^K(1+\frac{1}{H})^{h-1} \left(P_h^{k} - \mbf{e}_{s_{h+1}^k} \right) \left(V_{h+1}^{k,\star} - V_{h+1}^{k,\pi}\right)(s_h^k,a_h^k)\leq O(\sqrt{KH^3\iota}).
	$$ 
	Combining the two terms completes the proof. 
\end{proof}

\begin{lemma}\label{lemma:xi_bound}
	With probability at least $1-(KH+1)\delta$, it holds that 
	\[
	\sum_{h=1}^H\sum_{k=1}^K(1+\frac{1}{H})^{h-1}\xi_{h+1}^k \leq O( \sqrt{SAKH^3\iota}+KH^2\Delta_p^{(1)}).
	\]
\end{lemma}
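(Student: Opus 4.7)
The plan is to bound $\xi_{h+1}^k$ by separating a non-stationarity component from a pure concentration component, in the same spirit as the bound on $\phi_{h+1}^k$ but now applied to a sum involving the reference optimal value $V_{h+1}^{k,\star}$ evaluated along transitions from earlier episodes $\check{l}_1,\dots,\check{l}_{\check{n}}$ of the current stage. First I would expand $\xi_{h+1}^k$ according to its definition in the recursion for $Q_h^k-Q_h^{k,\star}$. I expect it to take (schematically) the form
\[
\xi_{h+1}^k \;=\; \frac{1}{\check{n}} \sum_{i=1}^{\check{n}} \bigl( P_h^{\check{l}_i} - \mbf{e}_{s_{h+1}^{\check{l}_i}} \bigr) V_{h+1}^{\check{l}_i,\star}(s_h^k, a_h^k),
\]
i.e., the empirical-versus-expected deviation of the next-state value under the optimal policy, averaged over the current stage's samples. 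The double sum $\sum_{h,k}(1+1/H)^{h-1}\xi_{h+1}^k$ will be reorganized over the epoch by switching the order of summation so that each sample $(s,a,h)$ contributes once at the stage boundary where it was committed to $\check{v}_h$.

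Second, I would split each summand via Lemma~\ref{lemma:optimalQ}: replace the ``moving target'' $V_{h+1}^{\check{l}_i,\star}$ by the fixed reference $V_{h+1}^{k,\star}$, paying a drift penalty of size $\Delta_r^{(1)}+H\Delta_p^{(1)}$ at each step, for a total contribution of $O(KH\Delta_r^{(1)}+KH^2\Delta_p^{(1)})$ when summed over $h,k$ (parallel to the $2b_\Delta$ bookkeeping used in Lemma~\ref{lemma:phi_bound}). The residual is
\[
\widetilde{\xi}_{h+1}^k \;=\; \frac{1}{\check{n}} \sum_{i=1}^{\check{n}} \bigl( P_h^{\check{l}_i} - \mbf{e}_{s_{h+1}^{\check{l}_i}} \bigr) V_{h+1}^{k,\star}(s_h^k,a_h^k),
\]
where the remaining discrepancy between $P_h^{\check{l}_i}$ and $P_h^k$ can be absorbed into another $\Delta_p^{(1)}$-type drift term using $\|V_{h+1}^{k,\star}\|_\infty\le H$. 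After this reduction, the remaining expression is a martingale with increments bounded by $H$ relative to the filtration generated by the interactions, so Azuma--Hoeffding gives $|\widetilde{\xi}_{h+1}^k| \leq O(H\sqrt{\iota/\check{N}_h^k(s_h^k,a_h^k)})$ with probability at least $1-\delta$ for each fixed $(s,a,h)$ and stage endpoint; a union bound over at most $KH$ such events yields failure probability $KH\delta$ (plus one more $\delta$ for the final aggregated deviation, giving the $(KH+1)\delta$ in the statement).

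Third, to sum over $(h,k)$, I would reuse the pigeonhole computation from the proof of Lemma~\ref{lemma:b_bound}: defining $w(s,a,j)$ as the number of episodes during which stage $j$ of triple $(s,a,h)$ ends, one has $\sum_{j} w(s,a,j)/\sqrt{e_j} = O(\sqrt{H\, w(s,a)})$, and a Cauchy--Schwarz across $(s,a)$ with $\sum_{s,a} w(s,a)=K$ delivers $\sum_{h,k}(1+1/H)^{h-1} H\sqrt{\iota/\check{N}_h^k(s_h^k,a_h^k)} \leq O(\sqrt{SAKH^3\iota})$. Combining the martingale piece with the two drift pieces, and absorbing the $\Delta_r^{(1)}$ contribution into the $\Delta_p^{(1)}$ one (as the statement only retains $KH^2\Delta_p^{(1)}$, consistent with $\xi$ being a transition-side noise), gives the claimed bound $O(\sqrt{SAKH^3\iota}+KH^2\Delta_p^{(1)})$.

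The main technical obstacle will be the martingale setup in the second step: the indices $\check{l}_1,\dots,\check{l}_{\check{n}}$ of the samples contributing to $\check{v}_h$ are themselves data-dependent (they are determined by when $(s,a,h)$ is visited and by the stage schedule), so the Azuma--Hoeffding argument must be applied to an augmented filtration in which the stage-endpoint times are stopping times, or equivalently must be applied to a single sequence indexed by visitation number to $(s,a,h)$ and then restricted to stage endpoints via a union bound over at most $\log_{1+1/H}(K)$ possible stage lengths per triple. Handling this carefully, while simultaneously tracking the $\Delta_p^{(1)}$ drift from swapping $P_h^{\check{l}_i}$ for $P_h^k$, is where the proof needs to be executed most attentively.
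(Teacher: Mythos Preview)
Your proposal has two genuine gaps.

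First, you have misidentified $\xi_{h+1}^k$. In the paper's decomposition it is
\[
\xi_{h+1}^k \;=\; \frac{1}{\check{n}}\sum_{i=1}^{\check{n}}\bigl(P_h^{k} - \mbf{e}_{s_{h+1}^{\check{l}_i}}\bigr)\bigl(V_{h+1}^{\check{l}_i} - V_{h+1}^{\check{l}_i,\star}\bigr)(s_h^k,a_h^k),
\]
so the function being averaged is the \emph{optimism gap} $V_{h+1}^{\check{l}_i}-V_{h+1}^{\check{l}_i,\star}$, not $V_{h+1}^{\check{l}_i,\star}$ alone. Since $V_{h+1}^{\check{l}_i}$ is the algorithm's data-dependent estimate, your step of ``replace the moving target $V_{h+1}^{\check{l}_i,\star}$ by the fixed $V_{h+1}^{k,\star}$ via Lemma~\ref{lemma:optimalQ}'' does not apply; there is no deterministic surrogate for $V_{h+1}^{\check{l}_i}$. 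The drift is instead isolated on the \emph{kernel} side: write $P_h^k = P_h^{\check{l}_i} + (P_h^k-P_h^{\check{l}_i})$, bound the second piece by H\"older together with $V_{h+1}^{\check{l}_i}\ge V_{h+1}^{\check{l}_i,\star}$ from Lemma~\ref{lemma:optimistic_bound}, and obtain the $O(KH^2\Delta_p^{(1)})$ contribution (with no $\Delta_r^{(1)}$ term, as the statement indicates).

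Second, your concentration step both loses a factor of $H$ and misses the real obstacle. Bounding each term individually by $H\sqrt{\iota/\check{n}_h^k}$ and summing with the Lemma~\ref{lemma:b_bound} pigeonhole gives $\sum_{h,k}H\sqrt{\iota/\check{n}_h^k}=O(\sqrt{SAKH^5\iota})$, not $O(\sqrt{SAKH^3\iota})$ as you assert. To get the sharper bound one must swap the order of summation so that each sample episode $j$ appears once and apply a \emph{single} Azuma--Hoeffding over all $KH$ increments; but the resulting weight $\theta_{h+1}^j=(1+\tfrac{1}{H})^{h-1}\sum_{k}\tfrac{1}{\check{n}_h^k}\indicator[\check{l}_{h,i}^k=j]$ depends on the length of the stage \emph{after} the one containing $j$, hence is not measurable at time $j$, and the sum is not a martingale. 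The paper's fix (following \citet{zhang2020almost}) is to introduce the predictable proxy $\tilde{\theta}_{h+1}^j=(1+\tfrac{1}{H})^{h-1}\lfloor(1+\tfrac{1}{H})x_h^j\rfloor/x_h^j$, which coincides with $\theta_{h+1}^j$ except when $j$ lies in the last two stages of its triple. The $\tilde{\theta}$-weighted sum is then a bona fide martingale handled by one Azuma as $O(\sqrt{KH^3\iota})$; the residual $(\theta-\tilde{\theta})$ part is nonzero only on the last two stages per $(s,a,h)$, is bounded per triple by Azuma (this is where the $KH\delta$ in the union bound comes from), and aggregated by Cauchy--Schwarz using that those stages carry only an $O(1/H)$ fraction of all visits, yielding $O(\sqrt{SAKH^3\iota})$. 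This predictable-proxy argument is the core of the lemma and is absent from your plan.
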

\begin{proof}
	We have that
	\begin{align}
	&\sum_{h=1}^H\sum_{k=1}^K(1+\frac{1}{H})^{h-1}\xi_{h+1}^k\nonumber\\
	=& \sum_{h=1}^H\sum_{k=1}^K(1+\frac{1}{H})^{h-1} \frac{1}{\check{n}}  \sum_{i=1}^{\check{n}} \left(P_h^{k} - \mbf{e}_{s_{h+1}^{\check{l}_i}} \right) \left(V_{h+1}^{\check{l}_i} - V_{h+1}^{\check{l}_i,\star}\right) (s_h^k,a_h^k)\nonumber\\
	=& \sum_{h=1}^H\sum_{k=1}^K(1+\frac{1}{H})^{h-1} \frac{1}{\check{n}}  \sum_{i=1}^{\check{n}} \left(P_h^{k} - P_h^{\check{l}_i} + P_h^{\check{l}_i} - \mbf{e}_{s_{h+1}^{\check{l}_i}} \right) \left(V_{h+1}^{\check{l}_i} - V_{h+1}^{\check{l}_i,\star}\right) (s_h^k,a_h^k)\nonumber\\
	\leq & O(KH^2\Delta_p^{(1)})+ \sum_{h=1}^H\sum_{k=1}^K(1+\frac{1}{H})^{h-1} \frac{1}{\check{n}}\sum_{i=1}^{\check{n}} \left( P_h^{\check{l}_i} - \mbf{e}_{s_{h+1}^{\check{l}_i}} \right) \left(V_{h+1}^{\check{l}_i} - V_{h+1}^{\check{l}_i,\star}\right) (s_h^k,a_h^k)\label{eqn:e1},
	\end{align}
	where the last step is by the fact that $V_{h+1}^{\check{l}_i}(s_h^k,a_h^k) \geq V_{h+1}^{\check{l}_i,\star}(s_h^k,a_h^k)$ from Lemma~\ref{lemma:optimistic_bound}, and then by H\"{o}lder's inequality and the triangle inequality. The following proof is analogous to the proof of Lemma 15 in~\citet{zhang2020almost}. For completeness we reproduce it here. We have
	\begin{align}
	&\sum_{h=1}^H\sum_{k=1}^K(1+\frac{1}{H})^{h-1} \frac{1}{\check{n}}\sum_{i=1}^{\check{n}} \left( P_h^{\check{l}_i} - \mbf{e}_{s_{h+1}^{\check{l}_i}} \right) \left(V_{h+1}^{\check{l}_i} - V_{h+1}^{\check{l}_i,\star}\right) (s_h^k,a_h^k)\nonumber\\
	=&  \sum_{h=1}^H\sum_{k=1}^K\sum_{j=1}^K(1+\frac{1}{H})^{h-1} \frac{1}{\check{n}_h^k}\sum_{i=1}^{\check{n}_h^k} \indicator\left[\check{l}_{h,i}^k = j\right]\left( P_h^{j} - \mbf{e}_{s_{h+1}^{j}} \right) \left(V_{h+1}^{j} - V_{h+1}^{j,\star}\right) (s_h^k,a_h^k) \nonumber\\
	=&\sum_{h=1}^H\sum_{k=1}^K\sum_{j=1}^K(1+\frac{1}{H})^{h-1} \frac{1}{\check{n}_h^k}\sum_{i=1}^{\check{n}_h^k} \indicator\left[\check{l}_{h,i}^k = j\right]\left( P_h^{j} - \mbf{e}_{s_{h+1}^{j}} \right) \left(V_{h+1}^{j} - V_{h+1}^{j,\star}\right) (s_h^j,a_h^j) \label{eqn:e2},
	\end{align}
	where~\eqref{eqn:e2} holds because $\check{l}_{h,i}^k(s_h^k,a_h^k) = j$ if and only if $j$ is in the previous stage of $k$ and $(s_h^k,a_h^k) = (s_h^j,a_h^j)$. For simplicity of notations, we define $\theta_{h+1}^k \defeq (1+\frac{1}{H})^{h-1}\sum_{j=1}^K \frac{1}{\check{n}_h^j}\sum_{i=1}^{\check{n}_h^j} \indicator\left[\check{l}_{h,i}^j = k\right]$. Then, we further have (note that we have swapped the notation of $j$ and $k$) $$\eqref{eqn:e2} = \sum_{h=1}^H\sum_{k=1}^K\theta_{h+1}^k\left( P_h^{k} - \mbf{e}_{s_{h+1}^{k}} \right) \left(V_{h+1}^{k} - V_{h+1}^{k,\star}\right) (s_h^k,a_h^k).$$
	
	For $(k,h)\in [K]\times [H]$, let $x_h^k$ denote the number of occurrences of the triple $(s_h^k,a_h^k,h)$ in the current stage. Define also $\tilde{\theta}_{h+1}^k \defeq (1+\frac{1}{H})^{h-1} \frac{\floor{(1+\frac{1}{H})x_h^k}}{x_h^k}\leq 3$. Define $\mc{K}\defeq \{(k,h):\theta_{h+1}^k = \tilde{\theta}_{h+1}^k\}$, and $\bar{\mc{K}}\defeq \{(k,h)\in [K]\times[H]:\theta_{h+1}^k \neq \tilde{\theta}_{h+1}^k\}$. Then, we have that
	\[
	\begin{aligned}
	\eqref{eqn:e2} = &\sum_{h=1}^H\sum_{k=1}^K\tilde{\theta}_{h+1}^k\left( P_h^{k} - \mbf{e}_{s_{h+1}^{k}} \right) \left(V_{h+1}^{k} - V_{h+1}^{k,\star}\right) (s_h^k,a_h^k) \\
	& \qquad+ \sum_{h=1}^H\sum_{k=1}^K (\theta_{h+1}^k-\tilde{\theta}_{h+1}^k) \left( P_h^{k} - \mbf{e}_{s_{h+1}^{k}} \right) \left(V_{h+1}^{k} - V_{h+1}^{k,\star}\right) (s_h^k,a_h^k).
	\end{aligned}
	\]
	Since $\tilde{\theta}_{h+1}^k$ is independent of $s_{h+1}^k$, by the Azuma-Hoeffding inequality, it holds with probability at least $1-\delta$ that  
	\begin{equation}\label{eqn:e3}
	\sum_{h=1}^H\sum_{k=1}^K\tilde{\theta}_{h+1}^k\left( P_h^{k} - \mbf{e}_{s_{h+1}^{k}} \right) \left(V_{h+1}^{k} - V_{h+1}^{k,\star}\right) (s_h^k,a_h^k) \leq O(\sqrt{KH^3\iota}). 
	\end{equation}
	It is easy to see that if $k$ is in a stage that is before the second last stage of the triple $(s_h^k,a_h^k,h)$, then $(k,h)\in\mc{K}$. For a triple $(s,a,h)$, define $\mc{K}_h^\perp(s,a) \defeq \{k\in[K]: k \text{ is in the second last stage of the triple } (s,a,h),\ (s_h^k,a_h^k) = (s,a) \}$. We have that 
	\begin{align}
	&\sum_{h=1}^H\sum_{k=1}^K (\theta_{h+1}^k-\tilde{\theta}_{h+1}^k) \left( P_h^{k} - \mbf{e}_{s_{h+1}^{k}} \right) \left(V_{h+1}^{k} - V_{h+1}^{k,\star}\right) (s_h^k,a_h^k)\nonumber\\
	=& \sum_{s,a,h}\sum_{k:(k,h)\in\bar{\mc{K}}} \indicator\left[(s_h^k,a_h^k) = (s,a)\right] (\theta_{h+1}^k-\tilde{\theta}_{h+1}^k) \left( P_h^{k} - \mbf{e}_{s_{h+1}^{k}} \right) \left(V_{h+1}^{k} - V_{h+1}^{k,\star}\right) (s,a)\nonumber\\
	=& \sum_{s,a,h}(\theta_{h+1}(s,a)-\tilde{\theta}_{h+1}(s,a))\sum_{k\in \mc{K}^\perp_h(s,a)} \left( P_h^{k} - \mbf{e}_{s_{h+1}^{k}} \right) \left(V_{h+1}^{k} - V_{h+1}^{k,\star}\right) (s,a),\label{eqn:e4}
	\end{align}
	where for a fixed triple $(s,a,h)$, we have defined $\theta_{h+1}(s,a)\defeq \theta_{h+1}^k$, for any $k\in \mc{K}^\perp_h(s,a)$. Note that $\theta_{h+1}(s,a)$ is well-defined, because $\theta_{h+1}^{k_1} = \theta_{h+1}^{k_2}, \forall k_1,k_2\in \mc{K}_h^\perp(s,a)$. Similarly, let $\tilde{\theta}_{h+1}(s,a) \defeq \tilde{\theta}_{h+1}^k$ for any $k \in \mc{K}^\perp_h(s,a)$, and $\tilde{\theta}_{h+1}(s,a)$ is also well-defined. By the Azuma-Hoeffding inequality and a union bound, it holds with probability at least $1-KH\delta$ that 
	\begin{align}
	\eqref{eqn:e4} \leq& \sum_{s,a,h}O\left(\sqrt{H^2\abs{\mc{K}_h^\perp (s,a)}\iota}\right)\nonumber\\
	= &\sum_{s,a,h}O\left(\sqrt{H^2\check{N}_h^{K+1}(s,a)\iota}\right)\nonumber\\
	\leq &O\bigg(\sqrt{SAH^3 \iota \sum_{s,a,h}\check{N}_h^{K+1}(s,a)}\bigg)\label{eqn:e5}\\
	\leq & O\left(\sqrt{SAKH^3\iota}\right)\label{eqn:e6}
	\end{align}
	where $\check{N}_h^{K+1}(s,a)$ is defined to be the total number of visitations to the triple $(s,a,h)$ over the entire $K$ episodes. \eqref{eqn:e5} is by the Cauchy-Schwartz inequality. \eqref{eqn:e6} holds because, by the way stages are defined, for each triple $(s,a,h)$, the length of its last two stages is at most an $O(1/H)$ fraction of the total number of visitations. 
	
	Combining~\eqref{eqn:e1}, \eqref{eqn:e3} and \eqref{eqn:e6} completes the proof. 
\end{proof}

\section{Proof of Theorem~\ref{thm:regret}}\label{app:proof_regret}
We introduce a few terms to facilitate the analysis. Denote by $s_h^k$ and $a_h^k$ respectively the state and action taken at step $h$ of episode $k$. Let $N_{h}^{k}(s, a), \check{N}_{h}^{k}(s, a), Q_{h}^{k}(s, a)$ and $V_{h}^{k}(s)$ denote, respectively, the values of $N_{h}(s, a), \check{N}_{h}(s, a), Q_{h}(s, a)$ and $V_{h}(s)$ at the \emph{beginning} of the $k$-th episode in Algorithm~\ref{alg:RQUCB}. Further, for the triple $(s_h^k,a_h^k,h)$, let $n_h^k$ be the total number of episodes that this triple has been visited prior to the current stage, and let $l_{h,i}^k$ denote the index of the episode that this triple was visited the $i$-th time among the total $n_h^k$ times. Similarly, let $\check{n}_h^k$ denote the number of visits to the triple $(s_h^k,a_h^k,h)$ in the stage right before the current stage, and let $\check{l}_{h,i}^k$ be the $i$-th episode among the $\check{n}_h^k$ episodes right before the current stage. For simplicity, we use $l_i$ and $\check{l}_i$ to denote $l_{h,i}^k$ and $\check{l}_{h,i}^k$, and $\check{n}$ to denote $\check{n}_h^k$, when $h$ and $k$ are clear from the context. We also use $\check{r}_h(s,a)$ and $\check{v}_h(s,a)$ to denote the values of $\check{r}_h(s_h^k,a_h^k)$ and $\check{v}_h(s_h^k,a_h^k)$ when updating the $Q_h(s_h^k,a_h^k)$ value in Line~\ref{line:16} of Algorithm~\ref{alg:RQUCB}.

We now proceed to analyze the dynamic regret in one epoch, and at the very end of Appendix~\ref{app:proof_regret}, we will see how to combine the dynamic regret over all the epochs to prove Theorem~\ref{thm:regret}. The following analysis will be conditioned on the successful event of Lemma~\ref{lemma:optimistic_bound}. 

The dynamic regret of Algorithm~\ref{alg:RQUCB} in epoch $d=1$ can hence be expressed as
\begin{align}
\mc{R}^{(d)}(\pi, K) &= \sum_{k=1}^{K}\left(V_{1}^{k,*}\left(s_{1}^{k}\right)-V_{1}^{k,\pi}\left(s_{1}^{k}\right)\right)\nonumber\\ 
&\leq \sum_{k=1}^{K}\left(V_{1}^{k}\left(s_{1}^{k}\right)-V_{1}^{k,\pi}\left(s_{1}^{k}\right)\right).\label{eqn:regret}
\end{align}
From the update rules of the value functions in Algorithm~\ref{alg:RQUCB}, we have
$$
 \begin{aligned}
	&V_{h}^{k}(s_{h}^{k})\!
	\leq\! \indicator{\left[n_h^k=0\right]}H \!+\! \frac{\check{r}_h(s_h^k,a_h^k)}{\check{N}_h^k(s_h^k,a_h^k)}\! +\! \frac{\check{v}_h(s_h^k,a_h^k)}{\check{N}_h^k(s_h^k,a_h^k)} \!+\! b_h^k \!+ \!2b_\Delta \\
	&=\! \indicator{\left[n_h^k=0\right]}H \!+\! \frac{\check{r}_h(s_h^k,a_h^k)}{\check{N}_h^k(s_h^k,a_h^k)} \!+\! \frac{1}{\check{n}}\sum_{i=1}^{\check{n}} V_{h+1}^{\check{l}_i}(s_{h+1}^{\check{l}_i}) \!+\! b_h^k\!+\! 2b_\Delta.
	\end{aligned}
$$

For ease of exposition, we define the following terms:
\begin{align} 
\delta_h^k \defeq V_h^k(s_h^k)\! -\! V_h^{k,\star}(s_h^k),\ \zeta_h^k \defeq V_h^k(s_h^k)\! -\! V_h^{k,\pi}(s_h^k).\label{def:zeta}
\end{align} 
We further define $\tilde{r}_h^k(s_h^k,a_h^k) \defeq \frac{\check{r}_h(s_h^k,a_h^k)}{\check{N}_h^k(s_h^k,a_h^k)}- r_h^k(s_h^k,a_h^k)$. Then by the Hoeffding's inequality, it holds with high probability that 
\begin{align}
\tilde{r}_h^k(s_h^k,a_h^k) &\leq \frac{1}{\check{n}}\sum_{i=1}^{\check{n}}r_h^{\check{l}_i}(s_h^k,a_h^k)  + \sqrt{\frac{\iota}{\check{n}}}- r_h^k(s_h^k,a_h^k)\nonumber\\
&\leq b_h^k + b_\Delta.\label{eqn:tmp1}
\end{align}
By the Bellman equation $V_h^{k,\pi}(s_h^k)= Q_h^{k,\pi}(s_h^k,\pi(s_h^k))= r_h^k(s_h^k,a_h^k) + P_h^kV_{h+1}^{k,\pi}(s_h^k,a_h^k)$, we have 
\begin{align} 
\zeta_h^k 
\leq& \indicator{\left[n_h^k=0\right]}H + \frac{1}{\check{n}}\sum_{i=1}^{\check{n}} V_{h+1}^{\check{l}_i}(s_{h+1}^{\check{l}_i}) + b_h^k+ 2b_\Delta + \tilde{r}_h^k(s_h^k,a_h^k) - P_h^kV_{h+1}^{k,\pi}(s_h^k,a_h^k)\nonumber\\
\leq& \indicator{\left[n_h^k=0\right]}H + 
\frac{1}{\check{n}} \sum_{i=1}^{\check{n}} P_h^{\check{l}_i}V_{h+1}^{\check{l}_i}(s_h^k,a_h^k)- P_h^kV_{h+1}^{k,\pi}(s_h^k,a_h^k)
+ 3b_h^k + 3b_\Delta\label{eqn:c1}\\
=& \indicator{\left[n_h^k=0\right]}H + 
\underbrace{\frac{1}{\check{n}} \sum_{i=1}^{\check{n}} \left(P_h^{\check{l}_i} - P_h^k\right) V_{h+1}^{\check{l}_i} (s_h^k,a_h^k)}_{\tiny \circled{1}} + \underbrace{\frac{1}{\check{n}}  \sum_{i=1}^{\check{n}} P_h^{k} \left(V_{h+1}^{\check{l}_i} - V_{h+1}^{\check{l}_i,\star}\right) (s_h^k,a_h^k)}_{\tiny \circled{2}}+ 3b_h^k+ 3b_\Delta\nonumber\\
&+ \underbrace{\frac{1}{\check{n}}  \sum_{i=1}^{\check{n}} P_h^{k} \left(V_{h+1}^{\check{l}_i,\star} - V_{h+1}^{k,\pi}\right)(s_h^k,a_h^k)}_{\tiny \circled{3}} \label{eqn:c2},
\end{align}
where~\eqref{eqn:c1} is by the Azuma-Hoeffding inequality and by~\eqref{eqn:tmp1}. In the following, we bound each term in~\eqref{eqn:c2} separately.   First, by H\"{o}lder's inequality, we have 
\begin{equation}\label{eqn:c4}
{ \circled{1}} \leq \frac{1}{\check{n}}\sum_{i=1}^{\check{n}} \Delta_p^{(1)} (H-h) \leq  b_\Delta.
\end{equation}
Let $\mbf{e}_j$ denote a standard basis vector of proper dimensions that has a $1$ at the $j$-th entry and $0$s at the others, in the form of $(0,\dots,0,1,0, \dots,0)$. Recall the definition of $\delta_h^k$ in~\eqref{def:zeta}, and we have
\begin{align}
{ \circled{2}} =&  \underbrace{\frac{1}{\check{n}}  \sum_{i=1}^{\check{n}} \left(P_h^{k} - \mbf{e}_{s_{h+1}^{\check{l}_i}} \right) \left(V_{h+1}^{\check{l}_i} - V_{h+1}^{\check{l}_i,\star}\right) (s_h^k,a_h^k)}_{\xi_{h+1}^k}+ \frac{1}{\check{n}}\sum_{i=1}^{\check{n}} \delta_{h+1}^{\check{l}_i} =  \xi_{h+1}^k+ \frac{1}{\check{n}}\sum_{i=1}^{\check{n}} \delta_{h+1}^{\check{l}_i} .\label{eqn:c5}
\end{align}
Finally, recalling the definition of $\zeta_h^k$ in~\eqref{def:zeta}, we have that 
\begin{align}
{\circled{3}} =& \frac{1}{\check{n}}  \sum_{i=1}^{\check{n}}\left(V_{h+1}^{\check{l}_i,\star}(s_{h+1}^k) - V_{h+1}^{k,\pi}(s_{h+1}^k)\right) \nonumber\\
&+  \underbrace{\frac{1}{\check{n}}  \sum_{i=1}^{\check{n}} \left(P_h^{k} - \mbf{e}_{s_{h+1}^k} \right) \left(V_{h+1}^{\check{l}_i,\star} - V_{h+1}^{k,\pi}\right)(s_h^k,a_h^k)}_{\phi_{h+1}^k}\nonumber\\
=& \frac{1}{\check{n}}  \!\sum_{i=1}^{\check{n}}\!\left(V_{h+1}^{\check{l}_i,\star}(s_{h+1}^k) \!-\! V_{h+1}^{k,\star}(s_{h+1}^k)\right) \!+\! \zeta_{h+1}^k \!-\! \delta_{h+1}^k \!+\! \phi_{h+1}^k\nonumber\\
\leq& b_\Delta + \zeta_{h+1}^k - \delta_{h+1}^k + \phi_{h+1}^k \label{eqn:c3}
\end{align}
where inequality~\eqref{eqn:c3} is by Lemma~\ref{lemma:optimalQ}. Combining~\eqref{eqn:c2}, \eqref{eqn:c4}, \eqref{eqn:c5}, and~\eqref{eqn:c3} leads to
\begin{align}
\zeta_h^k \leq &\indicator\left[n_h^k=0\right]H + \frac{1}{\check{n}}\sum_{i=1}^{\check{n}} \delta_{h+1}^{\check{l}_i} + \xi_{h+1}^k + \zeta_{h+1}^k- \delta_{h+1}^k+\phi_{h+1}^k +3b_h^k+ 5b_\Delta.\label{eqn:zeta}
\end{align}
To find an upper bound of $\sum_{k=1}^K \zeta_h^k$, we proceed to upper bound each term on the RHS of~\eqref{eqn:zeta} separately. First, notice that $\sum_{k=1}^K \indicator\left[n_h^k=0\right]\leq SAH$, because each fixed triple $(s,a,h)$ contributes at most $1$ to $\sum_{k=1}^K \indicator\left[n_h^k=0\right]$. \wmedit{In the following, we upper bound the second term in~\eqref{eqn:zeta}. Notice that
\begin{equation}\label{eqn:d1}
\sum_{k=1}^K \frac{1}{\check{n}_h^k}\sum_{i=1}^{\check{n}_h^k} \delta_{h+1}^{\check{l}_{h,i}^k} = \sum_{k=1}^K \sum_{j=1}^K \frac{1}{\check{n}_h^k} \delta_{h+1}^{j} \sum_{i=1}^{\check{n}_h^k} \indicator\left[\check{l}_{h,i}^k = j\right] =\sum_{j=1}^K  \delta_{h+1}^{j} \sum_{k=1}^K  \frac{1}{\check{n}_h^k} \sum_{i=1}^{\check{n}_h^k} \indicator\left[\check{l}_{h,i}^k = j\right].
\end{equation}
For a fixed episode $j$, notice that $\sum_{i=1}^{\check{n}_h^k} \indicator[\check{l}_{h,i}^k = j]\leq 1$, and that $\sum_{i=1}^{\check{n}_h^k} \indicator[\check{l}_{h,i}^k = j]= 1$ happens if and only if $(s_h^k,a_h^k) = (s_h^j,a_h^j)$ and $(j,h)$ lies in the previous stage of $(k,h)$ with respect to the triple $(s_h^k,a_h^k,h)$.  Let $\mc{K}\defeq \{ k\in[K]: \sum_{i=1}^{\check{n}_h^k} \indicator[\check{l}_{h,i}^k = j]= 1 \}$; then, we know that every element $k\in \mc{K}$ has the same value of $\check{n}_h^k$, i.e., there  exists an integer $N_j>0$, such that $\check{n}_h^k = N_j,\forall k \in \mc{K}$. Further, by our definition of the stages, we know that $|\mc{K}|\leq (1+\frac{1}{H})N_j$, because the current stage is at most $(1+\frac{1}{H})$ times longer than the previous stage. Therefore, for every $j$, we know that 
\begin{equation}\label{eqn:d2}
\sum_{k=1}^K  \frac{1}{\check{n}_h^k} \sum_{i=1}^{\check{n}_h^k} \indicator\left[\check{l}_{h,i}^k = j\right] \leq 1+ \frac{1}{H}.
\end{equation} 
Substituting it back into \eqref{eqn:d1} leads to
\begin{equation}\label{eqn:d0}
\sum_{k=1}^K \frac{1}{\check{n}_h^k}\sum_{i=1}^{\check{n}_h^k} \delta_{h+1}^{\check{l}_{h,i}^k}\leq (1+\frac{1}{H}) \sum_{k=1}^{K} \delta_{h+1}^{k}.
\end{equation} }

Combining~\eqref{eqn:zeta} and \eqref{eqn:d0}, we now have that 
\begin{align} 
\sum_{k=1}^K\zeta_h^k \leq &SAH^2+ \frac{1}{H}\sum_{k=1}^{K} \delta_{h+1}^{k}+ \sum_{k=1}^{K}\left(\xi_{h+1}^k + \zeta_{h+1}^k + \phi_{h+1}^k +3b_h^k+ 5b_\Delta\right)\nonumber\\
\leq & SAH^2+ (1+\frac{1}{H})\sum_{k=1}^{K} \zeta_{h+1}^{k}+ \sum_{k=1}^{K}\underbrace{\left(\xi_{h+1}^k + \phi_{h+1}^k +3b_h^k+ 5b_\Delta\right)}_{\Lambda_{h+1}^k}\label{eqn:d3},
\end{align}
where in~\eqref{eqn:d3} we have used the fact that $\delta_{h+1}^k \leq \zeta_{h+1}^k$, which in turn is due to the optimality that $V_h^{k,\star}(s_h^k)\geq V_h^{k,\pi}(s_h^k)$. Notice that we have $\zeta_h^k$ on the LHS of~\eqref{eqn:d3} and $\zeta_{h+1}^k$ on the RHS. By iterating~\eqref{eqn:d3} over $h = H, H-1,\dots, 1$, we conclude that 
\begin{equation}\label{eqn:zeta_bound}
\sum_{k=1}^K \zeta_1^k \leq O\left( SAH^3 + \sum_{h=1}^H \sum_{k=1}^K (1+\frac{1}{H})^{h-1}\Lambda_{h+1}^k \right). 
\end{equation}
We bound $\sum_{h=1}^H \sum_{k=1}^K (1+\frac{1}{H})^{h-1}\Lambda_{h+1}^k$ in the proposition below. Its proof relies on a series of lemmas in Appendix~\ref{app:lemmas} that upper bound each term in $\Lambda_{h+1}^k$ separately. 
\begin{proposition}\label{prop:1}
	With probability at least $1-(KH+2)\delta$, it holds that 
	{\[  
		\sum_{h=1}^H \!\sum_{k=1}^K (1+\frac{1}{H})^{h-1}\!\Lambda_{h+1}^k \!\leq\! \widetilde{O}( \sqrt{SAKH^5} + KH\Delta_r^{(1)} \!+ KH^2\Delta_p^{(1)}). 
		\]}
\end{proposition}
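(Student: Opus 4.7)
The plan is to decompose $\Lambda_{h+1}^k = \xi_{h+1}^k + \phi_{h+1}^k + 3b_h^k + 5b_\Delta$ into its four constituent parts, bound the contribution of each part via the three lemmas already stated in Appendix~\ref{app:lemmas} (Lemmas~\ref{lemma:b_bound}, \ref{lemma:phi_bound}, \ref{lemma:xi_bound}), and then combine via a union bound. Concretely, I would first invoke Lemma~\ref{lemma:b_bound} to deterministically bound $\sum_{h,k}(1+\tfrac{1}{H})^{h-1}(3b_h^k+5b_\Delta)$ by $O(\sqrt{SAKH^5\iota}+KH\Delta_r^{(1)}+KH^2\Delta_p^{(1)})$; then apply Lemma~\ref{lemma:phi_bound} to bound $\sum_{h,k}(1+\tfrac{1}{H})^{h-1}\phi_{h+1}^k$ by $O(\sqrt{KH^3\iota}+KH\Delta_r^{(1)}+KH^2\Delta_p^{(1)})$ with probability at least $1-\delta$; and finally apply Lemma~\ref{lemma:xi_bound} to bound $\sum_{h,k}(1+\tfrac{1}{H})^{h-1}\xi_{h+1}^k$ by $O(\sqrt{SAKH^3\iota}+KH^2\Delta_p^{(1)})$ with probability at least $1-(KH+1)\delta$.

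Summing the three bounds and using $\sqrt{SAKH^3\iota}\le\sqrt{SAKH^5\iota}$ absorbs the $\xi$- and $\phi$-contributions into the dominant $\sqrt{SAKH^5\iota}$ term coming from the bonuses, while the variation contributions aggregate into $KH\Delta_r^{(1)}+KH^2\Delta_p^{(1)}$. By a union bound over the failure events of Lemmas~\ref{lemma:phi_bound} and~\ref{lemma:xi_bound}, the combined bound holds with probability at least $1-\delta-(KH+1)\delta = 1-(KH+2)\delta$, matching the statement. Hiding the $\iota=\log(2/\delta)$ factor inside $\widetilde{O}(\cdot)$ gives the claimed regret expression.

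Since all three supporting lemmas are already stated, there is no genuine obstacle remaining at this level: the proof is a bookkeeping step that collects the three sub-bounds, checks that the $(1+\tfrac{1}{H})^{h-1}\le e$ factor is absorbed as a constant (because $h\le H$), and verifies the probability accounting. The only mild subtlety worth double-checking is that the high-probability events used in Lemmas~\ref{lemma:phi_bound} and~\ref{lemma:xi_bound} are taken jointly so that the subsequent regret bound in~\eqref{eqn:zeta_bound} can be invoked under a single good event; this is handled by the union bound, and yields the final $(KH+2)\delta$ failure probability stated in the proposition.
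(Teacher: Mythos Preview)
Your proposal is correct and matches the paper's approach exactly: the paper's proof of Proposition~\ref{prop:1} is precisely this bookkeeping step of summing the bounds from Lemmas~\ref{lemma:b_bound}, \ref{lemma:phi_bound}, and~\ref{lemma:xi_bound} and taking a union bound over the latter two failure events to obtain the $(KH+2)\delta$ probability.
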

Now we are ready to prove Theorem~\ref{thm:regret}.
\begin{proof}
	(of Theorem~\ref{thm:regret}) By~\eqref{eqn:regret} and~\eqref{eqn:zeta_bound}, and by replacing $\delta$ with $\frac{\delta}{KH+2}$ in Proposition~\ref{prop:1}, we know that the dynamic regret in epoch $d=1$ can be upper bounded with probability at least $1-\delta$ by:
	\begin{align}
	\mc{R}^{(d)}(\pi,K) \leq  \widetilde{O}( SAH^3 \!+\! \sqrt{SAKH^5}\! +\! KH\Delta_r^{(1)} \!+ \!KH^2\Delta_p^{(1)} ),
	\end{align}
	and this holds for every epoch $d\in [D]$. Suppose $T= \Omega(SA\Delta H^2)$; summing up the dynamic regret over all the $D$ epochs gives us an upper bound of $\widetilde{O}( D\sqrt{SAKH^5} + \sum_{d=1}^DKH\Delta_r^{(d)} + \sum_{d=1}^DKH^2\Delta_p^{(d)} )$. Recall the definition that $\sum_{d=1}^D\Delta_r^{(d)} \leq \Delta_r$, $\sum_{d=1}^D\Delta_p^{(d)} \leq \Delta_p$, $\Delta = \Delta_r+\Delta_p$, and that $K = \Theta(\frac{T}{DH})$. By setting $D=S^{-\frac{1}{3}}A^{-\frac{1}{3}}\Delta^{\frac{2}{3}}H^{-\frac{2}{3}}T^{\frac{1}{3}}$, the dynamic regret over the entire $T$ steps is bounded by $\mc{R}(\pi,M) \leq \widetilde{O}( S^{\frac{1}{3}}A^{\frac{1}{3}}\Delta^{\frac{1}{3}} H^{\frac{5}{3}} T^{\frac{2}{3}} ),$	which completes the proof.
\end{proof}

\section{Proof Sketch of Theorem~\ref{thm:regret_no_budget}}
\begin{proofsketch}
	We only outline the difference with respect to the proof of Theorem~\ref{thm:regret}. The reader should have no difficulty recovering the complete proof by following the same routine as  in the proof of Theorem~\ref{thm:regret}. Specifically, it suffices to investigate the steps that are involved with Lemma~\ref{lemma:optimistic_bound}. 
	
	The dynamic regret of the new algorithm in epoch $d=1$ can now be expressed as
	\begin{equation}\label{eqn:g2}
	\mc{R}^{(d)}(\pi, K) = \sum_{k=1}^{K}\left(V_{1}^{k,*}\left(s_{1}^{k}\right)-V_{1}^{k,\pi}\left(s_{1}^{k}\right)\right) \leq \sum_{k=1}^{K}\left(V_{1}^{k}\left(s_{1}^{k}\right)-V_{1}^{k,\pi}\left(s_{1}^{k}\right)\right) + 2KHb_\Delta,
	\end{equation}
	where we applied the results of Lemma~\ref{lemma:new_optimistic_bound} instead of Lemma~\ref{lemma:optimistic_bound}. The reader should bear in mind that from the new update rules of the value functions, we now have 
	\begin{equation}\label{eqn:g1}
	V_{h}^{k}(s_{h}^{k})\leq \indicator{\left[n_h^k=0\right]}H + \frac{\check{r}_h(s_h^k,a_h^k)}{\check{N}_h^k(s_h^k,a_h^k)} + \frac{\check{v}_h(s_h^k,a_h^k)}{\check{N}_h^k(s_h^k,a_h^k)} + b_h^k, 
	\end{equation}
	where the RHS no longer has the additional bonus term $b_\Delta$. If we define $\zeta_h^k,\ \xi_{h+1}^k$, and $\phi_{h+1}^k$ in the same way as before, the reader can easily verify that all the derivations until Equation~\eqref{eqn:zeta_bound} still holds, although the value of $\Lambda_{h+1}^k$ should be re-defined as $\Lambda_{h+1}^k \defeq \xi_{h+1}^k + \phi_{h+1}^k +3b_h^k+ 3b_\Delta$
	due to the new upper bound in~\eqref{eqn:g1} that is independent of $b_\Delta$. Proposition~\ref{prop:1} also follows analogously though some additional attention should be paid to the proof of Lemma~\ref{lemma:xi_bound} where the results of Lemma~\ref{lemma:optimistic_bound} have been utilized. Finally, we obtain the dynamic regret upper bound in epoch $d=1$ as follows:
	\begin{equation}\label{eqn:a5}
	\mc{R}^{(d)}(\pi,K) \leq  \widetilde{O}\left( SAH^3 + \sqrt{SAKH^5} + KH\Delta_r^{(1)} + KH^2\Delta_p^{(1)} \right) + 2KHb_\Delta,	
	\end{equation}
	where the additional term $2KHb_\Delta$ comes from~\eqref{eqn:g2}. From our definition of $b_\Delta$, we can easily see that $2KHb_\Delta \leq O(KH\Delta_r^{(1)} + KH^2\Delta_p^{(1)})$. Therefore, we can conclude that the dynamic regret upper bound in one epoch remains the same order, which leaves the dynamic regret over the entire horizon also unchanged. 
\end{proofsketch}

\section{Proof of Theorem~\ref{thm:freedman}}\label{app:proof_freedman}
Similar to the proofs of Theorems~\ref{thm:regret} and~\ref{thm:regret_no_budget}, we start with the dynamic regret in one epoch, and then extend to all epochs in the end. The proof follows the same routine as in the proofs of Theorems~\ref{thm:regret} and~\ref{thm:regret_no_budget}. Given that a rigorous analysis on the Freedman-based bonus with variance reduction is present in~\citet{zhang2020almost}, one should not find it difficult to extend our Hoeffding-based algorithm to a Freedman-based one. Therefore, rather than providing a complete proof of Theorem~\ref{thm:freedman}, in the following, we sketch the differences and highlight the additional analysis needed that is not covered by the proof of Theorem~\ref{thm:regret_no_budget} and~\citet{zhang2020almost}. 

To facilitate the analysis, first recall a few notations $N_h^k,\check{N}_h^k,Q_h^k(s,a), V_h^k(s), n_h^k,l_{h,i}^k, \check{n}_h^k,\check{l}_{h,i}^k,l_i$ and $\check{l}_i$ that we have defined in Appendix~\ref{app:proof_regret}. In addition, when $(h,k)$ is clear from the context, we drop the time indices and simply use $\check{\mu}, \check{\sigma}, \mu^{\re}, \sigma^{\re}$ to denote their corresponding values in the computation of the $Q_h(s_h^k,a_h^k)$ value in Line~\ref{line:16} of Algorithm~\ref{alg:RQUCB}. 

We start with the following lemma, which is an analogue of Lemma~\ref{lemma:new_optimistic_bound} but requires a more careful treatment of variations accumulated in $\mu^{\re}$ and $\check{\mu}_h$. It states that the optimistic $Q_h^k(s,a)$ is an ``upper bound'' of the optimal $Q_h^{k,\star}(s,a)$ subject to an error term of the order $2(H-h+1)b_\Delta$ with high probability. 
\begin{lemma}\label{lemma:optimistic_freedman}
	\wmedit{(Freedman, no local budgets) For $\delta \in (0,1)$, with probability at least $1-2KH\delta$, it holds that $Q_{h}^{k,\star}(s, a) - 4(H-h+1)b_\Delta  \leq Q_{h}^{k+1}(s, a) \leq Q_{h}^{k}(s, a),\forall (s,a,h,k)\in \mc{S}\times\mc{A}\times[H]\times[K]$. }
\end{lemma}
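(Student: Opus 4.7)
The plan is to mirror the proof of Lemma~\ref{lemma:new_optimistic_bound}, combining its double induction (backward on $h$, forward on $k$ with a case split on whether $Q_h(s,a)$ is updated in episode $k$) with the reference-advantage decomposition argument of \citet{zhang2020almost}. Monotonicity $Q_h^{k+1}(s,a)\le Q_h^k(s,a)$ is immediate from the $\min$ in the update rule, and the non-update case collapses to the last update time and an application of Lemma~\ref{lemma:optimalQ}, so the core task is to show that, whenever $Q_h(s,a)$ is updated at episode $k$, one of the three terms inside the $\min$ in \eqref{eqn:new_update} already satisfies a bound of the form $\ge Q_h^{k,\star}(s,a)-[4(H-h+1)-1]b_\Delta$; the additional $b_\Delta$ slack then comes from Lemma~\ref{lemma:optimalQ} when propagating the bound across episodes of the same epoch, as in Case 2 of Lemma~\ref{lemma:new_optimistic_bound}.

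For the update case I would focus on the reference-advantage expression $\frac{\check r}{\check n}+\frac{\mu^{\re}}{n}+\frac{\check\mu}{\check n}+2\underline{b}_h$, rewritten as
\[
\frac{\check r}{\check n}+\frac{1}{n}\sum_{i=1}^{n}V_{h+1}^{\re}(s_{h+1}^{l_i})+\frac{1}{\check n}\sum_{i=1}^{\check n}\bigl(V_{h+1}^{\check l_i}(s_{h+1}^{\check l_i})-V_{h+1}^{\re}(s_{h+1}^{\check l_i})\bigr)+2\underline{b}_h.
\]
Applying Freedman's inequality to each of the two averages (treating $V_{h+1}^{\re}$ as a frozen $[0,H]$-valued quantity once it is set, and noting that by definition $\underline{b}_h$ embeds the empirical second moments of $V_{h+1}^{\re}$ and of $V_{h+1}-V_{h+1}^{\re}$ together with the $H\iota/n$, $H\iota/\check n$, $H\iota^{3/4}/n^{3/4}$, $H\iota^{3/4}/\check n^{3/4}$ higher-order corrections required by the Freedman tail), this expression concentrates around $\frac{1}{\check n}\sum_{i=1}^{\check n}(r_h^{\check l_i}+P_h^{\check l_i}V_{h+1}^{\check l_i})(s,a)$ up to slack absorbed in $\underline{b}_h$, plus an $O(b_\Delta)$ cross-term coming from the fact that the reference sample average runs over episodes $\{l_i\}$ whose transition kernels $P_h^{l_i}$ differ from $P_h^k$ by at most the local transition variation, giving $\bigl|\frac{1}{n}\sum_i(P_h^{l_i}-P_h^k)V_{h+1}^{\re}(s,a)\bigr|\le H\Delta_p^{(1)}\le b_\Delta$, and similarly for the advantage sum. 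Plugging in the inductive hypothesis $V_{h+1}^{\check l_i}(s_{h+1}^{\check l_i})\ge V_{h+1}^{\check l_i,\star}(s_{h+1}^{\check l_i})-4(H-h)b_\Delta$, then collapsing via the Bellman optimality equation and using Lemma~\ref{lemma:optimalQ} to replace $Q_h^{\check l_i,\star}$ by $Q_h^{k,\star}-b_\Delta$, one obtains a total slack of $4(H-h)b_\Delta+b_\Delta+(\text{reference non-stationarity})$, which the choice of constant $4$ in front of $(H-h+1)$ is designed to cover.

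The main obstacle is the bookkeeping around the reference term $\frac{\mu^{\re}}{n}$, which is new relative to Lemma~\ref{lemma:new_optimistic_bound}: (i) the $n$ samples used to form $\mu^{\re}$ span a potentially long portion of the epoch, so the non-stationarity error they accumulate must be charged against $b_\Delta$ rather than against $\underline{b}_h$, and one must verify that the chain $\sum_{i=1}^{n}(P_h^{l_i}-P_h^k)V_{h+1}^{\re}$ is still $O(nb_\Delta)$ even though $V_{h+1}^{\re}$ itself was frozen at some earlier episode inside the epoch; (ii) the empirical variances $\sigma^{\re}/n-(\mu^{\re}/n)^2$ and $\check\sigma/\check n-(\check\mu/\check n)^2$ that drive $\underline{b}_h$ must be shown to remain $O(H^2)$ after accounting for the $O((H-h)b_\Delta)$ drift in $V_{h+1}$ across the stage, so that the Freedman bonus indeed dominates the concentration error rather than being inflated by non-stationarity. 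Once these two points are controlled with the same telescoping argument as in Appendix~\ref{app:lemmas} (combined with the monotonicity argument that propagates the bound through stages and across non-update episodes), the induction closes and yields the claimed $4(H-h+1)b_\Delta$ slack.
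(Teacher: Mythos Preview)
Your proposal follows essentially the same route as the paper: induction on $k$ with a case split on whether $Q_h(s,a)$ is updated, Freedman concentration on the reference and advantage averages (the paper calls these $\chi_1$ and $\chi_2$ and bounds $|\chi_1|+|\chi_2|\le \underline{b}_h$), and H\"older/variation-budget control of the kernel-drift cross-terms. The arithmetic you sketch---$4(H-h)b_\Delta$ from the inductive hypothesis at step $h+1$, $+b_\Delta$ from Lemma~\ref{lemma:optimalQ}, plus $2b_\Delta$ from kernel drift, leaving one unit of slack for the non-update case---matches the paper's accounting exactly.

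There is, however, one concrete ingredient you are missing. After re-centering both the reference and advantage sums at the current kernel $P_h^k$ (your cross-term step), you are left with the difference
\[
\frac{1}{n}\sum_{i=1}^n P_h^k V_{h+1}^{\re,l_i}(s,a)\;-\;\frac{1}{\check n}\sum_{i=1}^{\check n} P_h^k V_{h+1}^{\re,\check l_i}(s,a).
\]
You implicitly treat $V_{h+1}^{\re}$ as a single frozen vector, in which case this term would vanish. But the accumulator $\mu^{\re}$ sums $V_{h+1}^{\re,l_i}(s_{h+1}^{l_i})$ using the reference value \emph{at the time of the $i$-th visit}, and that value changes once (from the initial $H$ down to the learned reference) when $N_0$ samples have been collected for $s_{h+1}$. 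Consequently this difference can be of order $H$ and cannot be charged to $b_\Delta$. The paper (equation~\eqref{eqn:f4}) handles it by observing that $V_{h+1}^{\re,k}(s)$ is \emph{non-increasing in $k$}: since the indices $\{l_i\}$ cover all prior visits while the $\{\check l_i\}$ are the most recent ones, the first average dominates the second, so the difference is $\ge 0$ and can simply be dropped in the lower bound. This monotonicity step is what you need to add; your obstacle~(i) flags the right term but proposes the wrong mechanism (charging it to $b_\Delta$ does not work).

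Your obstacle~(ii) is a non-issue: the empirical variances entering $\underline{b}_h$ are trivially $O(H^2)$ because the summands are $[0,H]$-valued, and Freedman's inequality only needs a bound on the conditional variance of each increment, which is unaffected by any $O((H-h)b_\Delta)$ drift of $V_{h+1}$ across the stage.
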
 
\begin{proof}
	\wmedit{It should be clear from the way we update $Q_h(s,a)$ that $Q_h^k(s,a)$ is monotonically decreasing in $k$. We now prove $Q_h^{k,\star}(s,a) - 4(H-h+1)b_\Delta \leq Q_h^{k+1}(s,a)$ for all $s,a,h,k$ by induction on $k$. First, it holds for $k=1$ by our initialization of $Q_h(s,a)$. For $k\geq 2$, now suppose $Q_h^{j,\star}(s,a) - 4(H-h+1)b_\Delta\leq Q_h^{j}(s,a)$ for all $s,a,h$ and $1 \leq j \leq k$. For a fixed triple $(s,a,h)$, we consider the following two cases.  }
	
	\textbf{Case 1:} $Q_h(s,a)$ is updated in episode $k$. Notice that it suffices to analyze the case where $Q_h(s,a)$ is updated using $\underline{b}_h^k$, because the other case of $b_h^k$ would be exactly the same as in Lemma~\ref{lemma:new_optimistic_bound}. With probability at least $1-\delta$,
	\begin{align}
	Q_h^{k+1}(s,a) = & \frac{\check{r}_h(s,a)}{\check{N}_h^k(s,a)} + \frac{\mu^\re (s,a)}{N_h^k(s,a)} +  \frac{\check{\mu}_h(s,a)}{\check{N}_h^k(s,a)} + 2\underline{b}_h^k \nonumber\\
	= & \frac{\check{r}_h(s,a)}{\check{n}} + \underbrace{\frac{1}{n}\sum_{i=1}^n\left( V_{h+1}^{\re,l_i}(s_{h+1}^{l_i}) - P_h^{l_i}V_{h+1}^{\re,l_i}(s,a) \right)}_{\chi_1} \nonumber\\
	&+ \underbrace{\frac{1}{\check{n}}\sum_{i=1}^{\check{n}}\left[\left( V_{h+1}^{\check{l}_i}(s_{h+1}^{\check{l}_i}) - V_{h+1}^{\re,\check{l}_i}(s_{h+1}^{\check{l}_i}) \right) - \left( P_h^{\check{l}_i}V_{h+1}^{\check{l}_i} - P_{h}^{\check{l}_i}V_{h+1}^{\re,\check{l}_i} \right)(s,a) \right] }_{\chi_2} \nonumber\\
	&+\underbrace{\frac{1}{n}\sum_{i=1}^n P_h^{l_i} V_{h+1}^{\re, {l_i}} + \frac{1}{\check{n}}\sum_{i=1}^{\check{n}} \left( P_h^{\check{l}_i}V_{h+1}^{\check{l}_i} - P_h^{\check{l}_i}V_{h+1}^{\re,\check{l}_i} \right)(s,a)}_{\chi_3}+ 2\underline{b}_h^k. \label{eqn:f1}
	\end{align} 
	In the following, we will bound each term in~\eqref{eqn:f1} separately. First, we have that
	\begin{align}
	\chi_3  =& \frac{1}{n}\sum_{i=1}^n\left( P_h^{l_i} V_{h+1}^{\re, {l_i}} - P_h^kV_{h+1}^{\re,l_i}\right)(s,a)\label{eqn:f2}\\
	&- \frac{1}{\check{n}}\sum_{i=1}^{\check{n}} \left( P_h^{\check{l}_i}V_{h+1}^{\re,\check{l}_i} - P_h^kV_{h+1}^{\re,\check{l}_i} \right)(s,a)\label{eqn:f3}\\
	& + \frac{1}{n}\sum_{i=1}^n P_h^kV_{h+1}^{\re,l_i}(s,a) - \frac{1}{\check{n}}\sum_{i=1}^{\check{n}}P_h^kV_{h+1}^{\re,\check{l}_i}(s,a) + \frac{1}{\check{n}}\sum_{i=1}^{\check{n}}  P_h^{\check{l}_i}V_{h+1}^{\check{l}_i}(s,a)\label{eqn:f4}\\
	\geq &\frac{1}{\check{n}}\sum_{i=1}^{\check{n}}  P_h^{\check{l}_i}V_{h+1}^{\check{l}_i}(s,a)-2b_\Delta\label{eqn:f8},
	\end{align}
	where~\eqref{eqn:f2}$\geq -b_\Delta$ and~\eqref{eqn:f3}$\geq -b_\Delta$ by H\"{o}lder's inequality and the definition of $b_\Delta$. In~\eqref{eqn:f4}, we have that $\frac{1}{n}\sum_{i=1}^n P_h^kV_{h+1}^{\re,l_i}(s,a) - \frac{1}{\check{n}}\sum_{i=1}^{\check{n}}P_h^kV_{h+1}^{\re,\check{l}_i}(s,a) \geq 0$, because $V_{h+1}^{\re,k}(s)$ is non-increasing in $k$. 
	
	Following a similar procedure as in Lemma 10, Lemma 12, and Lemma 13 in~\citet{zhang2020almost}, we can further bound $\abs{\chi_1}$ and $\abs{\chi_2}$ as follows:
	\begin{align}
	\left|\chi_{1}\right| &\leq 2 \sqrt{\frac{\nu^{\re} \iota}{n}}+\frac{5 H \iota^{\frac{3}{4}}}{n^{\frac{3}{4}}}+\frac{2 \sqrt{\iota}}{T n}+\frac{2 H \iota}{n},\label{eqn:f6}\\
	\left|\chi_{2}\right| &\leq 2 \sqrt{\frac{\check{\nu} \iota}{\check{n}}}+\frac{5 H \iota^{\frac{3}{4}}}{\check{n}^{\frac{3}{4}}}+\frac{2 \sqrt{\iota}}{T \check{n}}+\frac{2 H \iota}{\check{n}},\label{eqn:f7}
	\end{align}
	where $\nu^{\text {ref }}\defeq \frac{\sigma^{\text {ref }}}{n}-\left(\frac{\mu^{\text {ref }}}{n}\right)^{2}$ and $\check{{\nu}}\defeq\frac{\check{\sigma}}{\check{n}}-\left(\frac{\check{\mu}}{\check{n}}\right)^{2}$. These are the steps where Freedman's inequality~\citep{freedman1975tail} come into use, and we omit these steps since they are essentially the same as the derivations in~\citet{zhang2020almost}. We can see from~\eqref{eqn:f6}, \eqref{eqn:f7}, and the definition of $\underline{b}_h^k$ that $\abs{\chi_1} + \abs{\chi_2}\leq \underline{b}_h^k$. 
	
	Substituting the results on $\chi_1,\chi_2$ and $\chi_3$ back to~\eqref{eqn:f1}, it holds that with probability at least $1-\delta$,
	\begin{align}
	Q_{h}^{k+1}(s,a) = & \frac{\check{r}_h(s,a)}{\check{n}}  + \chi_1 + \chi_2 + \chi_3 + 2\underline{b}_h^k\nonumber\\
	\geq & \frac{\check{r}_h(s,a)}{\check{n}}  + \frac{1}{\check{n}}\sum_{i=1}^{\check{n}}  P_h^{\check{l}_i}V_{h+1}^{\check{l}_i}(s,a) + \underline{b}_h^k-2b_\Delta\label{eqn:f5}\\
	\geq & \frac{\check{r}_h(s,a)}{\check{n}}  + \frac{1}{\check{n}}\sum_{i=1}^{\check{n}} P_h^{\check{l}_i}V_{h+1}^{\check{l}_i,\star}(s,a)-4(H-h)b_\Delta + \underline{b}_h^k-2b_\Delta\label{eqn:f9}\\
	= & \frac{\check{r}_h(s,a)}{\check{n}}  + \frac{1}{\check{n}}\sum_{i=1}^{\check{n}} \left( Q_h^{\check{l}_i,\star}(s,a) - r_h^{\check{l}_i}(s,a) \right) + \underline{b}_h^k-4(H-h)b_\Delta - 2b_\Delta\nonumber\\
	\geq& \frac{1}{\check{n}}\sum_{i=1}^{\check{n}}  Q_h^{\check{l}_i,\star}(s,a)  - 4(H-h)b_\Delta-2b_\Delta \geq Q_h^{k,\star}(s,a)- 4(H-h)b_\Delta -3 b_\Delta\label{eqn:f13},
	\end{align}
	where in~\eqref{eqn:f5} we used~\eqref{eqn:f8}, \eqref{eqn:f6}, \eqref{eqn:f7}, and the definition of $\underline{b}_h^k$ in Algorithm~\ref{alg:RQUCB}. \eqref{eqn:f9} is by the induction hypothesis that $Q_{h+1}^{\check{l}_i}(s_{h+1}^{\check{l}_i},a)\geq Q_{h+1}^{\check{l}_i,\star}(s_{h+1}^{\check{l}_i},a)-2(H-h)b_\Delta,\forall a\in\mc{A},1\leq \check{l}_i \leq k$. The second to last inequality holds due to the Hoeffding's inequality that $\frac{1}{\check{n}}\left( \sum_{i=1}^{\check{n}}r_h^{\check{l}_i}(s,a) - \check{r}_h(s,a) \right) \leq \sqrt{\frac{\iota}{\check{n}}} \leq \underline{b}_h^k$ with high probability. Finally, the last inequality follows from Lemma~\ref{lemma:optimalQ}. 
	
	According to the monotonicity of $Q_h^{k}(s,a)$, we can conclude from~\eqref{eqn:f13} that $Q_{h}^{k,\star}(s, a)- 4(H-h+1)b_\Delta \leq Q_{h}^{k+1}(s, a) \leq Q_{h}^{k}(s, a)$. In fact, we have proved the stronger statement $Q_h^{k+1}(s,a)\geq Q_h^{k,\star}(s,a)- 4(H-h+1)b_\Delta+b_\Delta$ that will be useful in Case 2 below. 
	
	\textbf{Case 2:} $Q_h(s,a)$ is not updated in episode $k$. Then, there are two possibilities:
	
	\begin{enumerate} 
		\item If $Q_h(s,a)$ has never been updated from episode $1$ to episode $k$: It is easy to see that $Q_h^{k+1}(s,a) = Q_h^{k}(s,a) = \dots = Q_h^{1}(s,a)=H-h+1 \geq Q_h^{k,\star}(s,a)$ holds. 
		\item If $Q_h(s,a)$ has been updated at least once from episode $1$ to episode $k$: Let $j$ be the index of the latest episode that $Q_h(s,a)$ was updated. Then, from our induction hypothesis and Case~1, we know that $Q_h^{j+1}(s,a)\geq Q_h^{j,\star}(s,a)- 4(H-h+1)b_\Delta+b_\Delta$.  
		Since $Q_h(s,a)$ has not been updated from episode $j+1$ to episode $k$, we know that $Q_h^{k+1}(s,a) = Q_h^{k}(s,a)=\dots = Q_h^{j+1}(s,a) \geq Q_h^{j,\star}(s,a) - 4(H-h+1)b_\Delta+b_\Delta\geq Q_h^{k,\star}(s,a)- 4(H-h+1)b_\Delta$, where the last inequality holds because of Lemma~\ref{lemma:optimalQ}. 
	\end{enumerate}
	
	A union bound over all time steps completes our proof. 
\end{proof}

Conditional on the successful event of Lemma~\ref{lemma:optimistic_freedman}, the dynamic regret of RestartQ-UCB Freedman in epoch $d=1$ can hence be expressed as
\begin{equation}\label{eqn:regret_freedman}
\mc{R}^{(d)}(\pi, K) = \sum_{k=1}^{K}\left(V_{1}^{k,*}\left(s_{1}^{k}\right)-V_{1}^{k,\pi}\left(s_{1}^{k}\right)\right) \leq \sum_{k=1}^{K}\left(V_{1}^{k}\left(s_{1}^{k}\right)-V_{1}^{k,\pi}\left(s_{1}^{k}\right)\right) + 4KHb_\Delta.
\end{equation}
From the update rules of the value functions in Algorithm~\ref{alg:RQUCB}, we have
$$
\begin{aligned}
&V_{h}^{k}(s_{h}^{k})\leq \indicator{\left[n_h^k=0\right]}H + \frac{\check{r}_h(s_h^k,a_h^k)}{\check{n}} + \frac{{\mu}_h^{\re,k}}{{n}}+ \frac{\check{\mu}_h^{k}}{\check{n}} + 2\underline{b}_h^k \\
=& \indicator{\left[n_h^k=0\right]}H \!+\! \frac{\check{r}_h(s_h^k,a_h^k)}{\check{n}} \!+\! \frac{1}{n}\sum_{i=1}^n V_{h+1}^{\re,l_i}(s_{h+1}^{l_i}) \!+\! \frac{1}{\check{n}}\sum_{i=1}^{\check{n}} ( V_{h+1}^{\check{l}_i}(s_{h+1}^{\check{l}_i}) \!-\! V_{h+1}^{\re,\check{l}_i}(s_{h+1}^{\check{l}_i}) ) \!+\! 2\underline{b}_h^k .
\end{aligned}
$$
If we again define $\zeta_h^k \defeq V_h^k(s_h^k) - V_h^{k,\pi}(s_h^k)$, we can follow a similar routine as in the proof of Theorem~\ref{thm:regret} (details can be found in~\citet{zhang2020almost}) and obtain 
\[
	\sum_{k=1}^K \zeta_1^k \leq O\left( SAH^3 + \sum_{h=1}^H \sum_{k=1}^K (1+\frac{1}{H})^{h-1}\Lambda_{h+1}^k \right), 
\]
where $\Lambda_{h+1}^k \defeq \psi_{h+1}^k + \xi_{h+1}^k + \phi_{h+1}^k + 4\underline{b}_h^k + 4b_\Delta$ with the following definitions: 
\[
\begin{aligned}
	\psi_{h+1}^k &\defeq \frac{1}{n_h^k}\sum_{i=1}^{n_h^k}\left( P_h^kV_{h+1}^{\re,l_i} - P_h^kV_{h+1}^{\re,K+1} \right)(s_h^k,a_h^k),\\
	\xi_{h+1}^k &\defeq \frac{1}{\check{n}_h^k}\sum_{i=1}^{\check{n}_h^k}\left(P_h^k - \mathbf{e}_{s_{h+1}^{\check{l}_i}} \right)  \left( V_{h+1}^{\check{l}_i} - V_{h+1}^{\check{l}_i,\star} \right)(s_h^k,a_h^k),\\
	\phi_{h+1}^k &\defeq \left( P_{h}^k - \mbf{e}_{s_{h+1}^k} \right) \left(V_{h+1}^{\check{l}_i,\star} - V_{h+1}^{k,\pi} \right) (s_h^k,a_h^k).
\end{aligned}
\]
An upper bound on the first four terms in $\Lambda_{h+1}^k$ is derived in the proof of Lemma 7 in~\citet{zhang2020almost} (There is an extra term of $\sqrt{\frac{1}{\check{n}}\iota}$ in our defnition of $\underline{b}_h^k$ compared to theirs, but it does not affect the leading term in the upper bound). By further recalling the definition of $b_\Delta$, we can obtain the following lemma. 
\begin{lemma}
	(Lemma 7 in~\citet{zhang2020almost}) With probability at least $(1-O(H^2T^4\delta))$, it holds that 
	\[
		\sum_{h=1}^{H} \sum_{k=1}^{K}(1+\frac{1}{H})^{h-1} \!\Lambda_{h+1}^{k}\!=\!O\left(\!\sqrt{S A H^{3} K \iota}\!+\! \sqrt{KH^3 \iota} \log (KH)\!+\!S^{2} A^{\frac{3}{2}} H^{\frac{33}{4}} K^{\frac{1}{4}}\iota \!+\! KH\Delta_r^{(1)} \!+\! KH^2\Delta_p^{(1)} \right). 
	\]
\end{lemma}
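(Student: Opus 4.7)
The plan is to bound the five summands appearing in $\Lambda_{h+1}^k = \psi_{h+1}^k + \xi_{h+1}^k + \phi_{h+1}^k + 4\underline{b}_h^k + 4b_\Delta$ separately, sum them over $(h,k) \in [H]\times[K]$ with the geometric factor $(1+1/H)^{h-1}$, and then combine. The variation summand is immediate: since $b_\Delta = \Delta_r^{(1)} + H\Delta_p^{(1)}$ does not depend on $(h,k)$ and $(1+1/H)^{h-1} \le e$, summing the $4b_\Delta$ piece gives exactly the $O(KH\Delta_r^{(1)} + KH^2\Delta_p^{(1)})$ tail of the target bound. The work therefore reduces to controlling the four ``stationary-flavored'' terms $\psi, \xi, \phi, \underline{b}$.

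For $\phi_{h+1}^k$, the summand $(P_h^k - \mbf{e}_{s_{h+1}^k})(V_{h+1}^{\check{l}_i,\star} - V_{h+1}^{k,\pi})(s_h^k,a_h^k)$ is a martingale difference with respect to the filtration up to step $(k,h)$ and is bounded by $H$, so an Azuma--Hoeffding bound yields $\widetilde O(\sqrt{KH^3 \iota})$. The term $\xi_{h+1}^k$ is trickier because the random weights $1/\check n$, the previous-stage indices $\check l_i$, and the random states $s_{h+1}^{\check l_i}$ are correlated across $k$. I would mimic the proof of Lemma~\ref{lemma:xi_bound}: re-index the double sum so that each atomic summand is attributed to the originating episode, split the index set into ``before the second-last stage of $(s,a,h)$'' (where the aggregated weight is deterministic up to a constant factor, so Azuma applies directly) and ``within the last two stages'' (where the number of affected episodes is $O((1/H)\check N_h^{K+1}(s,a))$, handled by a union bound over $(s,a,h)$ plus Cauchy--Schwarz). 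This produces $\widetilde O(\sqrt{SAKH^3 \iota})$.

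For the reference-advantage summand $\psi_{h+1}^k$ and the bonus sum $\sum 4\underline{b}_h^k$, I would follow the strategy of \citet{zhang2020almost}. The reference value $V^{\re}_{h+1}$ is frozen once $\sum_a N_h(s_h,a)$ reaches the threshold $N_0 = cSAH^6\iota$; prior to that threshold, each triple $(s,a,h)$ contributes a uniformly bounded amount, and the aggregate is a counting term of order $S^2 A^{3/2} H^{33/4} K^{1/4}\iota$ after summing across all triples and applying H\"older. For $\sum \underline{b}_h^k$, the two variance proxies $\nu^{\re} = \sigma^{\re}/n - (\mu^{\re}/n)^2$ and $\check\nu$ are controlled via Cauchy--Schwarz together with a law-of-total-variance argument: these variances, summed along trajectories, are bounded recursively by the Bellman equation, so the $\sqrt{\nu/n}$ and $\sqrt{\check\nu/\check n}$ contributions aggregate to $\widetilde O(\sqrt{SAH^3 K \iota})$. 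The extra additive $\sqrt{\iota/\check n}$ term that our $\underline{b}_h^k$ carries beyond the definition in \citet{zhang2020almost} only contributes $\widetilde O(\sqrt{SAKH\iota})$, which is dominated.

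The hardest part will be the variance-propagation argument underpinning $\sum\underline{b}_h^k$: one must show that the pathwise sum $\sum_{h,k}(1+1/H)^{h-1}\sqrt{\nu^{\re}/n}$ and its $\check\nu$ counterpart reduce, after Cauchy--Schwarz, to $\widetilde O(\sqrt{SAKH^3\iota})$ rather than losing an extra factor of $\sqrt H$. In the non-stationary setting one additionally has to check that the variance-reduction machinery is insensitive to the $O(b_\Delta)$ slack introduced by MDP drift; fortunately, the Bellman residuals induced by variation are already absorbed into the $4b_\Delta$ summand, and the $V^{\re}$ stabilization argument only requires $V^{\re}$ to be within $O(1)$ of $V^\star$, which survives the drift once the variation budget is added back explicitly. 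Combining the four leading estimates with the $b_\Delta$ contribution yields the stated bound, with the failure probability scaling as $O(H^2 T^4 \delta)$ after a union bound over all Freedman/Azuma events.
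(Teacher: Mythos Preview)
Your proposal is correct and follows the same approach as the paper. In fact, the paper does not give an independent proof of this lemma at all: it simply cites Lemma~7 of \citet{zhang2020almost} for the four ``stationary'' summands $\psi,\xi,\phi,\underline{b}$, remarks that the additional $\sqrt{\iota/\check n}$ in our $\underline{b}_h^k$ is lower order, and handles the $4b_\Delta$ piece by the one-line calculation you give. Your sketch is a faithful (and more explicit) unpacking of exactly that citation, including the re-indexing argument for $\xi$ (which mirrors Lemma~\ref{lemma:xi_bound}) and the variance-propagation / reference-value stabilization for $\underline{b}$ and $\psi$.
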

Combined with~\eqref{eqn:regret_freedman} and the definition of $\zeta_h^k$, we obtain the dynamic regret bound in a single epoch:
\[
	\mc{R}^{(d)}(\pi, K) = 	O\left(\sqrt{S A H^{3} K \iota}\!+\! \sqrt{KH^3 \iota} \log (KH)\!+\!S^{2} A^{\frac{3}{2}} H^{\frac{33}{4}} K^{\frac{1}{4}}\iota \!+\! KH\Delta_r^{(1)} \!+\! KH^2\Delta_p^{(1)} + KHb_{\Delta}\right), \forall d\in [D].
\]
From our definition of $b_\Delta$, we can easily see that $KHb_{\Delta} \leq O(KH\Delta_r^{(1)} + KH^2\Delta_p^{(1)})$. Finally, suppose $T$ is greater than a polynomial of $S,A,\Delta$ and $H$, $\sqrt{S A H^{3} K \iota}$ would be the leading term of the dynamic regret in a single epoch. In this case, summing up the dynamic regret over all the $D$ epochs gives us an upper bound of
\begin{equation}\label{eqn:tmp_freedman}
\widetilde{O}\left(D\sqrt{SAH^3 K} + \sum_{d=1}^DKH\Delta_r^{(d)}+\sum_{d=1}^D KH^2\Delta_p^{(d)}\right).
\end{equation}
Recall that $\sum_{d=1}^D\Delta_r^{(d)} \leq \Delta_r$, $\sum_{d=1}^D\Delta_p^{(d)} \leq \Delta_p$, $\Delta = \Delta_r+\Delta_p$, and that $K = \Theta(\frac{T}{DH})$. By setting $D=S^{-\frac{1}{3}}A^{-\frac{1}{3}}\Delta^{\frac{2}{3}}T^{\frac{1}{3}}$, the dynamic regret over the entire $T$ steps is bounded by 
\[
\mc{R}(\pi,M) \leq \widetilde{O}\left( S^{\frac{1}{3}}A^{\frac{1}{3}}\Delta^{\frac{1}{3}} H T^{\frac{2}{3}} \right).
\]
This completes the proof of Theorem~\ref{thm:freedman}.

\section{Proof of Theorem~\ref{thm:borl}}
First, we define $D^\dagger$ to be the optimal candidate value in $\mc{J}$ that leads to the lowest dynamic regret. Recall that since $\mc{J}$ is a discretized set and only covers values in the range of $\left[\floor{\frac{T}{SAH^2 W}}, \floor{\frac{T}{SAH^2 }} \right]$, it might not contain the actual optimal value $D^\star=S^{-\frac{1}{3}}A^{-\frac{1}{3}}\Delta^{\frac{2}{3}}T^{\frac{1}{3}}$ for the number of epochs $D$. Further, let $R_i(D)$ be the cumulative reward collected in phase $i$ due to choosing the value $D$ for the number of total epochs. Then, the dynamic regret of Algorithm~\ref{alg:borl} can be decomposed into two parts:
\begin{align}
\mc{R}(\pi, M) =& \sum_{m=1}^{M}\left(V_{1}^{m,\star}\left(s_{1}^{m}\right)-V_{1}^{m,\pi}\left(s_{1}^{m}\right)\right)\nonumber\\
=& \left[\sum_{m=1}^{M}V_{1}^{m,\star}\left(s_{1}^{m}\right) - \sum_{i=1}^{\ceil{M/W}} R_i(D^\dagger) \right] + \left[ \sum_{i=1}^{\ceil{M/W}} R_i(D^\dagger) - \sum_{i=1}^{\ceil{M/W}} R_i(D_i) \right],\label{eqn:decompose}
\end{align}
where the first term is the dynamic regret of using the optimal candidate value $D^\dagger$ of the number of epochs, and  the second term is caused by the regret of learning the optimal candidate value using the Exp3.P algorithm. 
Applying the regret bound of the Exp3.P algorithm (\citet{auer2002nonstochastic}), for any choice of $D^\dagger$, the second term in \eqref{eqn:decompose} is upper bounded by 
\begin{equation}\label{eqn:a6}
\sum_{i=1}^{\ceil{M/W}} R_i(D^\dagger) - \sum_{i=1}^{\ceil{M/W}} R_i(D_i) \leq \widetilde{O}(WH \sqrt{\ceil{M/W}(J+1)}) = \widetilde{O}(\sqrt{HTW}) = \widetilde{O}(H^{\frac{3}{4}}T^{\frac{3}{4}} ),	
\end{equation}
where in the last step we used  that $W = \sqrt{HT}$.

From the proof of Theorem~\ref{thm:freedman} (e.g., Equation \eqref{eqn:tmp_freedman} with the fact that $K = \Theta(\frac{T}{DH})$), and applying the Azuma-Hoeffding inequality and a union bound, we can upper bound the first term in \eqref{eqn:decompose} by
\begin{equation}\label{eqn:decompose2}
	\sum_{m=1}^{M}V_{1}^{m,\star}\left(s_{1}^{m}\right) - \sum_{i=1}^{\ceil{M/W}} R_i(D^\dagger) \leq \widetilde{O}\left(\sqrt{SATD^\dagger H^2} + \frac{TH \Delta}{D^\dagger} + \sqrt{TH} \right). 	
\end{equation}
To derive a further upper bound of \eqref{eqn:decompose2}, we need to distinguish between two cases: Whether $D^\star$ is covered in the range of $\mc{J}$ or not. Since we have assumed that the horizon is sufficiently long, i.e., $T$ is greater than some polynomial of $S, A,\Delta$ and $H$, it holds that $D^\star=S^{-\frac{1}{3}}A^{-\frac{1}{3}}\Delta^{\frac{2}{3}}T^{\frac{1}{3}} \leq \floor{\frac{T}{SAH^2}}$. Therefore, to determine whether $D^\star$ is covered in the range of $\mc{J}$, we only need to compare $D^\star$ with the lower bound $\floor{\frac{T}{SAH^2 W}}$ in $\mc{J}$. 
\begin{itemize}
	\item If $D^\star$ is covered in the range of $\mc{J}$, i.e., $D^\star \geq \floor{\frac{T}{SAH^2W}}$: Since $\mc{J}$ is discretized in a way that two consecutive values differ from each other by a factor of at most $W^{1/J}$, we know that there exists a value $D^\dagger \in \mc{J}$, such that $D^\star \leq D^\dagger \leq W^{1/J}D^\star$. In this case, we can upper bound the RHS of \eqref{eqn:decompose2} by
	\[
	\widetilde{O}\left(\sqrt{SATD^\dagger H^2} + \frac{TH \Delta}{D^\dagger} + \sqrt{TH} \right) \leq \widetilde{O}\left(\sqrt{SATW^{1/J}D^\star H^2} + \frac{TH \Delta}{D^\star} \right)\leq \widetilde{O}\left( S^{\frac{1}{3}}A^{\frac{1}{3}}\Delta^{\frac{1}{3}} H T^{\frac{2}{3}} \right),
	\]
	where in the last step we used the facts that $D^\star=S^{-\frac{1}{3}}A^{-\frac{1}{3}}\Delta^{\frac{2}{3}}T^{\frac{1}{3}}$ and that $W^{1/J} = W^{1/\ceil{\ln W}}\leq \exp(1)$. 
	\item If $D^\star$ is not covered in the range of $\mc{J}$, i.e., $D^\star < \floor{\frac{T}{SAH^2W}}$: Since $D^\star=S^{-\frac{1}{3}}A^{-\frac{1}{3}}\Delta^{\frac{2}{3}}T^{\frac{1}{3}} < \floor{\frac{T}{SAH^2W}}$, it implies that $\Delta < S^{-1}A^{-1}H^{-\frac{15}{4}}T^{\frac{1}{4}}$. The optimal candidate value in $\mc{J}$ would be the smallest one, and hence $D^\dagger = \floor{\frac{T}{SAH^2W}}$. In this case, we can upper bound the RHS of \eqref{eqn:decompose2} by
	\[
	\widetilde{O}\left(\sqrt{SATD^\dagger H^2} + \frac{TH \Delta}{D^\dagger}+ \sqrt{TH} \right) \leq \widetilde{O}\left(\sqrt{SAT H^2 \floor{\frac{T}{SAH^2W}}} + \frac{TH \Delta}{\floor{\frac{T}{SAH^2W}}} \right) \leq \widetilde{O}\left(H^{\frac{3}{4}}T^{\frac{3}{4}} \right),
	\]
	where in the last step we used that $\Delta < S^{-1}A^{-1}H^{-\frac{15}{4}}T^{\frac{1}{4}}$ and $W = \sqrt{HT}$. 
\end{itemize}
Combining the above two cases with \eqref{eqn:decompose}, \eqref{eqn:a6}, and \eqref{eqn:decompose2}, we can conclude that the dynamic regret of Algorithm~\ref{alg:borl} is upper bounded by
\[
\mc{R}(\pi, M) \leq \widetilde{O}\left( S^{\frac{1}{3}}A^{\frac{1}{3}} \Delta^{\frac{1}{3}} H T^{\frac{2}{3}} + H^{\frac{3}{4}}T^{\frac{3}{4}} \right).
\]
This completes the proof of Theorem~\ref{thm:borl}.

\section{Proof of Theorem~\ref{thm:lb}}
The proof of our lower bound relies on the construction of a ``hard instance'' of non-stationary MDPs. The instance we construct is essentially a switching-MDP: an MDP with piecewise constant dynamics on each \emph{segment} of the horizon, and its dynamics experience an abrupt change at the beginning of each new segment. More specifically, we divide the horizon $T$ into $L$ segments\footnote{The definition of segments is irrelevant to, and should not be confused with, the notion of epochs we previously defined. }, where each segment has $T_0 \defeq \floor{\frac{T}{L}}$ steps and contains $M_0\defeq \floor{\frac{M}{L}}$ episodes, each episode having a length of $H$. Within each such segment, the system dynamics of the MDP do not vary, and we construct the dynamics for each segment in a way such that the instance is a hard instance of stationary MDPs on its own. The MDP within each segment is essentially similar to the hard instances constructed in stationary RL problems~\citep{osband2016lower,jin2018q}. Between two consecutive segments, the dynamics of the MDP change abruptly, and we let the dynamics vary in a way such that no information learned from previous interactions with the MDP can be used in the new segment. In this sense, the agent needs to learn a new hard stationary MDP in each segment. Finally, optimizing the value of $L$ and the variation magnitude between consecutive segments (subject to the constraints of the total variation budget) leads to our lower bound. 

We start with a simplified episodic setting where the transition kernels and reward functions are held constant within each episode, i.e., $P_{1}^m = \dots = P_{h}^m = \dots P_{H}^m$ and $r_{1}^m = \dots = r_{h}^m = \dots r_{H}^m, \forall m\in[M]$. This is a popular but less challenging episodic setting, and its stationary counterpart has been studied in~\citet{azar2017minimax}. We further require that when the environment varies due to the non-stationarity, all steps in one episode should vary simultaneously in the same way. This simplified setting is easier to analyze, and its analysis conveniently leads to a lower bound for the un-discounted setting as a side result along the way. Later we will show how the analysis can be naturally extended to the more general setting we introduced in Section~\ref{sec:preliminary}, using techniques that have also been utilized in~\citet{jin2018q}. For simplicity of notations, we temporarily drop the $h$ indices and use $P^m$ and $r^m$ to denote the transition kernel and reward function whenever there is no ambiguity. 

\begin{figure}[t]
	\centering
	\includegraphics[width=.35\textwidth]{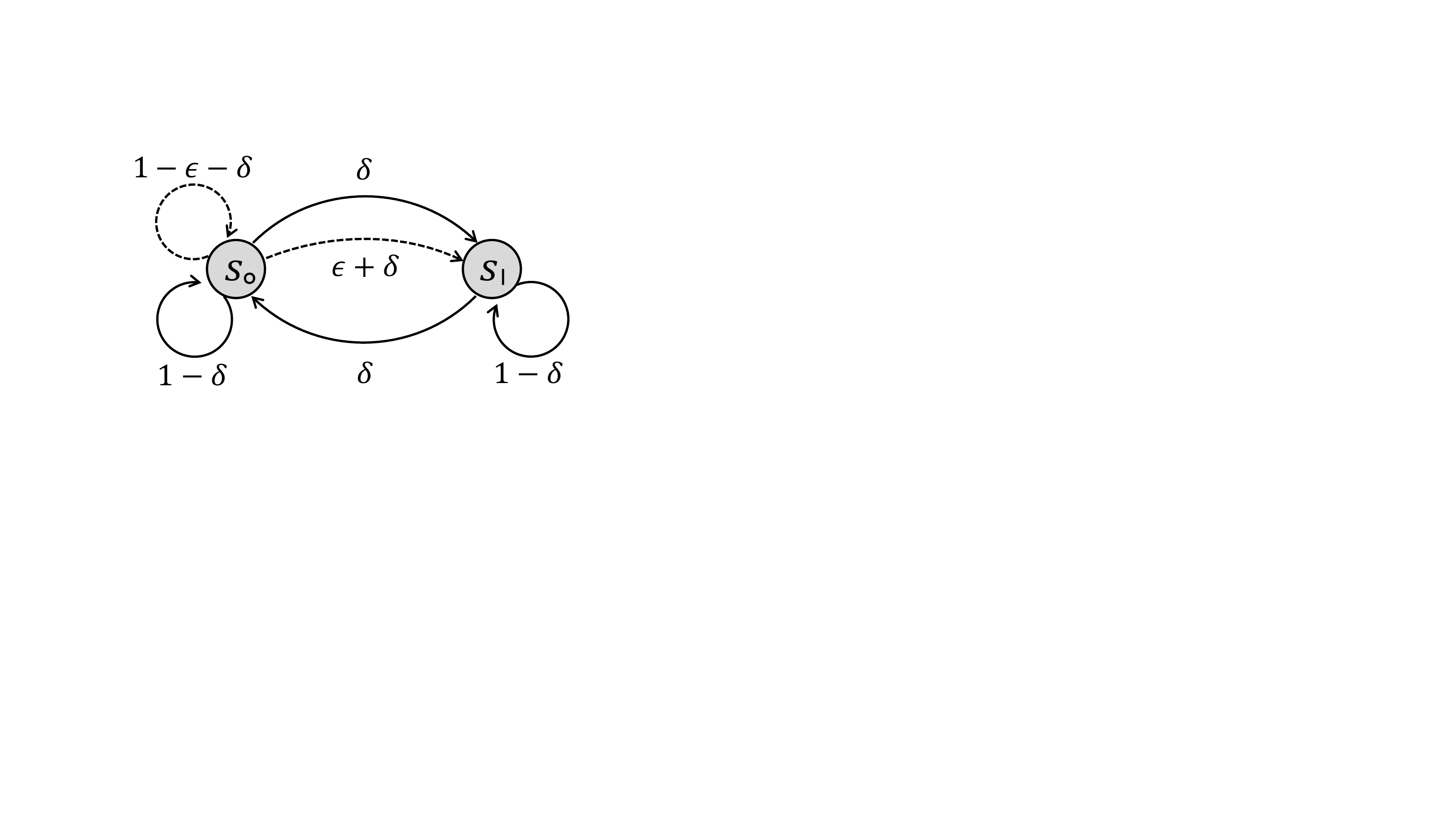}\caption{The ``JAO MDP'' constructed in~\citet{jaksch2010near}. Dashed lines denote transitions related to the good action $a^\star$.}\label{fig:1}
\end{figure}

Consider a two-state MDP as depicted in Figure~\ref{fig:1}. This MDP was initially proposed in~\citet{jaksch2010near} as a hard instance of stationary MDPs, and following~\citet{jin2018q} we will refer to this construction as the ``JAO MDP''. This MDP has $2$ states $\mc{S} = \{ \so, \si\}$ and $SA$ actions $\mc{A} = \{1,2,\dots, SA \}$. The reward does not depend on actions: state $\si$ always gives reward $1$ whatever action is taken, and state $\so$ always gives reward $0$. Any action taken at state $\si$ takes the agent to state $\so$ with probability $\delta$, and to state $\si$ with probability $1-\delta$. At state $\so$, for all but a single ``good'' action $a^\star$, the agent is taken to state $\si$ with probability $\delta$, and for the good action $a^\star$, the agent is taken to state $\si$ with probability $\delta+\epsilon$ for some $0 < \epsilon < \delta$. The exact values of $\delta$ and $\epsilon$ will be chosen later. Note that this is not an MDP with $S$ states and $A$ actions as we desire, but the extension to an MDP with $S$ states and $A$ actions is routine~\citep{jaksch2010near}, and is hence omitted here.

\begin{figure}[t]
	\centering\includegraphics[width=.75\textwidth]{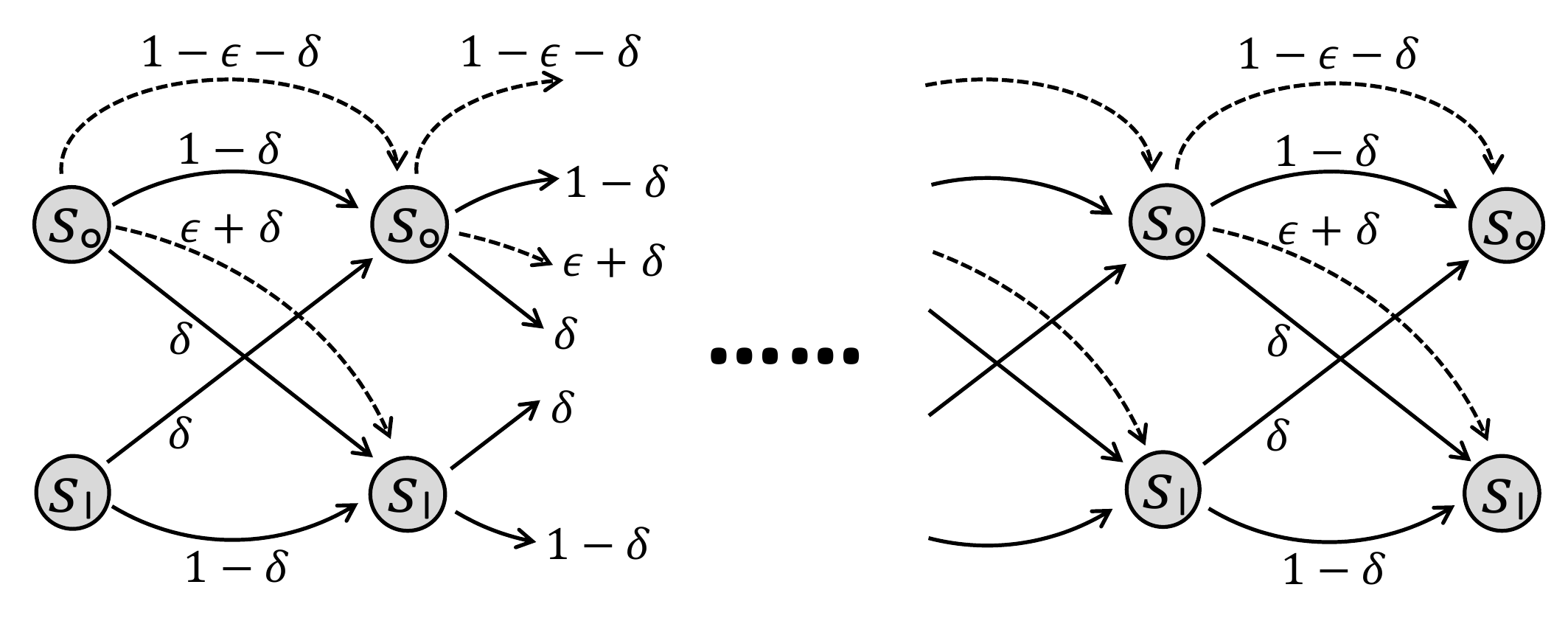}
	\caption{A chain with $H$ copies of JAO MDPs correlated in time. At the end of an episode, the state should deterministically transition from any state in the last copy to the $\so$ state in the first copy of the chain, the arrows of which are not shown in the figure. Also, the $\si$ state in the first copy is actually never reached and is redundant. }\label{fig:2}
\end{figure}

To apply the JAO MDP to the simplified episodic setting, we ``concatenate'' $H$ copies of exactly the same JAO MDP into a chain as depicted in Figure~\ref{fig:2}, denoting the $H$ steps in an episode. The initial state of this MDP is the $\so$ state in the first copy of the chain, and after each episode the state is ``reset'' to the initial state. In the following, we first show that the constructed MDP is a hard instance of stationary MDPs, without worrying about the evolution of the system dynamics. The techniques that we will be using are essentially the same as in the proofs of the lower bound in the multi-armed bandit problem~\citep{auer2002nonstochastic} or the reinforcement learning problem in the un-discounted setting~\citep{jaksch2010near}. 

The good action $a^\star$ is chosen uniformly at random from the action space $\mc{A}$, and we use $\ee_\star[\cdot ]$ to denote the expectation with respect to the random choice of $a^\star$. We write $\ee_a[\cdot]$ for the expectation conditioned on action $a$ being the good action $a^\star$. Finally, we use $\ee_{\text{unif}}[\cdot]$ to denote the expectation when there is no good action in the MDP, i.e., every action in $\mc{A}$ takes the agent from state $\so$ to $\si$ with probability $\delta$. Define the probability notations $\pp_\star(\cdot),\pp_a(\cdot)$, and $\pp_{\text{unif}}(\cdot)$ analogously. 

Consider running a reinforcement learning algorithm on the constructed MDP for $T_0$ steps, where $T_0 = M_0 H$. It has been shown in~\citet{auer2002nonstochastic} and~\citet{jaksch2010near} that it is sufficient to consider deterministic policies. Therefore, we assume that the algorithm maps deterministically from a sequence of observations to an action $a_t$ at time $t$. Define the random variables $\ni,\no$ and $\nos$ to be the total number of visits to state $\si$, the total number of visits to $\so$, and the total number of times that $a^\star$ is taken at state $\so$, respectively. Let $s_t$ denote the state observed at time $t$, and $a_t$ the action taken at time $t$. When there is no chance of ambiguity, we sometimes also use $s_h^m$ to denote the state at step $h$ of episode $m$, which should be interpreted as the state $s_t$ observed at time $t = (m-1)\times H + h$. The notation $a_h^m$ is used analogously. Since $\so$ is assumed to be the initial state, we have that
\[
\begin{aligned}
&\ee_a[\ni] = \sum_{t=1}^{T_0} \pp_a(s_t = \si) = \sum_{m=1}^{M_0}\sum_{h=2}^H\pp_a(s_h^m = \si)\\
=&\sum_{m=1}^{M_0}\sum_{h=2}^H \left(\pp_a(s_{h-1}^m = \so) \cdot \pp_a(s_h^m = \si\mid s_{h-1}^m = \so) + \pp_a(s_{h-1}^m = \si) \cdot \pp_a(s_h^m = \si\mid s_{h-1}^m = \si)  \right)\\
= & \sum_{m=1}^{M_0}\sum_{h=2}^H \left(\delta \pp_a(s_{h-1}^m = \so, a_h^m \neq a^\star) + (\delta +\epsilon) \pp_a(s_{h-1}^m = \so, a_h^m = a^\star) + (1-\delta)\pp_a(s_{h-1}^m=\si) \right)\\
\leq & \delta \ee_a[\no - \nos] + (\delta+\epsilon)\ee_a[\nos] + (1-\delta)\ee_a[\ni],
\end{aligned}
\]
and rearranging the last inequality gives us $\ee_a[\ni] \leq  \ee_a[\no - \nos] + (1+\frac{\epsilon}{\delta})\ee_a[\nos]$. 

For this proof only, define the random variable $W(T_0)$ to be the total reward of the algorithm over the horizon $T_0$, and define $G(T_0)$ to be the (static) regret with respect to the optimal policy. Since for any algorithm, the probability of staying in state $\so$ under $\pp_a(\cdot)$ is no larger than under $\pp_{\text{unif}}(\cdot)$, it follows that
\begin{align}
&\ee_a[W(T_0)] \leq \ee_a[\ni] \leq \ee_a[\no - \nos] + (1+\frac{\epsilon}{\delta})\ee_a[\nos]\nonumber\\
=& \ee_a[\no] + \frac{\epsilon}{\delta}\ee_a[\nos] \leq \ee_{\text{unif}}[\no] + \frac{\epsilon}{\delta}\ee_a[\nos]\nonumber\\
=& T_0 - \ee_{\text{unif}}[\ni] + \frac{\epsilon}{\delta}\ee_a[\nos].\label{eqn:a1}
\end{align}

Let $\tau^m_{\circ \vertrule{}}$ denote the first step that the state transits from state $\so$ to $\si$ in the $m$-th episode; then
\begin{align}
\ee_{\text{unif}}[\ni] =& \sum_{m=1}^{M_0} \sum_{h=1}^H \pp_{\text{unif}}(\tau^m_{\circ \vertrule{}} = h) \ee_{\text{unif}}[\ni \mid \tau^m_{\circ \vertrule{}} = h]\nonumber
= \sum_{m=1}^{M_0} \sum_{h=1}^H (1-\delta)^{h-1}\delta \ee_{\text{unif}}[\ni \mid \tau^m_{\circ \vertrule{}} = h]\nonumber\\
\geq & \sum_{m=1}^{M_0} \sum_{h=1}^H (1-\delta)^{h-1}\delta \frac{H-h}{2}\nonumber
=  \sum_{m=1}^{M_0}\left(\frac{H}{2} - \frac{1}{2\delta} + \frac{(1-\delta)^H}{2\delta}\right)\\
\geq & \frac{T_0}{2} - \frac{M_0}{2\delta}\label{eqn:a2}.
\end{align}

Since the algorithm is a deterministic mapping from the observation sequence to an action, the random variable $\nos$ is also a function of the observations up to time $T$. In addition, since the immediate reward only depends on the current state, $\nos$ can further be considered as a function of just the state sequence up to $T$. Therefore, the following lemma from~\citet{jaksch2010near}, which in turn was adapted from Lemma A.1 in~\citet{auer2002nonstochastic}, also applies in our setting. 

\begin{lemma}(Lemma 13 in~\citet{jaksch2010near})\label{lemma:KL}
	For any finite constant $B$, let $f:\{\so,\si\}^{T_0 +1} \ra [0,B]$ be any function defined on the state sequence $\mbf{s}\in \{\so,\si\}^{T_0+1}$. Then, for any $0 < \delta \leq \frac{1}{2}$, any $0 < \epsilon \leq 1-2\delta$, and any $a\in \mc{A}$, it holds that
	\[
	\mathbb{E}_{a}[f(\mbf{s})] \leq \mathbb{E}_{\text{unif}}[f(\mbf{s})]+\frac{B}{2} \cdot \frac{\epsilon}{\sqrt{\delta}} \sqrt{2 \mathbb{E}_{\text {unif}}\left[\nos\right]}.
	\]
\end{lemma}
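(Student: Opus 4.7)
My plan is to reduce the expectation gap to a total variation distance and then control that distance by a KL divergence via Pinsker's inequality. Concretely, because $f$ takes values in $[0,B]$, I would start from the elementary bound
\[
\mathbb{E}_a[f(\mathbf{s})] - \mathbb{E}_{\text{unif}}[f(\mathbf{s})] \leq B\, d_{TV}(\mathbb{P}_{\text{unif}},\mathbb{P}_a),
\]
and then invoke Pinsker's inequality $d_{TV}(\mathbb{P}_{\text{unif}},\mathbb{P}_a) \leq \sqrt{\tfrac{1}{2} KL(\mathbb{P}_{\text{unif}}\,\|\,\mathbb{P}_a)}$. The direction of the KL is chosen deliberately so that the resulting bound involves $\mathbb{E}_{\text{unif}}[\nos]$, matching the statement. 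Note that the learning algorithm is deterministic, so the distribution over $\mathbf{s}$ is fully determined by the sequence of transitions, and I can treat $\mathbb{P}_a$ and $\mathbb{P}_{\text{unif}}$ as product-of-conditionals distributions indexed by the history.

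Next I would decompose the KL via the chain rule for KL divergence on the per-step state transitions. The two measures $\mathbb{P}_a$ and $\mathbb{P}_{\text{unif}}$ agree on every conditional transition except the one triggered at state $\so$ when the algorithm selects the distinguished action $a^\star=a$: there the ``unif'' kernel outputs $\si$ with probability $\delta$, while the ``$a$'' kernel outputs $\si$ with probability $\delta+\epsilon$. All other transitions contribute zero to the KL. Summing over the trajectory, I obtain
\[
KL(\mathbb{P}_{\text{unif}}\,\|\,\mathbb{P}_a) = \mathbb{E}_{\text{unif}}[\nos]\cdot KL\!\left(\mathrm{Bern}(\delta)\,\|\,\mathrm{Bern}(\delta+\epsilon)\right),
\]
where the expectation is of $\nos$ under $\mathbb{P}_{\text{unif}}$ because the chain rule factorization is taken in that order. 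This step uses only that the action at each time is a deterministic function of the past observations, so randomness enters only through transitions.

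It then remains to bound the single-step Bernoulli KL. A direct expansion using $\log(1\pm x)\leq \pm x$ yields $KL(\mathrm{Bern}(\delta)\,\|\,\mathrm{Bern}(\delta+\epsilon)) \leq \frac{\epsilon^{2}}{(\delta+\epsilon)(1-\delta-\epsilon)}$. Under the hypotheses $\delta\leq \tfrac{1}{2}$ and $\epsilon\leq 1-2\delta$, one has $1-\delta-\epsilon\geq \delta$ and $\delta+\epsilon\geq \delta$, and in the regime where the inequality matters ($\epsilon$ small relative to $\delta$, which is where the construction will eventually be calibrated) this collapses to $\leq \frac{c\,\epsilon^{2}}{\delta}$ for a universal constant. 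Plugging back and combining with Pinsker gives exactly the claimed form $\frac{B}{2}\cdot\frac{\epsilon}{\sqrt{\delta}}\sqrt{2\,\mathbb{E}_{\text{unif}}[\nos]}$.

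The main obstacle is the chain-rule / measure-change bookkeeping in the KL decomposition: one has to justify carefully that the only place the two measures differ is the $(\so,a^\star)$ transition, and in particular that the determinism of the learner plus the identical reward distributions mean that no contribution arises from the action-selection step or from reward emissions. Once that reduction is clean, the remaining steps (Pinsker and the elementary Bernoulli KL bound) are routine. A secondary subtlety is ensuring the KL is taken in the correct order so that Pinsker produces $\mathbb{E}_{\text{unif}}[\nos]$ rather than $\mathbb{E}_a[\nos]$; choosing $KL(\mathbb{P}_{\text{unif}}\,\|\,\mathbb{P}_a)$ handles this.
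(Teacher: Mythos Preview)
Your approach is correct and is precisely the standard one: the paper does not give its own proof of this lemma but cites it verbatim from \citet{jaksch2010near} (in turn adapted from Lemma~A.1 of \citet{auer2002nonstochastic}), and that original proof proceeds exactly as you outline---bound the expectation gap by $\tfrac{B}{2}\|\mathbb{P}_{\text{unif}}-\mathbb{P}_a\|_1$, apply Pinsker, decompose the KL by the chain rule (which isolates the $(\so,a^\star)$ transitions and produces $\mathbb{E}_{\text{unif}}[\nos]$ times a single Bernoulli KL), and then bound that Bernoulli KL.

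One small point worth tightening: your intermediate bound $kl(\delta\,\|\,\delta+\epsilon)\leq \frac{\epsilon^{2}}{(\delta+\epsilon)(1-\delta-\epsilon)}$ is correct, but it does \emph{not} imply $kl\leq \epsilon^{2}/\delta$ in general (at $\epsilon\to 0$ the right-hand side is $\epsilon^{2}/(\delta(1-\delta))>\epsilon^{2}/\delta$), so the jump to ``exactly the claimed form'' with the constant $\tfrac{B}{2}\sqrt{2}$ is not justified by what you wrote. You were honest to hedge with ``a universal constant,'' and that is all the downstream lower-bound argument needs; if you want the stated constant on the nose you must sharpen the Bernoulli-KL step (e.g., along the lines of the computation in \citet{jaksch2010near}), but the architecture of your proof is the right one and matches the cited source.
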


Since $\nos$ itself is a function from the state sequence to $[0,T_0]$, we can apply Lemma~\ref{lemma:KL} and arrive at
\begin{equation}\label{eqn:a3}
\mathbb{E}_{a}[\nos] \leq \mathbb{E}_{\text{unif }}[\nos]+\frac{T_0}{2} \cdot \frac{\epsilon}{\sqrt{\delta}} \sqrt{2 \mathbb{E}_{\text {unif}}\left[\nos\right]}.
\end{equation}
From~\eqref{eqn:a2}, we have that $\sum_{a=1}^{SA} \mathbb{E}_{\text{unif }}[\nos] = T_0 - \mathbb{E}_{\text{unif }}[\ni] \leq \frac{T_0}{2} + \frac{M_0}{2\delta}$. By the Cauchy-Schwarz inequality, we further have that $\sum_{a=1}^{SA}\sqrt{2 \mathbb{E}_{\text {unif}}\left[\nos\right]} \leq \sqrt{SA (T_0 + \frac{M_0}{\delta})}$. Therefore, from~\eqref{eqn:a3}, we obtain
\[
\sum_{a=1}^{SA}\mathbb{E}_{a}[\nos]  \leq \frac{T_0}{2} + \frac{M_0}{2\delta} + \frac{T_0}{2}\cdot \frac{\epsilon}{\sqrt{\delta}} \sqrt{SA (T_0 + \frac{M_0}{\delta})}.
\]
Together with~\eqref{eqn:a1} and~\eqref{eqn:a2}, it holds that
\begin{align}
\ee_\star[W(T_0)] \leq &\frac{1}{SA} \sum_{a=1}^{SA}\ee_a[W(T_0)]\nonumber \\
\leq &\frac{T_0}{2} +\frac{M_0}{2\delta} +  \frac{\epsilon}{\delta}\frac{1}{SA}\left(\frac{T_0}{2} + \frac{M_0}{2\delta} + \frac{T_0}{2}\cdot \frac{\epsilon}{\sqrt{\delta}} \sqrt{SA (T_0 + \frac{M_0}{\delta})}\right). \label{eqn:a4}
\end{align}

\subsection{The Un-discounted Setting}
Let us now momentarily deviate from the episodic setting and consider the un-discounted setting (with $M_0 = 1$). This is the case of the JAO MDP in Figure~\ref{fig:1} where there is not reset. We could calculate the stationary distribution and find that the optimal average reward for the JAO MDP is $\frac{\delta + \epsilon}{2\delta + \epsilon}$. It is also easy to calculate that the diameter of the JAO MDP is $D = \frac{1}{\delta}$. Therefore, the expected (static) regret with respect to the randomness of $a^*$ can be lower bounded by
\[
\begin{aligned}
\ee_\star[G(T_0)] =& \frac{\delta + \epsilon}{2\delta + \epsilon}T_0 - \ee_\star[W(T_0)]\\
\geq & \frac{\epsilon T_0}{4\delta + 2\epsilon} - \frac{D}{2} - \frac{\epsilon D(T_0+D)}{2SA} - \frac{\epsilon^2 T_0D\sqrt{D}}{2\sqrt{SA}}(\sqrt{T_0} + \sqrt{D}).
\end{aligned}
\]
By assuming $T_0\geq DSA$ (which in turn suggests $D\leq \sqrt{\frac{T_0D}{SA}}$) and setting $\epsilon = c\sqrt{\frac{SA}{T_0D}}$ for $c = \frac{3}{40}$, we further have that
\[
\begin{aligned}
\ee_\star[G(T_0)] \geq & \left(\frac{c}{6} - \frac{c}{2SA} - \frac{cD}{2SAT_0} - \frac{c^2}{2} - \frac{c^2}{2}\sqrt{\frac{D}{T_0}} \right)\sqrt{SAT_0D} - \frac{D}{2}\\
\geq & \left(\frac{3}{20}c - c^2 - \frac{1}{200} \right)\sqrt{SAT_0D} = \frac{1}{1600}\sqrt{SAT_0D}.  
\end{aligned}
\]
It is easy to verify that our choice of $\delta$ and $\epsilon$ satisfies our assumption that $ 0 < \epsilon < \delta$. So far, we have recovered the (static) regret lower bound of $\Omega(\sqrt{SAT_0D})$ in the un-discounted setting, which was originally proved in~\citet{jaksch2010near}. 

Based on this result, let us now incorporate the non-stationarity of the MDP and derive a lower bound for the dynamic regret $\mc{R}(T)$. Recall that we are constructing the non-stationary environment as a switching-MDP. For each segment of length $T_0$, the environment is held constant, and the regret lower bound for each segment is $\Omega(\sqrt{SAT_0D})$. At the beginning of each new segment, we uniformly sample a new action $a^*$ at random from the action space $\mc{A}$ to be the good action for the new segment. In this case, the learning algorithm cannot use the information it learned during its previous interactions with the environment, even if it knows the switching structure of the environment. Therefore, the algorithm needs to learn a new (static) MDP in each segment, which leads to a dynamic regret lower bound of $\Omega(L\sqrt{SAT_0D}) = \Omega(\sqrt{SATLD})$, where let us recall that $L$ is the number of segments. Every time the good action $a^*$ varies, it will cause a variation of magnitude $2\epsilon$ in the transition kernel. The constraint of the overall variation budget requires that $2\epsilon L  = \frac{3}{20}\sqrt{\frac{SA}{T_0D}} L\leq \Delta$, which in turn requires $L \leq 4\Delta^{\frac{2}{3}} T^{\frac{1}{3}} D^{\frac{1}{3}} S^{-\frac{1}{3}} A^{-\frac{1}{3}}$. Finally, by assigning the largest possible value to $L$ subject to the variation budget, we obtain a dynamic regret lower bound of $\Omega\left( S^{\frac{1}{3}}A^{\frac{1}{3}} \Delta^{\frac{1}{3}} D^{\frac{2}{3}} T^{\frac{2}{3}} \right)$. This completes the proof of Proposition~\ref{corollary}.

\subsection{The Episodic Settings}
Now let us go back to our simplified episodic setting, as depicted in Figure~\ref{fig:2}. One major difference with the previous un-discounted setting is that we might not have time to mix between $\so$ and $\si$ in $H$ steps. (Note that we only need to reach the stationary distribution over the $(\so,\si)$ pair in each step $h$, rather than the stationary distribution over the entire MDP. In fact, the latter case is never possible because the entire MDP is not aperiodic.) It can be shown that the optimal policy on this MDP has a mixing time of $\Theta\left(\frac{1}{\delta}\right)$~\citep{jin2018q}, and hence we can choose $\delta$ to be slightly larger than $\Theta(\frac{1}{H})$ to guarantee sufficient time to mix. All the analysis up to inequality~\eqref{eqn:a4} carries over to the episodic setting, and essentially we can set $\delta$ to be $\Theta\left(\frac{1}{H}\right)$ to get a (static) regret lower bound of $\Omega(\sqrt{SAT_0H})$ in each segment. Another difference with the previous setting lies in the usage of the variation budget. Since we require that all the steps in the same episode should vary simultaneously, it now takes a variation budget of $2\epsilon H$ each time we switch to a new action $a^*$ at the beginning of a new segment. Therefore, the overall variation budget now puts a constraint of $2\epsilon H L \leq O(\Delta)$ on the magnitude of each switch. Again, by choosing $\epsilon = \Theta\left(\sqrt{\frac{SA}{T_0 H}}\right)$ and optimizing over possible values of $L$ subject to the budget constraint, we obtain a dynamic regret lower bound of $\Omega\left( S^{\frac{1}{3}}A^{\frac{1}{3}} \Delta^{\frac{1}{3}} H^{\frac{1}{3}} T^{\frac{2}{3}} \right)$ in the simplified episodic setting. 

Finally, we consider the standard episodic setting as introduced in Section~\ref{sec:preliminary}. In this setting, we essentially will be concatenating $H$ distinct JAO MDPs, each with an independent good action $a^*$, into a chain like Figure~\ref{fig:2}. The transition kernels in these JAO MDPs are also allowed to vary asynchronously in each step $h$, although our construction of the lower bound does not make use of this property. As argued similarly in~\citet{jin2018q}, the number of observations for each specific JAO MDP is only $T_0/H$, instead of $T_0$. Therefore, we can assign a slightly larger value to $\epsilon$ and the learning algorithm would still not be able to identify the good action given the fewer observations.  Setting $\delta = \Theta\left(\frac{1}{H}\right)$ and $\epsilon = \Theta\left(\sqrt{\frac{SA}{T_0}}\right)$ leads to a (static) regret lower bound of $\Omega(H\sqrt{SAT_0})$ in the stationary RL problem. Again, the transition kernels in all the $H$ JAO MDPs vary simultaneously at the beginning of each new segment. By optimizing $L$ subject to the overall budget constraint $2\epsilon HL \leq O(\Delta)$, we obtain a dynamic regret lower bound of $\Omega\left( S^{\frac{1}{3}}A^{\frac{1}{3}} \Delta^{\frac{1}{3}} H^{\frac{2}{3}} T^{\frac{2}{3}} \right)$ in the episodic setting. This completes our proof of Theorem~\ref{thm:lb}.

\section{Proof of Theorem~\ref{thm:team}}\label{app:proof_application}
\begin{proof}
	\wmedit{We first show that when the switching cost of agent 2 satisfies $N_{\text{switch}} = O(T^{\beta})$ for $0 < \beta < 1$, the dynamic regret of agent 1 is upper bounded by $\widetilde{O}(T^{\frac{\beta+2}{3}})$. To see this, notice that from the perspective of agent 1, the environment is non-stationary due to the fact that agent 2 is changing its policy over time. Since the switching cost of agent 2 is upper bounded by $O(T^\beta)$, by the definitions of $\Delta_r$ and $\Delta_p$ in Section~\ref{sec:preliminary}, we know that the variation of the environment from the perspective of agent 1 is upper bounded by $O(T^\beta)$. Substituting the value of $\Delta$ with $O(T^\beta)$ in Theorem~\ref{thm:regret_no_budget} or Theorem~\ref{thm:freedman} leads to the desired result.  }
	
	From the $(\lambda,\mu)$-smoothness of the MDP, it follows that
	\[
	\lambda \cdot V_h^{(\pi^{(1)\star},\pi^{(2)\star})}(s) - \mu\cdot V_h^{(\pi^{(1)},\pi^{(2)})}(s) \leq V_h^{(\pi^{(1)\star},\pi^{(2)})}(s), \forall s \in\mc{S},h\in[H].
	\]
	Therefore, it holds that
	\[
	\begin{aligned}
	&\sum_{m=1}^M \left( \lambda \cdot  V_1^{(\pi^{(1)\star},\pi^{(2)\star})}(s_1^m) - (1+\mu) \cdot V_1^{(\pi^{(1)},\pi^{(2)})}(s_1^m) \right)\\
	\leq& \sum_{m=1}^M \left(V_1^{(\pi^{(1)\star},\pi^{(2)})}(s_1^m)  - V_1^{(\pi^{(1)},\pi^{(2)})}(s_1^m) \right)\\
	\leq & \sum_{m=1}^{M}\left( \sup_{\pi^{(1)\star}} V_1^{(\pi^{(1)\star},\pi^{(2)})} \left(s_{1}^{m}\right)-V_{1}^{(\pi^{(1)},\pi^{(2)})}\left(s_{1}^{m}\right)\right)\\
	=& \mc{R}^{\pi^{(2)}}(\pi^{(1)}, M) = \widetilde{O}(T^{\frac{\beta+2}{3}}),
	\end{aligned}
	\] 
	\wmedit{where the last step follows from the $\widetilde{O}(T^{\frac{\beta+2}{3}})$ dynamic regret bound of agent 1 as we discussed above. }	Rearranging the terms leads to the desired result. 
\end{proof}

\section{Proofs for Section~\ref{sec:application_inventory}}\label{app:proof_inventory}
\subsection{Proof of Lemma~\ref{lemma:inventory}}
First, we show that 
\begin{equation}\label{eqn:a7}
	\sum_{m=1}^{M}\left(V_{1}^{m,\star}\left(s_{1}^{m}\right)-V_{1}^{m,\star,\text{pseudo}}\left(s_{1}^{m}\right)\right) = -\sum_{m=1}^{M}\sum_{h=1}^H p\cdot \ee[X_h^m].
\end{equation}
Recall that the pseudo-reward is constructed by uniformly shifting the reward function by an amount of $p\cdot \ee[X_h^m]$. Since the difference between the reward and the pseudo-reward does not depend on the action taken, for any realization of the demands $\{X_h^m\}_{m\in[M],h\in[H]}$, the optimal policies $\pi^\star$ and $\pi^{\star,\text{pseudo}}$ induce the same distribution over the action space, which in turn leads to the same distribution over state-action trajectories. We can hence conclude that \eqref{eqn:a7} holds. 

Similarly, one can show by induction that for any realization of the demands $\{X_h^m\}_{m\in[M],h\in[H]}$, Algorithm~\ref{alg:RQUCB} also induces the same distribution of action sequences on $\mc{M}$ and $\mc{M}^\text{pseudo}$. This leads us to
\begin{equation}\label{eqn:a8}
	\sum_{m=1}^{M}\left(V_{1}^{m,\pi}\left(s_{1}^{m}\right)-V_{1}^{m,\pi,\text{pseudo}}\left(s_{1}^{m}\right)\right) = -\sum_{m=1}^{M}\sum_{h=1}^H p\cdot \ee[X_h^m].
\end{equation}
Combining \eqref{eqn:a7} and \eqref{eqn:a8} yields the desired result.

\section{Simulations Setup}\label{app:simulations}
We compare RestartQ-UCB with three baseline algorithms: LSVI-UCB-Restart~\citep{zhou2020nonstationary}, Q-Learning UCB, and Epsilon-Greedy~\citep{watkins1989learning}. LSVI-UCB-Restart is a state-of-the-art non-stationary RL algorithm that combines optimistic least-squares value iteration with periodic restarts. It is originally designed for non-stationary RL in linear MDPs, but in our simulations we reduce it to the tabular case by setting the feature map to be essentially an identity mapping, i.e., the feature dimension is set to be $d = S \times A$. Q-Learning UCB is simply our RestartQ-UCB algorithm with no restart. It is a Q-learning based algorithm that uses upper confidence bounds to guide the exploration. Epsilon-Greedy is also a Q-learning based algorithm with restarts. Compared with RestartQ-UCB, Epsilon-Greedy does not employ a UCB-based bonus term to explicitly force exploration. Instead, it takes the greedy action according to the estimated $Q$ function with a high probability $1-\epsilon$, and explores an action from the action set uniformly at random with probability $\epsilon$. 

We evaluate the cumulative rewards of the four algorithms on a variant of a reinforcement learning task named Bidirectional Diabolical Combination Lock~\citep{agarwal2020pc,misra2020kinematic}. This task is designed to be particularly difficult for \emph{exploration}. At the beginning of each episode, the agent starts at a fixed state. According to its first action, the agent transitions to one of the two paths, or ``combination locks'', each of length $H$. Each path is a chain of $H$ states, where the state at the endpoint of each path gives a high reward. At each step on the path, there is only one ``correct'' action that leads the agent to the next state on the path, while the other $A-1$ actions lead it to a sinking state that yields a small per-step reward of $\frac{1}{8H}$ ever since. Since we are considering a non-deterministic MDP, each intended transition ``succeeds'' with probability $0.98$; that is, even if the agent takes the correct action at a certain step, there is still a $0.02$ probability that it will end in the sinking state. The agent obtains a $0$ reward when taking a correct action, and gets a $\frac{1}{8H}$ reward at the step when it transitions to the sinking state. Finally, the endpoint state of one path gives a reward of $1$, while the other endpoint only gives a reward of $0.25$. As argued in~\citet{agarwal2020pc}, the following properties make this task especially challenging: First, it has sparse high rewards, and uniform exploration only has a $A^{-H}$ probability of reaching a high reward endpoint. Second, it has dense low rewards, and a locally optimal policy will lead to the sinking state quickly. Third, there is no indication which path has the globally optimal reward, and the agent must remember to still visit the other one. Interested readers can refer to Section 5.1 of~\citet{agarwal2020pc} for detailed descriptions of the task. 

We introduce two types of non-stationarity to the Bidirectional Diabolical Combination Lock task, namely \emph{abrupt} variations and \emph{gradual} variations. For abrupt variations, we periodically switch the two high-reward endpoints: One high-reward endpoint gives a reward of $1$ at the beginning, and abruptly changes to a reward of $0.25$ after a certain number of episodes, and then switches back to the reward of $1$ after the same number of episodes. The other high-reward endpoint goes the other way around. For gradual changes, we gradually vary the transition probability at the starting state: At the first episode, one action leads to the first path with $0.98$ probability, and to the second path with $0.02$ probability. We linearly decrease its probability of leading to the first path, and increase its probability to the second path. As a result, at the last episode, this action would lead to the first path with $0.02$ probability, and to the second path with $0.98$ probability instead. The same is true for the other actions. 

For simplicity, we use Hoeffding-based bonus terms in the simulations for RestartQ-UCB. We set $M = 5000, H=5, S =  10$, and $A = 2$. For abrupt variations, we switch the two high-reward endpoints after every $1000$ episodes. The hyper-parameters for each algorithm are optimized individually. For RestartQ-UCB, LSVI-UCB-Restart, and Epsilon-Greedy, we restart the algorithms after every $1000$ episodes both for abrupt variations and gradual variations. This is the same frequency as the abrupt variation of the environment (because the restart frequency is optimized as a hyper-parameter), although it turns out that other restart frequencies lead to very similar results. For Epsilon-Greedy, we set the exploration probability to be $\epsilon = 0.05$. All results are averaged over $30$ runs on a laptop with an Intel Core i5-9300H CPU and 16 GB memory.

\end{APPENDIX}
\end{document}